\documentclass[preprint,10pt]{elsarticle}
\usepackage{subcaption}
\usepackage{amsmath,amsfonts}
\usepackage{amssymb}
\usepackage{algorithm}
\usepackage{algorithmic}
\usepackage{array}
\usepackage{textcomp}
\usepackage{stfloats}
\usepackage{url}
\usepackage{verbatim}
\usepackage{graphicx}
\usepackage{float}
\usepackage{bm}
\usepackage{amsthm}
\usepackage{booktabs}
\usepackage{tikz}
\usetikzlibrary{arrows,positioning}
\usetikzlibrary{calc}
\usetikzlibrary{fit,backgrounds}
\usepackage{setspace}
\usepackage{siunitx}
\theoremstyle{remark}

\theoremstyle{}
\newtheorem{theorem}{\textbf{Theorem}}
\newtheorem{lemma}{\textbf{Lemma}}

\newtheorem{proposition}{\textbf{Proposition}}
\usepackage{color}

\newcommand{\tr}{{\rm tr}}
\newcommand{\diag}{{\rm diag}}

\definecolor{red}{rgb}{1,0,0}
\definecolor{green}{rgb}{0,1,0}
\definecolor{blue}{rgb}{0,0,1}
\definecolor{newcolor}{rgb}{0.5,0,1}

\usepackage{epstopdf}
\allowdisplaybreaks[4]

\AtBeginDocument{
    \setlength{\abovedisplayskip}{9pt} 
    \setlength{\belowdisplayskip}{9pt} 
}

\tikzset{
    boxE/.style={draw,rounded corners=2pt,fill=cyan!25,minimum width=1.4cm,minimum height=0.9cm},
    boxC/.style={draw,rounded corners=2pt,fill=orange!25,minimum width=1.4cm,minimum height=0.9cm},
    plus/.style={circle,draw,inner sep=2pt,minimum size=6pt},
    arrow/.style={->,thick}
}

\begin{document}

\begin{frontmatter}
\title{Lightweight Adaptive ReduNet via Hyperspherical Manifold Learning}

\author{Zhenglin Huang}
\ead{zlhuang@my.swjtu.edu.cn}
\author{Qifa Yan\corref{cor1}}
\ead{qifayan@swjtu.edu.cn}
\author{Bin Dai}
\ead{daibin@swjtu.edu.cn}
\author{Xiaohu Tang}
\ead{xhutang@swjtu.edu.cn}

\cortext[cor1]{Corresponding author}

\address{
School of Information Science and Technology,
Southwest Jiaotong University,
Chengdu 610031, China\\
and Information Coding and Transmission Key Laboratory of Sichuan Province,
CSNMT Int. Coop. Res. Centre (MoST),
Southwest Jiaotong University,
Chengdu 611756, China
}





\begin{abstract}
In recent years, a white-box neural network called ReduNet has been proposed, which employs the maximal coding rate reduction (MCR$^2$) principle to transform raw data into low-dimensional discriminative features via a forward layer-wise construction process. Unlike traditional deep networks that rely on backpropagation, ReduNet explicitly derives the parameters of each layer from the features of its preceding layer, offering a mathematically interpretable paradigm. However, this layer-wise construction often requires a large number of layers for the MCR$^2$ objective to reach a stable value, which increases the parameter storage of the unfolded module. To address this issue, we propose LA-ReduNet, a lightweight adaptive architecture that refines the layer-wise update rule and enables discriminative feature representations to be obtained with substantially fewer unfolded layers. Specifically, LA-ReduNet employs hyperspherical manifold learning and adaptive step sizes, thereby reducing by an order of magnitude the number of layers required for the MCR$^2$ objective to reach a stable value. Simulation results demonstrate that, while maintaining comparable classification accuracy, LA-ReduNet requires significantly fewer layers for the MCR$^2$ objective to reach a stable value. Remarkably, under the considered experimental settings, LA-ReduNet requires only approximately $1/29$ of the parameter storage of the unfolded ReduNet module for the MCR$^2$ objective to reach a stable value.
\end{abstract}


\begin{keyword}
Maximal Coding Rate Reduction (MCR$^2$), white-box neural network, manifold learning, feature extraction
\end{keyword}
\end{frontmatter}

\section{Introduction}
The rapid advancement of artificial intelligence (AI) promotes research on applying AI to other fields \cite{hassija2024interpreting}. A major milestone in this development came in 2016, when the AI program AlphaGo \cite{silver2016mastering, sharma2021role} defeated Lee Sedol, the world champion of Go. Currently, deep learning-based algorithms are applied to a wide range of fields, such as autonomous driving \cite{zhao2025survey}, disease detection \cite{rabie2025review}, image classification \cite{ayyar2021review}, face recognition \cite{amirgaliyev2025review} and semantic communication \cite{getu2025semantic}. However, in certain scenarios, such as autonomous driving, higher interpretability is required for the model. Therefore, research on Explainable Artificial Intelligence (XAI) has attracted widespread attention and developed rapidly \cite{hassija2024interpreting}. Although research on black-box neural networks has made remarkable progress \cite{ribeiro2016should, selvaraju2017grad, ghorbani2019towards}, their interpretability remains limited.

In this context, Chan et al. proposed a novel white-box neural network named ReduNet \cite{chan2022redunet}, providing a perspective distinct from conventional approaches. In contrast to conventional methods that directly interpret the internal mechanisms of black-box neural networks, ReduNet derives the network architecture from the optimization objective, thereby reducing reliance on manual empirical design of the network architecture. More specifically, the network architecture of ReduNet is derived from the principle of Maximal Coding Rate Reduction (MCR$^2$). Different from the traditional cross-entropy (CE) loss function, the principle of MCR$^2$ revises the learning objective to explicitly capture the low-dimensional structures underlying high-dimensional data, rather than primarily focusing on label fitting \cite{yu2020learning}. The MCR$^2$ principle utilizes the rate distortion to measure the compactness of representations. The MCR$^2$ principle aims to maximize the overall coding rate of features while minimizing the within-class coding rate, where the coding rate is computed via multivariate Gaussian rate-distortion (RD) function.

Existing studies on ReduNet can be broadly divided into two categories. The first category focuses on improving ReduNet, such as ESS-ReduNet \cite{yu2024ess}, Multi-ReduNet \cite{limulti} and AR-ReduNet \cite{huang2025simple}. The second direction is to apply ReduNet to other domains, such as SAR target recognition \cite{zhao2024interpretable}, radar jamming recognition \cite{zhu2024updateable} and chromosome classification \cite{zhang2025msd}. Despite these advances, the layer-wise optimization in ReduNet still relies on a Euclidean gradient update followed by normalization. As a result, a fixed step size in the Euclidean space does not directly control the actual angular displacement of the features on the sphere, which may result in small or highly variable angular updates across samples. Motivated by this observation, we propose a lightweight ReduNet with adaptive step sizes based on hyperspherical manifold learning, termed lightweight adaptive ReduNet (LA-ReduNet). Compared with the gradient-ascent-based ReduNet, LA-ReduNet substantially reduces the required number of unfolded layers and, correspondingly, the parameter storage of the unfolded module. Simulation results show that LA-ReduNet not only achieves a more lightweight network architecture but also outperforms gradient-ascent-based ReduNet in classification accuracy.

In particular, LA-ReduNet adopts the same adaptive multivariate Gaussian RD approximation function as AR-ReduNet. AR-ReduNet improves ReduNet by introducing this adaptive approximation function, which provides a more accurate approximation of the coding rate and improves classification performance \cite{huang2025simple}. Rather than further modifying the coding-rate approximation, LA-ReduNet focuses on redesigning the layer-wise update rule of ReduNet under the unit-sphere constraint. Accordingly, AR-ReduNet is included as a baseline in the subsequent comparisons. It should be noted that Riemannian optimization on spheres is well established \cite{absil2008optimization, boumal2023intromanifolds}. Therefore, the novelty of LA-ReduNet does not lie in the use of tangent-space or geodesic updates themselves. Instead, we develop a normalized, truncated, and sample-adaptive Riemannian update tailored to the forward layer-wise MCR$^2$ construction, where each update is directly unfolded into a network layer. The main contributions of this paper are summarized as follows:
\begin{enumerate}
\item LA-ReduNet introduces a redesigned layer-wise update rule for ReduNet under the unit-sphere constraint. Specifically, we observe that the Euclidean update followed by normalization in ReduNet does not directly control the actual angular displacement of each feature on the unit sphere. Accordingly, a truncated and normalized Riemannian update scheme is constructed, in which the feasible update direction is explicitly determined in the tangent space, while the angular step size is adaptively adjusted for each sample according to the cosine similarity between its Euclidean gradient and radial direction. In addition, samples with sufficiently small Riemannian update direction norms are excluded from further updates through a thresholding mechanism. The resulting update scheme is naturally compatible with the layer-wise unfolding of ReduNet and can therefore be directly implemented as a sequence of network layers.

\item Theoretical properties of the proposed Riemannian update scheme are analyzed. In particular, the Lipschitz continuity of the Riemannian update mapping is proved, and a lower-bound inequality for the increment of the MCR$^2$ objective is derived on the product of unit spheres. Building on these results, the finite-termination property of the proposed algorithm is further established under the specified threshold and step-size conditions.

\item Simulation results on the CIFAR-10, CIFAR-100, and CINIC-10 datasets demonstrate that LA-ReduNet requires substantially fewer unfolded layers for both the MCR$^2$ objective and classification accuracy to stabilize. In particular, LA-ReduNet achieves classification-accuracy convergence within approximately 5--10 layers. Moreover, in our experiments, with ReduNet using its original step-size setting and LA-ReduNet adopting a relatively small base step size, LA-ReduNet requires substantially fewer layers for the MCR$^2$ objective to reach a stable value. Specifically, LA-ReduNet achieves such objective convergence in approximately $35$ layers, compared with approximately $1000$ layers for ReduNet under the considered settings, corresponding to approximately $1/29$ of the parameter storage required by ReduNet.

\end{enumerate}

The remainder of this paper is organized as follows: Section 2 reviews the necessary preliminaries of the MCR$^2$ principle, ReduNet, and AR-ReduNet. Section 3 introduces LA-ReduNet and establishes its finite-termination property. Section 4 presents simulation results on multiple datasets, and Section 5 concludes this paper.

\section{Preliminaries}\label{sec:preliminaries}
This section provides a brief introduction to MCR$^2$ \cite{yu2020learning}, ReduNet based on MCR$^2$ principle \cite{chan2022redunet}, and its improved version AR-ReduNet \cite{huang2025simple}.

\subsection{MCR$^2$ Principle and ReduNet Framework}
Consider a sample set $ \bm{X}=[\bm x_{1}, \bm x_{2}, \ldots, \bm x_{m}] \in \mathbb{R}^{n \times m} $, $ \bm{x} \in \mathbb{R}^{n}$ is a sample point. Let $\bm{z}_{i}$ be a transformation of feature $ \bm{x}_{i} $. For the set $ \bm{X} $, $ \bm Z=[\bm z_{1}, \bm z_{2}, ..., \bm z_{m}] \in \mathbb{R}^{n \times m} $ is the feature matrix. According to the MCR$^2$ principle \cite{yu2020learning}, $\bm Z$ is updated by optimizing the following problem:
\begin{align}
&\underset{\bm{Z}}{\rm{maximize}} \quad \Delta R(\bm{Z},  \epsilon, \bm{\Pi})=R(\bm{Z}, \epsilon)- \sum_{j=1}^k R^{\rm c}(\bm{Z}, \epsilon | \bm{\Pi}_j), \label{MCR2}\\
&{\rm{s. t.}}\quad \bm z_1,\ldots,\bm z_m\in\mathbb{S}^{n-1},\notag
\end{align}
where $\mathbb{S}^{n-1}$ denotes the unit sphere in $n$-dimensional space. The notations in \eqref{MCR2} are explained as follows:
    \begin{enumerate}
    \item The memberships of the samples are depicted by a set of $k$ diagonal matrices $\bm{\Pi}=\{\bm{\Pi}_j\}_{j=1}^k$, where $\bm{\Pi}_j$  is the membership matrix of class $j$ defined by\footnote{By definition, the diagonal matrices $\bm{\Pi}=\{\bm{\Pi}_j\}_{j=1}^k$ lie in a simplex $\{\bm{\Pi}:\pi_{ij}\geq 0, \sum_{j=1}^k\bm{\Pi}_j=\bm{I}\}$.}
    \begin{align}
    \bm{\Pi}_j=\diag(\pi_{1,j}, \pi_{2,j}, \ldots, \pi_{m,j})\in \mathbb{R}^{m\times m},
    \end{align}
    with $\pi_{i,j}$ being the label of the $i$-th sample, i.e.,
    \begin{align}
    \pi_{i,j}=\Bigg\{\begin{array}{ll}
    1,&\mbox{if $\bm x_i$ is in class $j$}\\
    0,&\mbox{else}  \label{Pi:training}
    \end{array}.
    \end{align}
    \item The function $R(\bm{Z}, \epsilon)$ is the minimal number of binary bits needed to encode $ \bm{Z} $ such that the expected decoding error is less than $ \epsilon^2 $, and $R^{\rm c}(\bm{Z}, \epsilon|\bm{\Pi}_j)$ is the sum of the minimal number of binary bits for each class. In particular, consider a vector source $ \bm z \in \mathbb{R}^{n} $ from a zero-mean multivariate Gaussian distribution $\bm z \sim \mathcal{N}(0, \bm{\Sigma})$, an approximation of the number of binary bits needed to encode $ \bm Z $ is given by \cite{ma2007segmentation}
    \begin{align}
    R(D) \triangleq \frac{1}{2}\log \det \Big(\bm{I}+\frac{n}{D} \bm{\Sigma} \Big),\label{RD_function}
    \end{align}

    In the objective function \eqref{MCR2}, $R(\bm{Z},\epsilon)$ is the minimum coding length of the whole set of features at distortion $D=\epsilon^2$, approximated by
    \begin{align}
    R(\bm{Z},\epsilon) \triangleq \frac{1}{2}\log \det \Big(\bm{I}+\frac{n}{m\epsilon^2} \bm{ZZ}^{\rm T} \Big),\label{Re}
    \end{align}
    where $ \bm \Sigma $ is replaced by its estimate $ \bm{ZZ}^{\rm T} / m $. While the minimum coding length of the $j$-th class is approximated by
    \begin{align}
    R^{\rm c}(\bm{Z}, \epsilon|\bm{\Pi}_j) \triangleq \frac{\tr(\bm{\Pi}_j)}{2m}\log \det\left(\bm{I}+\frac{n}{\tr(\bm{\Pi}_{j})\epsilon^{2}}\bm{Z}\bm{\Pi}_{j}\bm{Z}^{\rm T}\right).\label{Rc}
    \end{align}

    That is, the approximation of the covariance matrix of the $j$-th class is $ \bm{Z}\bm{\Pi}_{j}\bm{Z}^{\rm T} / \tr(\bm{\Pi}_j) $, and the weight of the $j$-th class is $\tr(\bm{\Pi}_j)/m$.
\end{enumerate}

\begin{figure}[t]
\resizebox{\linewidth}{!}{
\begin{tikzpicture}[
    line width=1.0pt,
    box/.style={
        draw,
        align=center,
        minimum height=28pt,
        inner xsep=10pt,
        inner ysep=6pt,
        rounded corners
    },
    greenbox/.style={
        box,
        fill=lime!30
    },
    bluebox/.style={
        box,
        fill=cyan!25
    },
    redbox/.style={
        box,
        fill=red!25
    },
    yellowcirc/.style={
        circle,
        draw,
        fill=yellow!35,
        inner sep=4pt,
        line width=1.0pt
    }
]


\node[greenbox] (zl)
{$\bm{Z}^{(\ell-1)}$};

\coordinate (split) at ($(zl.east)+(0.9,0)$);

\draw (zl.east) -- (split);

\coordinate (operatorleft) at ($(split)+(0.8,0)$);


\node[
    bluebox,
    anchor=west
] (E) at ($(operatorleft)+(0,3.6)$)
{$\bm{E}^{(\ell)}$};

\node[
    bluebox,
    anchor=west
] (CPi1) at ($(operatorleft)+(0,2.1)$)
{$\bm{C}_{1}^{(\ell)},\,
 \bm{\Pi}_{1}^{(\ell-1)}$};

\node[
    bluebox,
    anchor=west
] (CPi2) at ($(operatorleft)+(0,0.7)$)
{$\bm{C}_{2}^{(\ell)},\,
 \bm{\Pi}_{2}^{(\ell-1)}$};

\node[
    anchor=center
] (CPidots) at ($(CPi2.center)+(0,-0.9)$)
{$\vdots$};

\node[
    bluebox,
    anchor=west
] (CPik) at ($(operatorleft)+(0,-2.1)$)
{$\bm{C}_{k}^{(\ell)},\,
 \bm{\Pi}_{k}^{(\ell-1)}$};

\node[
    bluebox,
    anchor=west
] (I) at ($(operatorleft)+(0,-3.6)$)
{$\bm{I}$};


\draw[->] (split) |- (E.west);

\draw[->] (split) |- (CPi1.west);
\draw[->] (split) |- (CPi2.west);
\draw[->] (split) |- (CPik.west);

\draw[->] (split) |- (I.west);


\coordinate (mergeline) at ($(CPi1.east)+(1.0,0)$);

\coordinate (merge1) at (mergeline |- CPi1.east);
\coordinate (merge2) at (mergeline |- CPi2.east);
\coordinate (mergek) at (mergeline |- CPik.east);

\draw (CPi1.east) -- (merge1);
\draw (CPi2.east) -- (merge2);
\draw (CPik.east) -- (mergek);

\draw (merge1) -- (mergek);


\node[
    yellowcirc,
    right=1.2cm of merge2
] (minus)
{$\bm{-}$};

\node[
    yellowcirc,
    right=1.2cm of minus
] (plus)
{$\bm{+}$};

\draw[->]
    (merge2)
    --
    node[midway,above] {$\eta$}
    (minus.west);

\draw[->]
    (minus.east)
    --
    (plus.west);


\draw[->]
    (E.east)
    --
    node[midway,above] {$\eta$}
    (plus |- E.east)
    --
    (plus.north);


\draw[->]
    (I.east)
    --
    (plus |- I.east)
    --
    (plus.south);


\node[
    redbox,
    right=1.3cm of plus
] (P)
{$\mathcal{P}_{\mathbb{S}^{n-1}}$};

\draw[->]
    (plus.east)
    --
    (P.west);

\node[
    greenbox,
    right=1.3cm of P
] (znext)
{$\bm{Z}^{(\ell)}$};

\draw[->]
    (P.east)
    --
    (znext.west);


\node[
    above=4pt of E
]
{Expansion Operator};

\node[
    above=7pt of CPik
]
{Compression Operators};

\node[
    below=4pt of I
]
{Identity Matrix};

\end{tikzpicture}
}
\caption{Single-Layer Structure of ReduNet.}
\label{fig:1}
\end{figure}

Based on the MCR$^2$ principle, the architecture of ReduNet can be directly derived from objective function \eqref{MCR2}. As illustrated in Fig. \ref{fig:1}, the feature matrix $ \bm{Z} $ is updated via gradient ascent. That is,
\begin{align}
\bm{Z}^{(\ell)}\propto \bm{Z}^{(\ell-1)}+\eta \bm{E}^{(\ell)} \bm{Z}^{(\ell-1)}-\eta\Big(\sum_{j=1}^k\bm{C}_{j}^{(\ell)}\bm{Z}^{(\ell-1)}\bm{\Pi}_j^{(\ell-1)}\Big),\notag\\
\quad {\rm{s.t.}}\quad \bm  z_1^{(\ell)},\ldots, \bm z_m^{(\ell)}\in\mathbb{S}^{n-1},
\end{align}
where $\eta$ is the step size, the matrices $\bm{E}^{(\ell)}$ and $\bm{C}_{j}^{(\ell)}$ are the values of $\bm{E}$ and $\bm{C}_j$ at the $\ell$-th iteration, and $\bm{E}$ and $ \bm{C}_{j} $ are defined by
\begin{align}
\bm{E}&= \frac{n}{m \epsilon^{2}} \left( \bm{I} + \frac{n}{m \epsilon^{2}} \bm{Z} \bm{Z}^{\rm T}\right)^{-1}, \label{cal:E} \\
\bm{C}_{j}&= \frac{n}{m \epsilon^{2}} \left( \bm{I} + \frac{n}{\tr(\bm{\Pi}_{j}) \epsilon^{2}} \bm{Z}\bm{\Pi}_{j}\bm{Z}^{\rm T} \right)^{-1}.  \label{cal:Cj}
\end{align}

In particular, the network is initialized by projecting the samples to $\mathbb{S}^{n-1}$, i.e.,
\begin{align}
\bm{Z}^{(0)}=[\bm z_1^{(0)},\ldots,\bm z_{m}^{(0)}], \quad {\rm s.t.}~\bm z_i^{(0)}=\frac{\bm x_{i}}{||\bm x_{i}||_2}\in\mathbb{S}^{n-1}, \quad i=1, \ldots,m.  \label{ip:Z}
\end{align}
Notice that the membership matrices $\bm{\Pi}_j=\diag(\pi_{1,j}, \pi_{2,j}, \ldots,\pi_{m,j})$ are known during training. However, during the testing phase, $\bm{\Pi}_j$ cannot be obtained from the labels of the test samples. Therefore, the matrix $\bm{\Pi}_j$
is predicted from the data \cite{chan2022redunet}. That is, $\bm{\Pi}_{j}=\diag(\hat{\pi}_{1,j}, \hat{\pi}_{2,j}, \ldots,\hat{\pi}_{m,j})$, where $\hat\pi_{i,j}$ is the estimation of $ \pi_{i,j} $. The estimation of the probability that the $i$-th sample belongs to the $j$-th class $\hat\pi_{i,j}$ is obtained from the softmax function,
\begin{align}
\hat{\pi}_{i,j}=\frac{\exp(-\lambda||\bm{C}_j\bm z_i||_2)}{\sum_{j=1}^k\exp(-\lambda||\bm{C}_j\bm z_i||_2)},
\end{align}
where $ \lambda $ is a hyperparameter that controls the uniformity.

\subsection{AR-ReduNet Based on Improved Approximation of Rate-Distortion Function}  \label{sec:alpha_RD}

Although the analytical expression of the multivariate Gaussian RD function can be derived via reverse water-filling, it remains computationally intractable for high-dimensional data. Therefore, ReduNet adopts approximation \eqref{RD_function} to accurately approximate the multivariate Gaussian RD function. As mentioned above, when the distortion $ D $ is large, the approximation error of \eqref{RD_function} becomes significant. To this end, AR-ReduNet employs a multivariate Gaussian RD approximation with an adaptive regularization term to estimate the inter-class and intra-class coding rates. This improved approximation function enhances the approximation by adding a regularization parameter $\alpha$ to \eqref{RD_function}, i.e.,
    \begin{align}
    R_{\alpha}(D)=\frac{1}{2}\log \det \left(\alpha \bm{I}+\frac{n}{D}\bm{\Sigma}\right), \label{eq:RD_alpha}
    \end{align}
where the parameter $ \alpha \in [0, 1]$ is chosen such that $R_{\alpha}(D)$ achieves 0 with the exact RD function at the same point $D=\tr(\bm{\Sigma})$, i.e., $\alpha=\alpha^*$, where $\alpha^*$ is the unique value satisfying $R_{\alpha^*}(\tr(\bm{\Sigma}))=0$ \cite{huang2025simple}.

Compared to ReduNet, AR-ReduNet computes the MCR$^2$ principle by using the approximation function \eqref{eq:RD_alpha}, which approaches the exact RD function more closely than function \eqref{RD_function}. The objective function of AR-ReduNet is defined as
    \begin{align}
    \Delta R(\bm{Z},  \epsilon,\bm{\Pi})=&\frac{1}{2}\log \det \left(\alpha \bm{I}+\frac{n}{m\epsilon^{2}}\bm{ZZ}^{\rm T}\right)   \notag \\
    &-\sum_{j=1}^k \frac{\tr(\bm{\Pi}_j)}{2m}\log \det\left(\alpha_{j} \bm{I}+\frac{n}{\tr(\bm{\Pi}_{j})\epsilon^{2}}\bm{Z}\bm{\Pi}_{j}\bm{Z}^{\rm T}\right),     \label{MCR2_alpha}
    \end{align}
the parameters $ \alpha $ and $ \alpha_j $ are updated according to
    \begin{subequations}
    \begin{align}
    \log \det \left(\alpha \bm{I}+\frac{n}{\tr(\bm{ZZ}^{\rm T})}\bm{ZZ}^{\rm T}\right)&=0, \label{compute:alpha} \\
    \log \det\left(\alpha_{j} \bm{I}+\frac{n}{\tr(\bm{Z}\bm{\Pi}_{j}\bm{Z}^{\rm T})}\bm{Z}\bm{\Pi}_{j}\bm{Z}^{\rm T}\right)&=0, \label{compute:alphaj}
    \end{align}
    \end{subequations}
where $ \alpha $ and $ \alpha_j $ are determined by $\bm{ZZ}^{\rm T}/\tr(\bm{ZZ}^{\rm T})$ and $\bm{Z}\bm{\Pi}_{j}\bm{Z}^{\rm T}/\tr(\bm{Z}\bm{\Pi}_{j}\bm{Z}^{\rm T})$, respectively. Since these matrices are updated at each iteration, $ \alpha $ and $ \alpha_j $ are re-evaluated accordingly. In addition, when updating the parameters $\bm{E}$ and $\bm{C}_j$ of AR-ReduNet, $ \alpha $ and $ \alpha_j $ are considered as constants.

Since LA-ReduNet also adopts the multivariate Gaussian RD approximation function \eqref{eq:RD_alpha} to compute the coding rate, we follow AR-ReduNet and employ binary search to solve for the parameters $ \alpha $ and $ \alpha_j $. As shown in Algorithm \ref{alg:bisection}, given an error threshold $ \delta $, the unique values of $ \alpha $ and $ \alpha_j $ satisfying the condition can be efficiently obtained.

\begin{algorithm}[t]
\setstretch{1.15}
\normalsize
\caption{BinarySearch $( \bm{\Sigma} , \delta )$}\label{alg:bisection}
\begin{algorithmic}[1]\label{alg}
\STATE $\alpha_{\rm L}\leftarrow 0, \alpha_{\rm R}\leftarrow 1$;
\STATE $ \alpha^{*}\leftarrow\frac{\alpha_{\rm L}+\alpha_{\rm R}}{2}$;
\WHILE {($ |R_{\alpha}(\tr(\bm{\Sigma})) | > \delta $)}
\IF{$ R_{\alpha}(\tr(\bm{\Sigma})) < 0 $}
\STATE $ \alpha_{\rm L} \leftarrow \alpha^{*} $;
\ELSE
\STATE $ \alpha_{\rm R} \leftarrow \alpha^{*} $;
\ENDIF
\STATE $\alpha^{*} \leftarrow \frac{\alpha_{\rm L} + \alpha_{\rm R}}{2}$;
\ENDWHILE
\RETURN $\alpha^{*} $;
\end{algorithmic}
\end{algorithm}

\section{Lightweight Adaptive ReduNet (LA-ReduNet)}\label{sec:method}
\subsection{MCR$^2$ Optimization Objective Based on Hyperspherical Manifold Learning}
By reviewing optimization objective \eqref{MCR2}, we consider an optimization problem on the unit sphere. In ReduNet or AR-ReduNet, projected gradient ascent is employed, with projection onto the unit-sphere guaranteeing that $\|\bm{z}\|_2=1$. Although the normalization operation enforces the unit-sphere constraint, the angular displacement induced by a fixed Euclidean update is controlled only indirectly and depends on both the radial and tangential components of the update. Therefore, we construct the update directly in the tangent space and evolve the feature $\bm{z}$ along the corresponding geodesic on the unit sphere.

Consider the feature vector $\bm{z}^{(\ell)}$ at the $\ell$-th layer. According to the optimization objective in \eqref{MCR2_alpha}, the implicit parameters $\alpha$ and $\{\alpha_j\}_{j=1}^{k}$ are first computed from the current features and then kept fixed when forming the update direction with respect to $\bm{z}^{(\ell)}$. Under this frozen-parameter setting, the Euclidean update direction can be decomposed into a radial component parallel to $\bm{z}^{(\ell)}$ and a tangential component lying in the tangent space of the unit sphere at $\bm{z}^{(\ell)}$, i.e.,
\begin{align}
\frac{\partial \Delta R}{\partial \bm{z}}\Big|_{\bm{z}=\bm{z}^{(\ell)}} = \bm{g} = \bm{g}_{\rm{T}}+ \bm{g}_{\rm{R}} \in \mathbb{R}^{n},
\end{align}
where $\bm{g}_{\rm T}$ and $\bm{g}_{\rm R}$ denote the tangential and radial components of the Euclidean update direction, respectively. $ \bm{g}_{\rm{T}} $ can be expressed as
    \begin{align}
    \bm{g}_{\rm{T}} \triangleq \bm{g}-(\bm{g}^{\rm T}\bm{z}^{(\ell)})\bm{z}^{(\ell)} \in \mathbb{R}^{n}.  \label{gt_formula}
    \end{align}

\begin{figure}
\centering
\begin{tikzpicture}[
    >=stealth,
    scale=3
]

\draw[->, thick] (0,0) -- (1.3,0) node[right] {$x$};
\draw[->, thick] (0,0) -- (0,1.3) node[above] {$y$};

\draw[ultra thick, black!70] (1,0) arc (0:90:1);
\node[below, font=\small] at (1,0) {$\mathbb{S}_{n-1}$};

\draw[->, ultra thick, purple] (0,0) -- (20:1)
    node[midway, below, purple] {$\bm{z}^{(\ell)}$};
\draw[->, ultra thick, purple] (0,0) -- (50:1)
    node[midway, above=8pt, purple] {$\bm{z}^{(\ell+1)}$};

\draw[thick, purple] (20:0.22)
    arc[start angle=20, end angle=50, radius=0.22];
\node[purple, font=\scriptsize] at (35:0.30) {$\theta$};

\coordinate (P) at (20:1);
\coordinate (gr) at ($(P)+(20:0.4)$);
\coordinate (gt) at ($(P)+(110:0.4)$);
\coordinate (g)  at ($(P)+(20:0.4)+(110:0.4)$);

\draw[->, ultra thick, blue] (P) -- (g)  node[above right] {$\bm{g}$};
\draw[->, ultra thick, blue] (P) -- (gr) node[below right] {$\bm{g}_{\rm{R}}$};
\draw[->, ultra thick, blue] (P) -- (gt) node[above] {$\bm{g}_{\rm{T}}$};

\draw[thick, dashed, blue] (P) -- (gr) -- (g) -- (gt);

\end{tikzpicture}
\caption{Orthogonal decomposition of the iterative update on the sphere $\mathbb{S}_{n-1}$.}
\label{fig:2}
\end{figure}

Next, by rotating through a small angle $ \theta $ at point $ \bm{z}^{(\ell)} $, the update of $ \bm{z}^{(\ell+1)} $ is given by
    \begin{align}
    \bm{z}^{(\ell+1)} = \cos\theta \cdot \bm{z}^{(\ell)} + \sin\theta \cdot \frac{\bm{g}_{\rm{T}}}{||\bm{g}_{\rm{T}}||_2},  \label{unit_update}
    \end{align}
for all $ \bm{g}_{\rm{T}} $ with $ ||\bm{g}_{\rm{T}}|| \neq 0 $. As illustrated in Fig. \ref{fig:2}, when optimizing on the sphere, only the tangential component of the gradient is useful for updating the direction, while the radial component is ineffective and will be discarded. In addition, to avoid repeated evaluations of the $\sin(\cdot)$ and $\cos(\cdot)$ functions \cite{meng2019spherical}, we reparameterize the angle $ \theta $ using a parameter $t_0$ by defining $ \tan(\theta / 2) = t_0 $. Thus, the update formula \eqref{unit_update} can be rewritten as
    \begin{align}
    \bm{z}^{(\ell+1)}=\Bigg\{\begin{array}{ll}
    \frac{1 - t^{2}}{1 + t^{2}} \cdot \bm{z}^{(\ell)} + \frac{2t}{1 + t^{2}} \cdot \bm{g}_{\rm{T}}^{*},&\mbox{$||\bm{g}_{\rm{T}}||_2 > \tau$}\\
    \bm{z}^{(\ell)},&\mbox{$||\bm{g}_{\rm{T}}||_2 \leq \tau$}
    \end{array}.   \label{update:sphere}
    \end{align}

\begin{figure}[t]
\centering

\begin{subfigure}[b]{0.45\textwidth}
\centering
\begin{tikzpicture}[>=stealth, scale=2]

\draw[->, thick] (-1.3,0) -- (1.3,0) node[right] {$x$};
\draw[->, thick] (0,-1.3) -- (0,1.3) node[above] {$y$};

\draw[ultra thick, black!70] (0,0) circle (1);
\node[below left] at (-1,0) {$\mathbb{S}^1$};

\coordinate (z) at (10:1);
\draw[->, ultra thick, purple] (0,0) -- (z) node[midway, above, yshift=2pt] {$\bm{z}^{(\ell)}$};


\coordinate (grend) at ($(z)+(0.6,0)$);
\coordinate (gtend) at ($(z)+(0,0.127)$);
\coordinate (gend) at ($(z)+(0.6,0.127)$);

\draw[->, ultra thick, blue] (z) -- (gend) node[above right] {$\bm{g}$};
\draw[->, ultra thick, blue] (z) -- (grend) node[below] {$\bm{g}_{\rm{R}}$};
\draw[->, ultra thick, blue] (z) -- (gtend) node[above, xshift=2pt] {$\bm{g}_{\rm{T}}$};

\draw[thick, dashed, blue] (grend) -- (gend) -- (gtend);
\draw[thick, dashed, blue] (z) -- (gend);

\end{tikzpicture}
\caption*{(a)}
\end{subfigure}
\hfill
\begin{subfigure}[b]{0.45\textwidth}
\centering
\begin{tikzpicture}[>=stealth, scale=2]

\draw[->, thick] (-1.3,0) -- (1.3,0) node[right] {$x$};
\draw[->, thick] (0,-1.3) -- (0,1.3) node[above] {$y$};

\draw[ultra thick, black!70] (0,0) circle (1);
\node[below left] at (-1,0) {$\mathbb{S}^1$};

\coordinate (z) at (10:1);
\draw[->, ultra thick, purple] (0,0) -- (z) node[midway, above, yshift=2pt] {$\bm{z}^{(\ell)}$};

\draw[->, ultra thick, orange!80!yellow] (0,0) -- (45:1) node[midway, above, xshift=-2pt, yshift=2pt] {$\bm{z}^{(\ell+1)}$};

\coordinate (Aend) at ($(z)+(0,0.85)$);
\draw[->, ultra thick, green!70!black] (z) -- (Aend) node[above] {$ \frac{2t}{1 + t^{2}} \cdot \bm{g}_{\rm{T}}^{*} $};

\coordinate (grend) at ($(z)+(0.155,0)$);
\coordinate (gtend) at ($(z)+(0,0.580)$);
\coordinate (gend) at ($(z)+(0.155,0.580)$);

\draw[->, ultra thick, blue] (z) -- (gend) node[above right] {$\bm{g}$};
\draw[->, ultra thick, blue] (z) -- (grend) node[right, text=blue] {$\bm{g}_{\rm{R}}$};
\draw[->, ultra thick, blue] (z) -- (gtend) node[left, xshift=1pt] {$\bm{g}_{\rm{T}}$};

\draw[thick, dashed, blue] (grend) -- (gend) -- (gtend);
\draw[thick, dashed, blue] (z) -- (gend);

\end{tikzpicture}
\caption*{(b)}
\end{subfigure}

\caption{Example of gradient-based update on the unit sphere $ \mathbb{S}^1 $: (a) when $ ||\bm{g}_{\rm{T}}||_2 \leq \tau $, the update magnitude in the direction of $ \bm{g}_{\rm{T}} $ becomes extremely small, thus we terminate the update of $\bm{z}^{(\ell)}$; (b) when $ ||\bm{g}_{\rm{T}}||_2 > \tau $, the update of feature $\bm{z}^{(\ell)}$ is accelerated by $ t $, where the adaptive step-size parameter lies in $[t_0, t_0(1 + \beta)]$.}
\label{fig:2_2}
\end{figure}

where $ \bm{g}_{\rm{T}}^{*}= \bm{g}_{\rm{T}} / ||\bm{g}_{\rm{T}}||_2 $, and $ t $ represents the adaptive step-size parameter. Since only the tangential component contributes to a feasible update, the degree to which the gradient aligns with the tangent space reflects its effectiveness under the spherical constraint. Inspired by the cosine-similarity-based strategy in \cite{meng2019spherical}, the quantity $1 - |\bm g^{\rm T} \bm z^{(\ell)}| / \|\bm g\|_2$ is adopted to characterize the tangential effectiveness of the Euclidean gradient. It approaches zero when the gradient is nearly radial and increases as the gradient becomes more aligned with the tangent space. The absolute value makes the measure invariant to the sign of the radial component, since inward and outward radial components are both removed by tangent-space projection. The adaptive step-size parameter $ t $ can be expressed as
    \begin{align}
    t = t_0 \cdot \left(1 + \beta \cdot \left( 1 - \left|\frac{\bm{g}^{\rm T}\bm{z}^{(\ell)}}{||\bm{g}||_2}\right|\right)\right), \quad \beta \geq 0,   \label{Ad_t}
    \end{align}
where $ \beta $ is a hyperparameter that controls the maximum expansion factor of the step size. Notice that the norm of the Riemannian update direction is not used to directly determine the update magnitude. As illustrated in Fig. \ref{fig:2_2}, the actual update is jointly controlled by the threshold $ \tau $ and the adaptive parameter $t$. Specifically, when $ ||\bm{g}_{\rm{T}}||_2 \leq \tau $, the effective update gradient of $ \bm{z}^{(\ell)} $ is regarded as sufficiently small, and the corresponding sample is no longer updated. When $ ||\bm{g}_{\rm{T}}||_2 > \tau $, the sample remains active and is updated along the normalized Riemannian update direction, with the update magnitude controlled by the adaptive parameter $t$. This design corresponds to the subsequent theoretical analysis of finite-termination, in which the inequality characterizing the variation of the MCR$^2$ objective guarantees a uniform positive lower bound on the objective increment for each active sample under the specified step-size condition. The algorithm terminates when no sample remains active at a given iteration. To theoretically justify this stopping criterion, the Riemannian update mapping associated with the MCR$^2$ objective is analyzed next, and the finite-termination property of the proposed algorithm is subsequently established.

\subsection{Riemannian Update Analysis and Finite-Termination of MCR$^2$}
In this subsection, we establish a finite-termination guarantee for the proposed Riemannian update scheme. Specifically, we first derive the eigenvalue bounds for the matrices involved in the MCR$^2$ objective, and then establish the Lipschitz continuity of the implicit parameter mappings and the corresponding inverse-matrix mappings. Based on these auxiliary results, we establish the Lipschitz continuity of the Riemannian update mapping and subsequently derive an inequality characterizing the variation of the MCR$^2$ objective. This inequality provides the theoretical foundation for analyzing the proposed geodesic update scheme and proving that the proposed scheme satisfies the prescribed stopping criterion within a finite number of iterations under suitable threshold and step-size conditions. According to \eqref{gt_formula}, the Riemannian update mapping can be written as
\begin{align}
\bm G_{\rm T}(\bm Z)
=
\bigl[
\bm g_{\rm{T},1},
\bm g_{\rm{T},2},
\ldots,
\bm g_{\rm{T},m}
\bigr]
\in\mathbb R^{n\times m}.
\label{Gt_column_form}
\end{align}
This yields the proposition below.

\begin{proposition} \label{pro:1}
The Riemannian update mapping $\bm{G}_{\rm T}$ is Lipschitz continuous on $\mathcal M=(\mathbb{S}^{n-1})^m$ with Lipschitz constant
\begin{align}
L_{\rm grad}
=
(m+1)(k+1)L_{\rm E}
+
\frac{2n(n+1)}{\epsilon^2}
\left(
1+\frac{1}{\sqrt m}
\right),
\end{align}
where
\begin{align}
L_{\rm E}
=
\frac{n(n+1)}{m\epsilon^2}
\biggl[
1
+
(n+1)
\left(
\pi\sqrt{n}
+
\frac{2n}{\epsilon^2}
\right)
\biggr].
\end{align}
Moreover, $\bm G_{\rm T}$ satisfies
\begin{align}
\|\bm G_{\rm T}(\bm Z)\|_F
\leq
\frac{2n(n+1)}
{\sqrt m\,\epsilon^2},
\qquad
\forall\,\bm Z\in\mathcal M.
\end{align}
Define
\begin{align}
F(\bm Z)=\Delta R\left(\bm Z,\epsilon,\bm\Pi;\alpha(\bm Z),\{\alpha_j(\bm Z^j)\}_{j=1}^{k}\right),
\label{composite_objective}
\end{align}
where the implicit parameters $\alpha(\bm Z)$ and $\alpha_j(\bm Z^j)$ are recomputed according to the current feature matrix $\bm Z$. Then, there exists a constant $L_{\rm s}>0$ such that, for any $\bm Z\in\mathcal M$ and any tangent vector $\bm\xi\in T_{\bm Z}\mathcal M$,
\begin{align}
F\bigl(\operatorname{Geo}(\bm Z,\bm\xi)\bigr)-F(\bm Z)
\geq
\left\langle
\bm G_{\rm T}(\bm Z),\bm\xi
\right\rangle_F
-C_{\alpha}\|\bm\xi\|_F
-\frac{L_{\rm s}}{2}\|\bm\xi\|_F^2.
\label{Riemannian_smoothness}
\end{align}
Here, $\bm Z=[\bm z_1,\ldots,\bm z_m]$ and $\bm\xi=[\bm\xi_1,\ldots,\bm\xi_m]$. The constant $C_{\alpha}$ is given by
\begin{align}
C_{\alpha}=\frac{2n\sqrt{n+1}}{\sqrt m}.
\label{C_alpha_definition}
\end{align}
The mapping $\operatorname{Geo}(\bm Z,\bm\xi)$ can be expressed as
\begin{align}
\operatorname{Geo}(\bm Z,\bm\xi)
&=
\Biggl[
\cos\bigl(\|\bm\xi_1\|_2\bigr)\bm z_1
+
\sin\bigl(\|\bm\xi_1\|_2\bigr)
\frac{\bm\xi_1}{\|\bm\xi_1\|_2},
\ \ldots,
\notag\\
&\qquad
\cos\bigl(\|\bm\xi_m\|_2\bigr)\bm z_m
+
\sin\bigl(\|\bm\xi_m\|_2\bigr)
\frac{\bm\xi_m}{\|\bm\xi_m\|_2}
\Biggr].
\label{Geodesic_mapping}
\end{align}
If $\bm\xi_i = \boldsymbol{0}$, the corresponding column is defined as $\bm{z}_i$. In particular, one may take
\begin{align}
L_{\rm s}=L_{\rm grad}+\frac{2n(n+1)}{\sqrt m\,\epsilon^2}. \label{Lsm_definition}
\end{align}
\end{proposition}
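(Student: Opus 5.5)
The plan is to work column-wise with the frozen-parameter Euclidean gradient and then assemble everything in Frobenius norm. For fixed $(\alpha,\{\alpha_j\})$, the $i$-th column of $\partial\Delta R/\partial\bm Z$ is
\[
\bm g_i=\frac{n}{m\epsilon^2}\Bigl(\bm A^{-1}-\sum_{j}\pi_{i,j}\bm A_j^{-1}\Bigr)\bm z_i ,
\]
where
\[
\bm A=\alpha\bm I+\tfrac{n}{m\epsilon^2}\bm Z\bm Z^{\rm T},\qquad \bm A_j=\alpha_j\bm I+\tfrac{n}{\tr(\bm\Pi_j)\epsilon^2}\bm Z\bm\Pi_j\bm Z^{\rm T}.
\]

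\textbf{Step 1: eigenvalue bounds.} Since $\|\bm z_i\|_2=1$, we have $\tr(\bm Z\bm Z^{\rm T})=m$. From \eqref{compute:alpha}, $\alpha$ is bounded below in terms of $n$. Indeed, $\log\det(\alpha\bm I+n\bm S)=0$ with $\tr\bm S=1$ forces $\alpha+n\lambda_{\max}\ge 1$ and $\alpha\le 1$, which yields a positive lower bound on the smallest eigenvalue. I intend to derive $\lambda_{\min}(\bm A),\lambda_{\min}(\bm A_j)\ge 1/(n+1)$, so that $\|\bm A^{-1}\|_2,\|\bm A_j^{-1}\|_2\le n+1$. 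This is the source of the factor $(n+1)$ throughout. It gives $\|\bm g_i\|_2\le \frac{n}{m\epsilon^2}\cdot 2(n+1)$, and since $\|\bm g_{{\rm T},i}\|_2\le\|\bm g_i\|_2$, summing over $m$ columns gives the Frobenius bound $2n(n+1)/(\sqrt m\epsilon^2)$.

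\textbf{Step 2: Lipschitz continuity of $\alpha(\bm Z)$ and $\alpha_j(\bm Z)$.} I will use the implicit function theorem on $h(\alpha,\bm S)=\log\det(\alpha\bm I+n\bm S)=0$. The derivative $\partial_\alpha h=\tr(\bm M^{-1})$ is bounded below, and $\partial_{\bm S}h$ is bounded above via the eigenvalue bounds. The map $\bm Z\mapsto\bm S=\bm Z\bm Z^{\rm T}/\tr(\cdot)$ is Lipschitz on $\mathcal M$; the $\pi\sqrt n$ term presumably arises from chordal versus geodesic distance. Combining these, $\bm Z\mapsto\bm A^{-1}$ is Lipschitz via $\bm A^{-1}-\bm B^{-1}=\bm A^{-1}(\bm B-\bm A)\bm B^{-1}$, with constant $L_{\rm E}$ as displayed. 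The same argument applies to each $\bm A_j^{-1}$.

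\textbf{Step 3: Lipschitz continuity of $\bm G_{\rm T}$.} I write $\bm g_{{\rm T},i}=(\bm I-\bm z_i\bm z_i^{\rm T})\bm g_i$. The difference $\bm g_{{\rm T},i}(\bm Z)-\bm g_{{\rm T},i}(\bm Z')$ splits into a term from the variation of the projector and $\bm z_i$, with Lipschitz constant about $2\|\bm g_i\|$, and a term from the variation of $\bm A^{-1}$ and the $\bm A_j^{-1}$, which gives $(k+1)L_{\rm E}$. The factor $(m+1)$ comes from aggregating, via Cauchy--Schwarz, the fact that each column depends on all of $\bm Z$. Summing these contributions yields $L_{\rm grad}$.

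\textbf{Step 4: the descent-type inequality \eqref{Riemannian_smoothness}.} Let $\gamma(s)=\operatorname{Geo}(\bm Z,s\bm\xi)$. Then
\[
F(\gamma(1))-F(\bm Z)=\int_0^1\frac{d}{ds}F(\gamma(s))\,ds .
\]
The derivative has two parts. The first is the frozen-parameter part $\langle\bm G(\gamma(s)),\gamma'(s)\rangle=\langle\bm G_{\rm T}(\gamma(s)),\gamma'(s)\rangle$, because $\gamma'$ is tangent. The second comes from the $s$-dependence of $\alpha$ and the $\alpha_j$: it equals $\partial_\alpha\Delta R\cdot\dot\alpha$ plus the analogous class terms. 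I will bound this second part in absolute value by $C_\alpha\|\bm\xi\|_F$, using $\partial_\alpha\Delta R=\frac12\tr\bm A^{-1}\le n(n+1)/2$ together with the Lipschitz bounds on $\alpha$. Making this match exactly $2n\sqrt{n+1}/\sqrt m$ is the step I expect to be most delicate; I may need to re-derive $\dot\alpha$ directly via the implicit function theorem rather than reuse Step 2.

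For the first part, I compare $\langle\bm G_{\rm T}(\gamma(s)),\gamma'(s)\rangle$ with $\langle\bm G_{\rm T}(\bm Z),\bm\xi\rangle$. The error is at most $L_{\rm grad}\,\mathrm{dist}(\gamma(s),\bm Z)\|\bm\xi\|_F\le L_{\rm grad}s\|\bm\xi\|_F^2$. There is also a contribution from $\gamma'(s)$ not being a parallel transport of $\bm\xi$: we have $\|\gamma'(s)-\bm\xi\|_F\le s\|\bm\xi\|_F^2$, and multiplying by $\|\bm G_{\rm T}\|_F$ brings in the extra $2n(n+1)/(\sqrt m\epsilon^2)$ term. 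Integrating over $s\in[0,1]$ gives the quadratic term $\frac{L_{\rm s}}{2}\|\bm\xi\|_F^2$ with $L_{\rm s}$ as in \eqref{Lsm_definition}.
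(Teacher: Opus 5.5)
Your architecture is the same as the paper's: eigenvalue bounds, then Lipschitz continuity of $\alpha$, of the inverses, and of $\bm G_{\rm T}$, then integration along $\operatorname{Geo}(\bm Z,s\bm\xi)$ with the derivative split into $\langle\bm G_{\rm T},\gamma'\rangle$ plus an implicit-parameter term. Your handling of the quadratic terms in Step~4 also matches the paper.

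The genuine gap is the implicit-parameter term, which is exactly where $C_\alpha$ comes from. Your proposed bound is $\frac12\tr(\bm A^{-1})\le\frac{n(n+1)}{2}$ times a derivative bound $|\alpha'|\le\frac{2}{\sqrt m}\|\bm Z'\|_F$, and likewise per class. This yields $\frac{2n(n+1)}{\sqrt m}\|\bm\xi\|_F$, which is $\sqrt{n+1}$ times larger than $C_\alpha$, so the stated inequality does not follow.

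What is missing is a cancellation. Let $\bm A_0=\alpha\bm I+\frac nm\bm Z\bm Z^{\rm T}$ be the normalized matrix in the constraint, not your $\epsilon$-scaled $\bm A$. Differentiating $\log\det\bm A_0(s)=0$ gives $\alpha'=-\frac{2n}{m}\tr(\bm Z^{\rm T}\bm A_0^{-1}\bm Z')/\tr(\bm A_0^{-1})$. When $\epsilon\le1$ we have $\bm A\succeq\bm A_0$, so $\tr(\bm A^{-1})\le\tr(\bm A_0^{-1})$ cancels the denominator, leaving $\frac12\tr(\bm A^{-1})|\alpha'|\le\frac nm|\tr(\bm Z^{\rm T}\bm A_0^{-1}\bm Z')|$. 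Next apply Cauchy--Schwarz together with the identity $\bm Z\bm Z^{\rm T}=\frac mn(\bm A_0-\alpha\bm I)$. This gives $\|\bm A_0^{-1}\bm Z\|_F^2=\frac mn[\tr(\bm A_0^{-1})-\alpha\tr(\bm A_0^{-2})]\le m(n+1)$, so the term is at most $\frac{n\sqrt{n+1}}{\sqrt m}\|\bm Z'\|_F$. The class terms give the same bound after Cauchy--Schwarz on $\sum_j\sqrt{m_j}\|(\bm Z^j)'\|_F$.

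The same identity, combined with $\tr(\bm A_0^{-1})\ge n$ (AM--GM and $\det\bm A_0=1$), is what gives $|\alpha'|\le\frac{2}{\sqrt m}\|\bm Z'\|_F$. That in turn gives $L_\alpha=\pi/\sqrt m$ and the $\pi\sqrt n$ in $L_{\rm E}$. A generic implicit-function estimate that bounds $\partial_{\bm S}h$ through eigenvalue bounds loses polynomial factors in $n$. So your Step~2 claim of $L_{\rm E}$ ``as displayed'' is not yet justified. Note also that $L_{\rm E}$ is the constant for $\frac{n}{m\epsilon^2}\bm A^{-1}\bm Z$, not for $\bm A^{-1}$.

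There are two smaller issues. First, your route to $\lambda_{\min}\ge\frac{1}{n+1}$ does not work. The inequality $\alpha+n\lambda_{\max}(\bm S)\ge1$ gives nothing about the smallest eigenvalue, and $\prod_i\lambda_i=1$ with $\lambda_i\le n+1$ only gives $(n+1)^{-(n-1)}$. The paper instead uses the trace identity $\sum_i\lambda_i=n(1+\alpha)$ together with $\alpha\le\lambda_{\min}$. If $\lambda_{\min}<\frac1{n+1}$, the other $n-1$ eigenvalues sum to less than $n+1$. Then AM--GM with $(\frac{n+1}{n-1})^{n-1}\le n+1$ contradicts $\prod_{i\neq r}\lambda_i=1/\lambda_{\min}>n+1$. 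Transferring this bound to the $\epsilon$-scaled $\bm A$ needs $\epsilon\le1$, which you should state.

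Second, in Step~3 your column-wise projector estimate is valid and sharper than the paper's. Using $\|\bm z_i\bm z_i^{\rm T}-\tilde{\bm z}_i\tilde{\bm z}_i^{\rm T}\|_2\le2\|\bm z_i-\tilde{\bm z}_i\|_2$ and $\|\bm g_i\|_2\le\frac{2n(n+1)}{m\epsilon^2}$, you get $(k+1)L_{\rm E}+\frac{4n(n+1)}{m\epsilon^2}\le L_{\rm grad}$, which suffices. However, the factor $(m+1)$ does not come from column coupling; your Frobenius bound on $\bm G(\bm Z_1)-\bm G(\bm Z_2)$ already accounts for that. In the paper it is an artifact of writing $\bm G_{\rm T}=\bm G-\bm Z\operatorname{diag}(\bm Z^{\rm T}\bm G)$ and bounding $\|\bm Z_1\|_2,\|\bm Z_2\|_2\le\sqrt m$, which produces the $m(k+1)L_{\rm E}$ term. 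So your route proves a smaller constant rather than reproducing $L_{\rm grad}$ itself.
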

Detailed proofs of the supporting results are provided from \ref{app:proof-lemma-first} to \ref{app:proof-lemma-fourth}, while the
proof of Proposition \ref{pro:1} is given in \ref{app:proof-proposition}. The inequality established above characterizes the variation of the MCR$^2$ objective under the proposed Riemannian update scheme, where the update direction at each iteration is formed with the current implicit parameters treated as fixed. Based on this result, a finite-termination guarantee for the proposed truncated Riemannian update is established in the following theorem.

\begin{theorem}\label{thm:1}
Let $\{\bm Z^{(\ell)}\}_{\ell\geq0}$ be the sequence generated by
\eqref{update:sphere} from $\bm Z^{(0)}\in\mathcal M$, where
$\bm Z^{(0)}$ denotes the initial sample matrix.
Suppose that $t_0>0$, $\beta\geq0$, and
$\tau>C_{\alpha}$, and that
\begin{align}
\arctan\!\bigl(t_0(1+\beta)\bigr)
<
\frac{\tau-C_{\alpha}}{L_{\rm s}},
\end{align}
where $C_{\alpha}$ and $L_{\rm s}$ are defined in
\eqref{C_alpha_definition} and \eqref{Lsm_definition}, respectively.
Define
\begin{align}
\delta_\tau
=
2\arctan(t_0)
\left[
\tau-C_{\alpha}
-
L_{\rm s}\arctan\!\bigl(t_0(1+\beta)\bigr)
\right].
\label{delta_tau_definition}
\end{align}
Then $\delta_\tau>0$, and the proposed truncated Riemannian update
scheme terminates no later than iteration
\begin{align}
T_{\max}
=
\left\lceil
\frac{
F_{\max}-F(\bm Z^{(0)})
}{
\delta_\tau
}
\right\rceil,
\end{align}
where $F_{\max}$ denotes a finite upper bound on $F(\bm Z)$ for $\bm Z\in\mathcal M$.
\end{theorem}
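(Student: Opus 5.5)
The argument is a standard sufficient-increase argument built on inequality \eqref{Riemannian_smoothness} of Proposition \ref{pro:1}. The plan has three steps: cast one iteration of \eqref{update:sphere} as a geodesic step $\operatorname{Geo}(\bm Z^{(\ell)},\bm\xi)$ with an explicit tangent vector $\bm\xi$; bound the resulting increment of $F$ from below by $\delta_\tau$ whenever at least one sample is active; and telescope against the upper bound $F_{\max}$.

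\textbf{Step 1: the geodesic representation.} Fix an iteration $\ell$ and let $\mathcal A_\ell=\{i:\|\bm g_{\mathrm T,i}\|_2>\tau\}$ be the active set. For $i\in\mathcal A_\ell$, write $t_i$ for the adaptive parameter \eqref{Ad_t} and set $\theta_i=2\arctan(t_i)$. By the half-angle identities, $\frac{1-t_i^2}{1+t_i^2}=\cos\theta_i$ and $\frac{2t_i}{1+t_i^2}=\sin\theta_i$. Hence \eqref{update:sphere} is exactly $\operatorname{Geo}(\bm Z^{(\ell)},\bm\xi)$ with
\[
\bm\xi_i=\theta_i\,\bm g_{\mathrm T,i}^*\ \ (i\in\mathcal A_\ell), \qquad \bm\xi_i=\bm 0\ \ (i\notin\mathcal A_\ell).
\]
Each $\bm\xi_i$ is tangent at $\bm z_i^{(\ell)}$. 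Since $t_0\le t_i\le t_0(1+\beta)$, we have $2\arctan(t_0)\le\theta_i\le 2\arctan(t_0(1+\beta))$.

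\textbf{Step 2: the per-iteration lower bound.} Plug this $\bm\xi$ into \eqref{Riemannian_smoothness}. The inner product term equals $\sum_{i\in\mathcal A_\ell}\theta_i\|\bm g_{\mathrm T,i}\|_2>\tau\sum_{i\in\mathcal A_\ell}\theta_i$. The other terms involve $\|\bm\xi\|_F=\bigl(\sum_{i\in\mathcal A_\ell}\theta_i^2\bigr)^{1/2}\le\sum_{i\in\mathcal A_\ell}\theta_i$. For the quadratic term, bound $\|\bm\xi\|_F^2\le\bigl(\max_i\theta_i\bigr)\sum_{i\in\mathcal A_\ell}\theta_i\le 2\arctan(t_0(1+\beta))\sum_{i\in\mathcal A_\ell}\theta_i$. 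Combining,
\[
F(\bm Z^{(\ell+1)})-F(\bm Z^{(\ell)})\ge\Bigl(\sum_{i\in\mathcal A_\ell}\theta_i\Bigr)\bigl[\tau-C_\alpha-L_{\rm s}\arctan(t_0(1+\beta))\bigr].
\]
The bracket is positive by the hypothesis on $\arctan(t_0(1+\beta))$, and $\sum_{i\in\mathcal A_\ell}\theta_i\ge 2\arctan(t_0)$ whenever $\mathcal A_\ell\neq\emptyset$. So every iteration with an active sample increases $F$ by at least $\delta_\tau$, and $\delta_\tau>0$ follows immediately from $t_0>0$ together with the same hypothesis.

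\textbf{Step 3: telescoping.} First verify that $F$ is bounded above on the compact set $\mathcal M$. $F$ is continuous there: the matrices appearing in the log-determinants are bounded on $\mathcal M$, and $\alpha,\alpha_j\in[0,1]$. This gives a finite $F_{\max}$. Now suppose $\mathcal A_\ell\ne\emptyset$ for all $\ell<T$. Summing the per-iteration bound gives $T\delta_\tau\le F(\bm Z^{(T)})-F(\bm Z^{(0)})\le F_{\max}-F(\bm Z^{(0)})$, so $T\le T_{\max}$. Therefore some iteration $\ell\le T_{\max}$ has an empty active set, and the algorithm stops there.

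\textbf{Main obstacle.} The delicate point is the quadratic term in Step 2. Using $\|\bm\xi\|_F^2\le\max_i\theta_i\cdot\sum_i\theta_i$ instead of $\bigl(\sum_i\theta_i\bigr)^2$ is what keeps the bound linear in $\sum_i\theta_i$; with $\bigl(\sum_i\theta_i\bigr)^2$ the bound would depend on the number of active samples. This step also produces exactly the factor $\arctan(t_0(1+\beta))$ (rather than $2\arctan$) inside $\delta_\tau$, because the $2$ from $\theta_i\le 2\arctan(t_0(1+\beta))$ cancels the $\tfrac12$ in front of $L_{\rm s}$. I would double-check this constant bookkeeping against \eqref{delta_tau_definition}, as well as the implicit-parameter recomputation that is already absorbed into $C_\alpha$ by Proposition \ref{pro:1}.
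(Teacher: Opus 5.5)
Your proposal is correct and follows the paper's proof essentially step for step. It uses the same tangent vector $\bm\xi_i^{(\ell)}=\theta_i\bm g_{\mathrm T,i}^*$ with $\theta_i=2\arctan(t_i)$, the same application of \eqref{Riemannian_smoothness}, and the same telescoping against $F_{\max}$. Your bound $\sum_i\theta_i^2\le\max_i\theta_i\sum_i\theta_i$ is just a repackaging of the paper's termwise estimate $\theta_i\bigl(\tau-C_\alpha-\tfrac{L_{\rm s}}{2}\theta_i\bigr)\ge\delta_\tau$.

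The one soft spot is your continuity claim for $F$. Boundedness of the matrices and $\alpha,\alpha_j\in[0,1]$ neither gives continuity nor keeps the subtracted class log-determinants away from $-\infty$. That claim should rest on continuity of the implicit parameters (Lemma \ref{lem:2}, as the paper uses) together with the positive-definiteness bounds of Lemma \ref{lem:1}. Since the statement already takes $F_{\max}$ as given, this does not affect the main argument.
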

\begin{proof}
Suppose that the stopping criterion is not satisfied at the beginning
of iteration $\ell$. Define the set of effective column indices as
\begin{align}
\mathcal I_{\ell}=\left\{i:\|\bm g_{\mathrm{T},i}^{(\ell)}\|_2>\tau\right\},
\end{align}
where $\bm g_{{\rm T},i}^{(\ell)}$ denotes the $i$-th column of the Riemannian update direction at iteration $\ell$. Then $\mathcal I_{\ell} \neq \varnothing$. For each $i\in\mathcal I_{\ell}$, the update rule for the $i$-th column is given by \eqref{update:sphere}. Let
\begin{align}
\theta_i^{(\ell)}=2\arctan\!\left(t_i\right),
\end{align}
where $t_i$ denotes the adaptive step-size parameter of the $i$-th column, and
$\theta_i^{(\ell)}$ represents the geodesic distance traveled by that
column on the unit sphere. Since $\|\bm z_i^{(\ell)}\|_2=1$, we have
\begin{align}
0\leq\left|\frac{\bm g_i^{\rm T}\bm z_i^{(\ell)}}{\|\bm g_i\|_2}\right|\leq 1.
\end{align}
Thus, it follows from \eqref{Ad_t} that
\begin{align}
t_0\leq t_i \leq t_0(1+\beta). \label{adaptive_step_bound}
\end{align}
Since $\arctan(\cdot)$ is strictly increasing, we further have
\begin{align}
2\arctan(t_0) \leq \theta_i^{(\ell)} \leq 2\arctan\!\bigl(t_0(1+\beta)\bigr).
\label{theta_bound}
\end{align}

At the $\ell$-th iteration, define the tangent update vector $\bm\xi^{(\ell)}\in T_{\bm Z^{(\ell)}}\mathcal M$ by specifying its
$i$-th column as
\begin{align}
\bm\xi_i^{(\ell)}
=\begin{cases}
\displaystyle
\theta_i^{(\ell)}
\frac{\bm{g}_{\mathrm{T},i}^{(\ell)}}
{\|\bm{g}_{\mathrm{T},i}^{(\ell)}\|_2},
& i\in\mathcal I_{\ell}, \\[3mm]
\bm 0,
&i\notin\mathcal I_{\ell}.
\end{cases}
\label{xi_definition}
\end{align}
By Proposition \ref{pro:1} and the definition of $\bm G_{\rm T}(\bm Z)$ in \eqref{Gt_column_form}, the inequality in \eqref{Riemannian_smoothness} gives
\begin{align}
&F(\bm Z^{(\ell+1)})-F(\bm Z^{(\ell)})
\notag\\
&\geq
\left\langle
\bm G_{\rm T}(\bm Z^{(\ell)}),
\bm\xi^{(\ell)}
\right\rangle_F
-
C_{\alpha}\|\bm\xi^{(\ell)}\|_F
-
\frac{L_{\rm s}}{2}
\|\bm\xi^{(\ell)}\|_F^2
\label{Riemannian_smoothness_inequality}\\
&=
\sum_{i\in\mathcal I_{\ell}}
\theta_i^{(\ell)}
\|\bm g_{{\rm T},i}^{(\ell)}\|_2
-
C_{\alpha}
\left[
\sum_{i\in\mathcal I_{\ell}}
\bigl(\theta_i^{(\ell)}\bigr)^2
\right]^{1/2}
-
\frac{L_{\rm s}}{2}
\sum_{i\in\mathcal I_{\ell}}
\bigl(\theta_i^{(\ell)}\bigr)^2
\label{ell_gt_1}  \\
&\geq
\sum_{i\in\mathcal I_{\ell}}
\left[
\theta_i^{(\ell)}
\|\bm g_{{\rm T},i}^{(\ell)}\|_2
-
C_{\alpha}\theta_i^{(\ell)}
-
\frac{L_{\rm s}}{2}
\bigl(\theta_i^{(\ell)}\bigr)^2
\right]
\label{ell_gt_2} \\
&>
\sum_{i\in\mathcal I_{\ell}}
\left[
(\tau-C_{\alpha})\theta_i^{(\ell)}
-
\frac{L_{\rm s}}{2}
\bigl(\theta_i^{(\ell)}\bigr)^2
\right].
\label{ell_gt}
\end{align}
The equality in \eqref{ell_gt_1} follows from \eqref{Gt_column_form} and \eqref{xi_definition}, while the subsequent
inequality follows from $\|\bm\xi^{(\ell)}\|_F \leq\sum_{i\in\mathcal I_{\ell}}\theta_i^{(\ell)}$, since $\theta_i^{(\ell)}\geq0$.
Finally, the strict inequality in \eqref{ell_gt} follows from $\|\bm g_{{\rm T},i}^{(\ell)}\|_2>\tau$ for every $i\in\mathcal I_{\ell}$. For each $i\in\mathcal I_{\ell}$, using \eqref{theta_bound} together with the step-size condition, we obtain
\begin{align}
(\tau-C_{\alpha})\theta_i^{(\ell)}
-\frac{L_{\rm s}}{2}\bigl(\theta_i^{(\ell)}\bigr)^2
&=\theta_i^{(\ell)}
\left(
\tau-C_{\alpha}
-\frac{L_{\rm s}}{2}\theta_i^{(\ell)}
\right)
\notag\\
&\geq
2\arctan(t_0)
\left[
\tau-C_{\alpha}
-L_{\rm s}\arctan\!\bigl(t_0(1+\beta)\bigr)
\right].
\label{delta_tau_lower_bound}
\end{align}
By \eqref{delta_tau_definition}, the right-hand side of \eqref{delta_tau_lower_bound} is equal to $\delta_\tau$. Moreover, the step-size condition ensures that $\delta_\tau>0$. Since $\mathcal I_{\ell}\neq\varnothing$, combining \eqref{ell_gt} and \eqref{delta_tau_lower_bound} yields
\begin{align}
F(\bm Z^{(\ell+1)})-F(\bm Z^{(\ell)})>|\mathcal I_{\ell}|\delta_\tau\geq\delta_\tau. \label{uniform_increment}
\end{align}
Finally, suppose that the algorithm proceeds through $T$ consecutive iterations without satisfying the stopping criterion. Summing \eqref{uniform_increment} over $\ell=0,\ldots,T-1$ gives
\begin{align}
F(\bm Z^{(T)})-F(\bm Z^{(0)})>T\delta_\tau.  \label{cumulative_increment}
\end{align}
Since the implicit parameter mappings are continuous by Lemma \ref{lem:2}, the objective $F$ is continuous on the compact manifold $\mathcal M$. Therefore, $F$ attains a finite maximum $F_{\max}$ on $\mathcal M$. Combining this with \eqref{cumulative_increment}, we obtain
\begin{align}
T\delta_\tau<F_{\max}-F(\bm Z^{(0)}),
\end{align}
which implies
\begin{align}
T<\frac{F_{\max}-F(\bm Z^{(0)})}{\delta_\tau}.
\end{align}
Consequently, the algorithm terminates no later than iteration
\begin{align}
T_{\max}=\left\lceil\frac{F_{\max}-F(\bm Z^{(0)})}{\delta_\tau}\right\rceil,
\end{align}
where $\lceil\cdot\rceil$ denotes the ceiling function. This completes the proof.
\end{proof}
Theorem \ref{thm:1} establishes a finite-termination guarantee for the proposed Riemannian update scheme under the prescribed threshold and step-size conditions. It should be noted that these conditions are sufficient but not necessary. Moreover, since the constants $C_{\alpha}$ and $L_{\rm s}$ are derived from global worst-case bounds over the entire feasible manifold $\mathcal M$, the resulting sufficient conditions can be relatively conservative in practice. Nevertheless, when these conditions are satisfied, they provide a finite-termination guarantee even in the worst case. Therefore, these conditions are intended primarily as theoretical guarantees rather than practical parameter-selection rules.

\subsection{LA-ReduNet Architecture}

\begin{figure}[t]
\resizebox{\linewidth}{!}{
\begin{tikzpicture}[
    line width=1.0pt,
    box/.style={
        draw,
        align=center,
        minimum height=28pt,
        inner xsep=10pt,
        inner ysep=6pt,
        rounded corners
    },
    greenbox/.style={
        box,
        fill=lime!30
    },
    bluebox/.style={
        box,
        fill=cyan!25
    },
    redbox/.style={
        box,
        fill=red!25
    },
    yellowcirc/.style={
        circle,
        draw,
        fill=yellow!35,
        inner sep=4pt,
        line width=1.0pt
    },
    orangecirc/.style={
        circle,
        draw,
        fill=orange!30,
        inner sep=4pt,
        line width=1.0pt
    },
    purplecirc/.style={
        circle,
        draw,
        fill=violet!25,
        inner sep=4pt,
        line width=1.0pt
    }
]


\node[greenbox] (zl)
{$\bm{z}^{(0)} \in \mathbb{R}^{d}$};

\node[
    redbox,
    left=1.3cm of zl
] (P)
{$\mathcal{P}_{\mathbb{S}^{n-1}}$};

\draw[->] (P.east) -- (zl.west);

\node[
    redbox,
    left=1.3cm of P,
    minimum width=3.2cm,
    font=\bfseries
] (dim)
{$\mathcal{D}:\mathbb{R}^n \rightarrow \mathbb{R}^d$};

\draw[->] (dim.east) -- (P.west);

\node[
    greenbox,
    left=1.3cm of dim
] (x)
{$\bm{x} \in \mathbb{R}^{n}$};

\draw[->] (x.east) -- (dim.west);

\node[
    draw,
    dashed,
    thick,
    inner sep=0.5cm,
    fit=(dim) (P)
] (dm) {};

\node[above=2pt of dm]
{\textbf{Dimension-reduction Module}};


\coordinate (split) at ($(zl.east)+(0.8,0)$);

\draw (zl.east) -- (split);

\coordinate (operatorleft) at ($(split)+(0.8,0)$);


\node[
    bluebox,
    anchor=west
] (E) at ($(operatorleft)+(0,3.6)$)
{$\bm{E}^{(\ell)}$};


\node[
    bluebox,
    anchor=west
] (CPi1) at ($(operatorleft)+(0,2.1)$)
{$\bm{C}_{1}^{(\ell)},\,
  \bm{\Pi}_{1}^{(\ell-1)}$};

\node[
    bluebox,
    anchor=west
] (CPi2) at ($(operatorleft)+(0,0.7)$)
{$\bm{C}_{2}^{(\ell)},\,
  \bm{\Pi}_{2}^{(\ell-1)}$};

\node[
    anchor=center
] (CPidots) at ($(CPi2.center)+(0,-0.9)$)
{$\vdots$};

\node[
    bluebox,
    anchor=west
] (CPik) at ($(operatorleft)+(0,-2.1)$)
{$\bm{C}_{k}^{(\ell)},\,
  \bm{\Pi}_{k}^{(\ell-1)}$};


\node[
    bluebox,
    anchor=west
] (I) at ($(operatorleft)+(0,-3.6)$)
{$\bm{I}$};


\draw[->] (split) |- (E.west);

\draw[->] (split) |- (CPi1.west);
\draw[->] (split) |- (CPi2.west);
\draw[->] (split) |- (CPik.west);

\draw[->] (split) |- (I.west);


\coordinate (mergeline) at ($(CPi1.east)+(1.0,0)$);

\coordinate (merge1) at (mergeline |- CPi1.east);
\coordinate (merge2) at (mergeline |- CPi2.east);
\coordinate (mergek) at (mergeline |- CPik.east);

\draw (CPi1.east) -- (merge1);
\draw (CPi2.east) -- (merge2);
\draw (CPik.east) -- (mergek);

\draw (merge1) -- (mergek);


\node[
    yellowcirc,
    right=1.2cm of merge2
] (minus)
{$\bm{-}$};

\node[
    yellowcirc,
    right=1.2cm of minus
] (plus)
{$\bm{+}$};

\draw[->]
    (merge2)
    --
    node[midway,above] {$\eta$}
    (minus.west);

\draw[->]
    (minus.east)
    --
    (plus.west);


\draw[->]
    (E.east)
    --
    node[midway,above] {$\eta$}
    (plus |- E.east)
    --
    (plus.north);


\node[
    purplecirc,
    right=1.2cm of plus
] (grad)
{$\bm{g}_{\rm T}^*$};

\draw[->]
    (plus.east)
    --
    node[midway,above]
    {$\bm{g}_{\rm T} \mapsto \bm{g}_{\rm T}^*$}
    (grad.west);

\node[
    orangecirc,
    right=1.2cm of grad
] (gate)
{$\tau$};

\draw[->]
    (grad.east)
    --
    (gate.west);

\node[
    greenbox,
    right=2.0cm of gate
] (znext)
{$\bm{z}^{(L)} \in \mathbb{R}^{d}$};

\draw[->]
    (gate.east)
    --
    (znext.west);


\draw[->]
    (I.east)
    --
    (gate |- I.east)
    --
    (gate.south);


\node[
    draw,
    dashed,
    thick,
    inner sep=1.2cm,
    fit=(I) (E)
        (CPi1) (CPi2) (CPik)
        (minus) (plus)
        (grad) (gate)
] (am) {};

\node[above=2pt of am]
{\textbf{$L$-Layer Structure of LA-ReduNet}};


\node[above=4pt of E]
{Expansion Operator};

\node[above=7pt of CPik]
{Compression Operators};

\node[below=4pt of I]
{Identity Matrix};

\node[above=4pt of gate]
{Threshold-Based Spherical Update};

\end{tikzpicture}
}
\caption{Feature Extraction Process of LA-ReduNet.}
\label{fig:LA_ReduNet}
\end{figure}

\begin{figure}[!t]
    \centering
    \includegraphics[width=\linewidth]{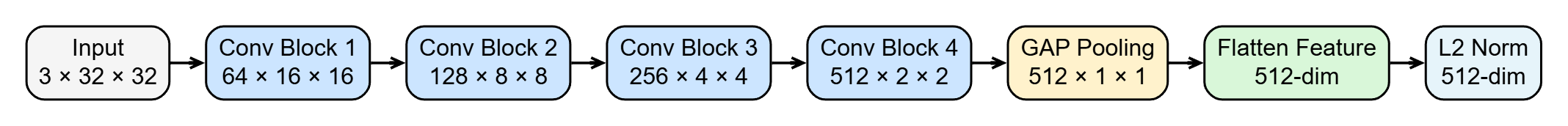}
    \caption{Structure of the dimension-reduction module.}
    \label{fig:6}
\end{figure}

In this subsection, we present the architecture of LA-ReduNet. As discussed above, the parameter storage of a ReduNet-type unfolded module depends on the network depth $L$, the number of classes $k$, and the feature dimension $n$, with an asymptotic complexity of $\textrm{O}(L(k+1)n^2)$. For the experiments, the same lightweight dimension-reduction module $ \mathcal{D} $ is used as a common front-end for ReduNet, AR-ReduNet, and LA-ReduNet to produce lower-dimensional normalized inputs and ensure a controlled comparison. As illustrated in Fig. \ref{fig:LA_ReduNet}, the feature dimension is reduced from $n$ to $d$, where $ d \ll n $. Accordingly, the parameter storage of the subsequent ReduNet-type module has an asymptotic complexity of $\textrm{O}(L(k+1)d^2)$. Since the same front-end and feature dimension are used for all three variants, the reduction in the parameter storage of the unfolded LA-ReduNet module relative to ReduNet and AR-ReduNet mainly results from its substantially smaller network depth. Algorithm \ref{alg:LwAda-ReduNet:1} and Algorithm \ref{alg:LwAda-ReduNet:2} respectively illustrate the training and testing procedures of LA-ReduNet.

\begin{algorithm}[!t]
\setstretch{1.15}
\caption{Training of LA-ReduNet ($\bm X, \bm \Pi, \epsilon, t_0, \tau, \beta$)}\label{alg:LwAda-ReduNet:1}
\begin{algorithmic}[1]
\STATE  $ \bm Z^{(0)}\leftarrow \Big[\frac{\bm x_1}{||\bm x_1||_2},\ldots,\frac{\bm x_m}{||\bm x_m||_2}\Big] $;

\FOR{$\ell=1$ to $L$}
\STATE  $ \bm E^{(\ell)}\leftarrow\frac{d}{m\epsilon^{2}}\big(\alpha \bm I+\frac{d}{m\epsilon^{2}} \bm Z^{(\ell-1)}(\bm Z^{(\ell-1)})^{\rm T}\big)^{-1}$;  \\ // $ \alpha $ is given by Algorithm \ref{alg:bisection}.
\FOR{$j=1$ to $k$}
\STATE $\bm C_{j}^{(\ell)} \leftarrow \frac{d}{m\epsilon^{2}}\big(\alpha_j \bm I + \frac{d}{\tr(\bm \Pi_j^{(\ell-1)})\epsilon^{2}} \bm Z^{(\ell-1)}\bm \Pi_j^{(\ell-1)}(\bm Z^{(\ell-1)})^{\rm T}\big)^{-1} $;  \\ // $ \alpha_{j} $ is given by Algorithm \ref{alg:bisection}.
\ENDFOR
\STATE $ \bm{G}^{(\ell)} \leftarrow \bm{E}^{(\ell)} \bm{Z}^{(\ell-1)}-\Big(\sum_{j=1}^k\bm{C}_{j}^{(\ell)}\bm{Z}^{(\ell-1)}\bm{\Pi}_j^{(\ell-1)}\Big);
$ \\ // $\bm{G}^{(\ell)}=[\bm{g}^{(\ell)}_{1}, \bm{g}^{(\ell)}_{2}, \ldots, \bm{g}^{(\ell)}_{m}] \in \mathbb{R}^{d \times m}$ is the gradient matrix corresponding to $ \bm Z^{(\ell-1)} $.
\FOR{$p=1$ to $m$}
\STATE $ (\bm{g}^{(\ell)}_{\rm T})_{p} \leftarrow \bm{g}^{(\ell)}_{p}-((\bm{g}^{(\ell)}_{p})^{\rm T}\bm{z}^{(\ell-1)}_{p})\bm{z}^{(\ell-1)}_{p} $;
\IF{$\|(\bm{g}^{(\ell)}_{\rm T})_{p}\| > \tau$}
    \STATE $ t = t_0 \cdot \left(1 + \beta \cdot \left( 1 - \left|\frac{(\bm{g}_{p}^{(\ell)})^{\rm T}\bm{z}_{p}^{(\ell-1)}}{||\bm{g}_{p}^{(\ell)}||}\right|\right)\right) $;
    \STATE $\bm{z}^{(\ell)}_{p} \leftarrow \displaystyle \frac{1 - t^{2}}{1 + t^{2}} \bm{z}^{(\ell-1)}_{p} + \frac{2t}{1 + t^{2}} \frac{(\bm{g}^{(\ell)}_{\rm T})_{p}}{||(\bm{g}^{(\ell)}_{\rm T})_{p}||}$;
\ELSE
    \STATE $\bm{z}^{(\ell)}_{p} \leftarrow \bm{z}^{(\ell-1)}_{p}$;
\ENDIF
\ENDFOR
\STATE $\bm Z^{(\ell)}=[\bm{z}^{(\ell)}_{1}, \bm{z}^{(\ell)}_{2}, \ldots, \bm{z}^{(\ell)}_{m}]$;
\ENDFOR
\RETURN $ \bm Z^{(L)}, \{ \bm{E}^{(\ell)}, \bm{C}_{1}^{(\ell)}, \bm{C}_{2}^{(\ell)}, \ldots, \bm{C}_{k}^{(\ell)}\}_{\ell=1}^{L}$.
\end{algorithmic}
\end{algorithm}

\begin{algorithm}[!t]
\setstretch{1.15}
\caption{Testing of LA-ReduNet ($ \widetilde{\bm{X}}, \{\bm{E}^{(\ell)}, \bm{C}_{1}^{(\ell)}, \ldots, \bm{C}_{k}^{(\ell)}\}_{\ell=1}^{L}, t_0, \tau, \beta, \lambda $)}\label{alg:LwAda-ReduNet:2}
\begin{algorithmic}[1]
\STATE  $ \widetilde{\bm Z}^{(0)}\leftarrow \Big[\frac{\widetilde{\bm x}_1}{||\widetilde{\bm x}_1||_2},\ldots,\frac{\widetilde{\bm x}_m}{||\widetilde{\bm x}_m||_2}\Big]$;\quad // $\widetilde{\bm x}_i$ is the $i$-th column of data matrix $\widetilde{\bm X}$.
\FOR{$\ell=1$ to $L$}
\FOR{$j=1$ to $k$}
\FOR{$i=1$ to $m$}
\STATE  $ \hat{\pi}_{i,j}^{(\ell-1)}\leftarrow \frac{\exp(-\lambda||\bm C^{(\ell)}_j\bm{\widetilde{z}}_i^{(\ell-1)}||)}{\sum_{j=1}^k\exp(-\lambda||\bm C^{(\ell)}_j\bm{\widetilde{z}}_i^{(\ell-1)}||)} $;
\ENDFOR
\STATE $\bm \Pi_j^{(\ell-1)}\leftarrow\diag(\hat{\pi}_{1,j}^{(\ell-1)},\ldots,\hat{\pi}_{m,j}^{(\ell-1)})$;
\ENDFOR
\STATE $ \widetilde{\bm{G}}^{(\ell)} \leftarrow \bm{E}^{(\ell)} \widetilde{\bm Z}^{(\ell-1)}-\Big(\sum_{j=1}^k\bm{C}_{j}^{(\ell)}\widetilde{\bm Z}^{(\ell-1)}\bm{\Pi}_j^{(\ell-1)}\Big);$
\\ // $\widetilde{\bm{G}}^{(\ell)}=[\widetilde{\bm{g}}^{(\ell)}_{1}, \widetilde{\bm{g}}^{(\ell)}_{2}, \ldots, \widetilde{\bm{g}}^{(\ell)}_{m}] \in \mathbb{R}^{d \times m}$ is the gradient matrix corresponding to $ \widetilde{\bm Z}^{(\ell-1)} $.
\FOR{$p=1$ to $m$}
\STATE $ (\widetilde{\bm{g}}^{(\ell)}_{\rm T})_{p} \leftarrow \widetilde{\bm{g}}^{(\ell)}_{p}-((\widetilde{\bm{g}}^{(\ell)}_{p})^{\rm T}\widetilde{\bm{z}}^{(\ell-1)}_{p})\widetilde{\bm{z}}^{(\ell-1)}_{p} $;
\IF{$\|(\widetilde{\bm{g}}^{(\ell)}_{\rm T})_{p}\| > \tau$}
    \STATE $ t = t_0 \cdot \left(1 + \beta \cdot \left( 1 - \left|\frac{(\widetilde{\bm{g}}_{p}^{(\ell)})^{\rm T}\widetilde{\bm{z}}_{p}^{(\ell-1)}}{||\widetilde{\bm{g}}_{p}^{(\ell)}||}\right|\right)\right) $;
    \STATE $\widetilde{\bm{z}}^{(\ell)}_{p} \leftarrow \displaystyle \frac{1 - t^{2}}{1 + t^{2}} \widetilde{\bm{z}}^{(\ell-1)}_{p} + \frac{2t}{1 + t^{2}} \frac{(\widetilde{\bm{g}}^{(\ell)}_{\rm T})_{p}}{||(\widetilde{\bm{g}}^{(\ell)}_{\rm T})_{p}||}$;
\ELSE
    \STATE $\widetilde{\bm{z}}^{(\ell)}_{p} \leftarrow \widetilde{\bm{z}}^{(\ell-1)}_{p}$;
\ENDIF
\ENDFOR
\STATE $\widetilde{\bm Z}^{(\ell)}=[\widetilde{\bm{z}}^{(\ell)}_{1}, \widetilde{\bm{z}}^{(\ell)}_{2}, \ldots, \widetilde{\bm{z}}^{(\ell)}_{m}]$;
\ENDFOR
\RETURN $ \widetilde{\bm Z}^{(L)} $.
\end{algorithmic}
\end{algorithm}

The dimension-reduction module is implemented using a simple convolutional architecture and is shared by all ReduNet variants in the experiments. Its role is to generate lower-dimensional normalized features for the subsequent ReduNet-type modules. This convolutional front-end is trained separately through backpropagation, whereas the subsequent LA-ReduNet module retains the white-box construction of ReduNet. As illustrated in Fig. \ref{fig:6}, the module consists of four cascaded convolutional blocks, followed by global average pooling (GAP) and a flattening operation, producing a 512-dimensional feature vector. Since the subsequent ReduNet-type modules require unit-norm input features, the output feature vectors are normalized before being fed into ReduNet, AR-ReduNet, or LA-ReduNet.

For training the common convolutional front-end, the same combined loss is used for all ReduNet variants. Since the downstream ReduNet-type models are constructed based on the MCR$^2$ objective, cross-entropy (CE) and MCR$^2$ are jointly employed to encourage the learned low-dimensional features to be compatible with the subsequent modules. The combined loss is defined as
\begin{align}
\mathcal{L}_{\text{combined}} = \mathcal{L}_{\text{CE}} -\mu \Delta R,
\end{align}
where $ \mathcal{L}_{\text{CE}} $ denotes the CE loss, $ \Delta R $ is derived from \eqref{MCR2_alpha}, and $ \mu $ represents weight coefficient. Since the output features are normalized to unit norm, scaled cosine similarities are used as the classification logits \cite{liu2017sphereface}. In this way, classification is determined by the angular similarity between the feature vectors and the weight vectors, making the classifier compatible with the normalized feature space. The corresponding cosine-based softmax CE loss is formulated as
\begin{align}
\mathcal{L}_{\text{CE}} = -\frac{1}{m}\sum_{i=1}^m \log \frac{e^{a\cos\theta_{y_i}}}{e^{a\cos\theta_{y_i}} + \sum_{j \neq y_i}^k e^{a\cos\theta_j}},
\end{align}
where $\theta_{y_i}$ denotes the angle between the feature of the $i$-th sample and the weight vector of its ground-truth class $y_i$, and $ a $ is a scaling factor.

\section{Simulation Results}
In this section, we evaluate the performance of LA-ReduNet across multiple datasets. Compared with the baseline algorithms ReduNet and AR-ReduNet, LA-ReduNet requires substantially fewer unfolded layers. Correspondingly, the parameter storage of the unfolded LA-ReduNet module is also significantly smaller than that of the corresponding ReduNet and AR-ReduNet modules. Following the idea of ReduNet \cite{chan2022redunet}, we select ReduNet, AR-ReduNet, and LA-ReduNet as feature extractors, and Nearest Subspace (NS) as the classifier. For a given feature $ \bm{z}_{\textrm{test}} $ of the test sample, the NS classifier provides the predicted label $ j_{\textrm{pred}} $, which satisfies:
\begin{align}
j_{\mathrm{pred}} = \arg\min_{t \in \{1, \dots, k\}} \left\| \left( \bm{I} - \bm{U}_t \bm{U}_t^{\mathrm{T}} \right) \bm{z}_{\mathrm{test}} \right\|_2^2,
\end{align}
where $\bm{U}_t \in \mathbb{R}^{n \times n}$ is the orthogonal matrix composed of left singular vectors of $\bm{Z}_j$, the
extracted features of training data in class $j$ by the feature extractor. We evaluate the performance of three different feature extractors on the CIFAR-10, CIFAR-100 \cite{krizhevsky2009learning}, and CINIC-10 datasets \cite{darlow2018cinic}. The detailed information of these datasets is presented in Table \ref{tab:0}.

\begin{table}[t!]
  \centering
  \begin{tabular}{lccc}  
    \toprule
     & \multicolumn{1}{c}{CIFAR-10} & \multicolumn{1}{c}{CIFAR-100} & \multicolumn{1}{c}{CINIC-10} \\  
    \midrule
    Training set       & 50000 & 50000 & 90000 \\
    Test set     & 10000  & 10000 & 90000 \\
    Number of classes  & 10 & 100 & 10 \\
    Data dimension    & 32 $\times$ 32 $\times$ 3 & 32 $\times$ 32 $\times$ 3 & 32 $\times$ 32 $\times$ 3 \\
    \bottomrule
  \end{tabular}
  \caption{Properties of the evaluated datasets.}
  \label{tab:0}
\end{table}

%

\subsection{Effect of Network Layers on Feature Visualization and Classification Accuracy}
In this section, we investigate the effect of the number of iterative layers $L$ on the feature extraction capability of different models. We consider two baseline models and our proposed method in the experiments. To distinguish different convergence behaviors observed in the experiments, we consider two notions of convergence. Objective convergence refers to the stage at which the MCR$^2$ objective becomes stable as the number of network layers increases, whereas classification-accuracy convergence refers to the stage at which the classification accuracy reaches and remains near a stable level. In general, classification-accuracy convergence may occur earlier than objective convergence.

\subsubsection{CIFAR-10 and CIFAR-100}
To ensure a fair comparison, ReduNet and AR-ReduNet use the same dimension reduction module as LA-ReduNet, ensuring that all three models receive identical input features during forward propagation. In addition, for consistency, parameter $a$ is set to 16 and $\mu$ is set to $10^{-3}$ across all experiments\footnote{The choice of $\mu=10^{-3}$ is validated by ablation studies (Table \ref{tab:2} and Fig. \ref{fig:13}), where it achieves a favorable balance between classification accuracy and feature representation quality compared to other tested values.}. In our training configuration, the AdamW optimizer is employed with an initial learning rate of $10^{-3}$ and weight decay of $10^{-2}$. The learning rate is dynamically adjusted using the cosine annealing scheduler with a period of 50 epochs and a minimum learning rate of $10^{-5}$. All models are trained for a total of 30 epochs. Meanwhile, the parameters of ReduNet and AR-ReduNet include: $ \epsilon =0.3 $, $ \eta=0.5 $ and maximum number of iterative layers $ L_{\rm{\max}}=1000 $. The step size $\eta=0.5$ is adopted following the original ReduNet setting, so as to retain the standard configuration of the baseline methods \cite{chan2022redunet}. For LA-ReduNet, the parameters include: $\epsilon=0.3$, the adaptive step-size parameter $t_0=0.05$, threshold $\tau=10^{-8}$, scaling coefficient $\beta=1$, and maximum number of iterative layers $L_{\rm{max}}=1000$. The threshold $\tau=10^{-8}$ is adopted to truncate samples whose Riemannian update direction norms have become negligible, while avoiding premature truncation of samples that still admit effective updates. This practical choice is not intended to satisfy the global sufficient conditions in Theorem \ref{thm:1}. Regarding the parameter $t_0$, a relatively small value $t_0=0.05$ is adopted, rather than choosing an aggressively large value for $t_0$ to accelerate the iteration.

For CIFAR-10 and CIFAR-100, as illustrated in Fig. \ref{fig:7}, we observe that LA-ReduNet attains its highest classification accuracy at approximately 5 layers and outperforms ReduNet and AR-ReduNet. By contrast, both ReduNet and AR-ReduNet fail to achieve stable accuracy within the 50-layer range. Furthermore, as shown in Fig. \ref{fig:7_1}, LA-ReduNet achieves objective convergence at approximately 35 layers, whereas ReduNet and AR-ReduNet require between 900 and 1000 layers to achieve objective convergence. After objective convergence is reached, the MCR$^2$ objective values of LA-ReduNet occasionally exhibit slight layer-to-layer decreases (first occurring at layer 35 on CIFAR-10 and layer 216 on CIFAR-100). These observations show that, although the experimental setting of LA-ReduNet does not satisfy the sufficient conditions derived from global worst-case bounds in Theorem \ref{thm:1}, its MCR$^2$ objective remains stable in practice after convergence. It should also be noted that AR-ReduNet and LA-ReduNet compute the coding rate using a modified multivariate Gaussian rate-distortion approximation function, and therefore their MCR$^2$ values are not directly comparable to that of ReduNet. According to the objective function \eqref{MCR2_alpha}, owing to the adoption of $\alpha $ and $\alpha_j $, the MCR$^2$ objective values solved by AR-ReduNet and LA-ReduNet are lower than those computed by ReduNet.

\begin{figure}[t]
    \centering
    \captionsetup[subfigure]{labelformat=parens}

    \begin{subfigure}{0.49\textwidth}
    \centering
    \includegraphics[width=\linewidth]{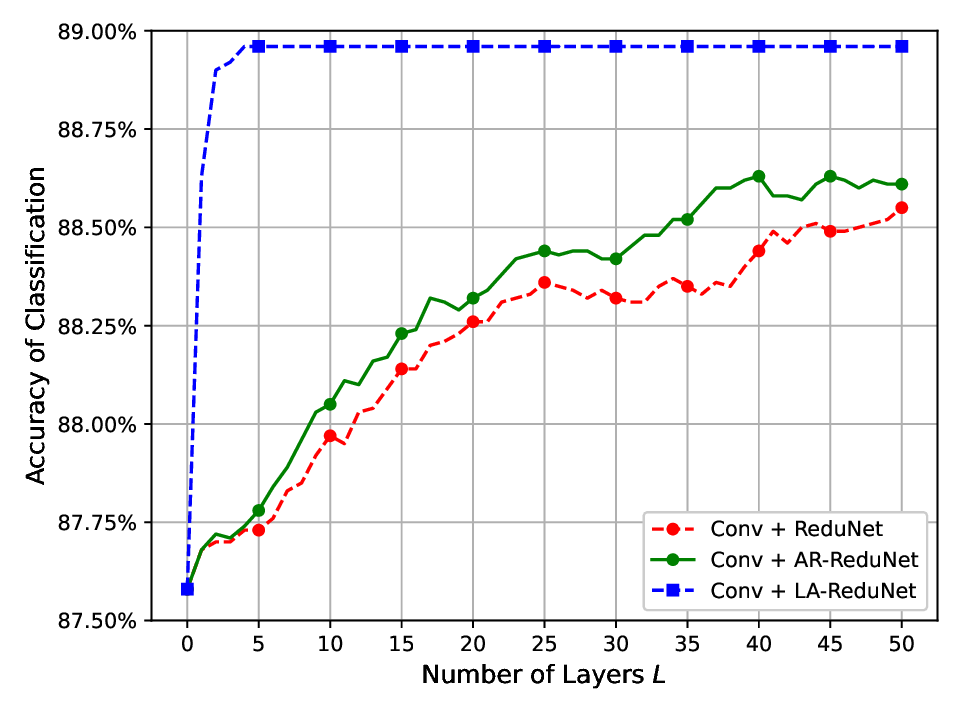}
    \caption{}  
    \label{fig:7_a}
    \end{subfigure}
    \hfill
    \begin{subfigure}{0.49\textwidth}
    \centering
    \includegraphics[width=\linewidth]{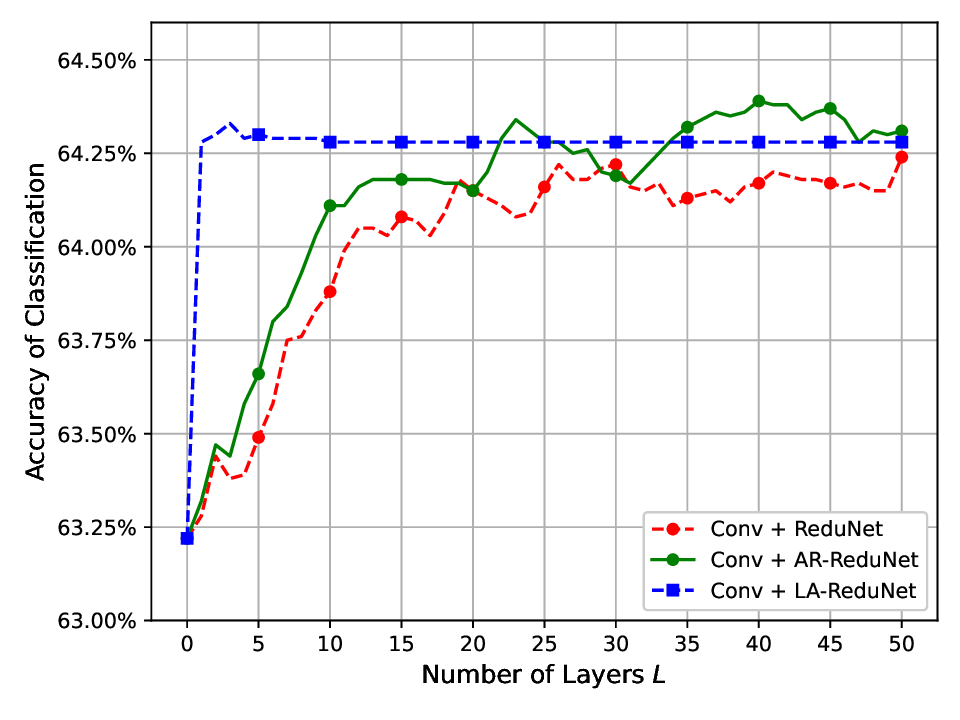}
    \caption{}  
    \label{fig:7_b}
    \end{subfigure}

    \vspace{0.5em}
    \caption{Trends of classification accuracy with the number of iterative layers $L$ for ReduNet, AR-ReduNet, and LA-ReduNet: (a) CIFAR-10; (b) CIFAR-100. Notice that the accuracy at $L=0$ corresponds to the result obtained by convolutional module.}
    \label{fig:7}
\end{figure}

\begin{figure}[t]
    \centering
    \captionsetup[subfigure]{labelformat=parens}

    \begin{subfigure}{0.49\textwidth}
    \centering
    \includegraphics[width=\linewidth]{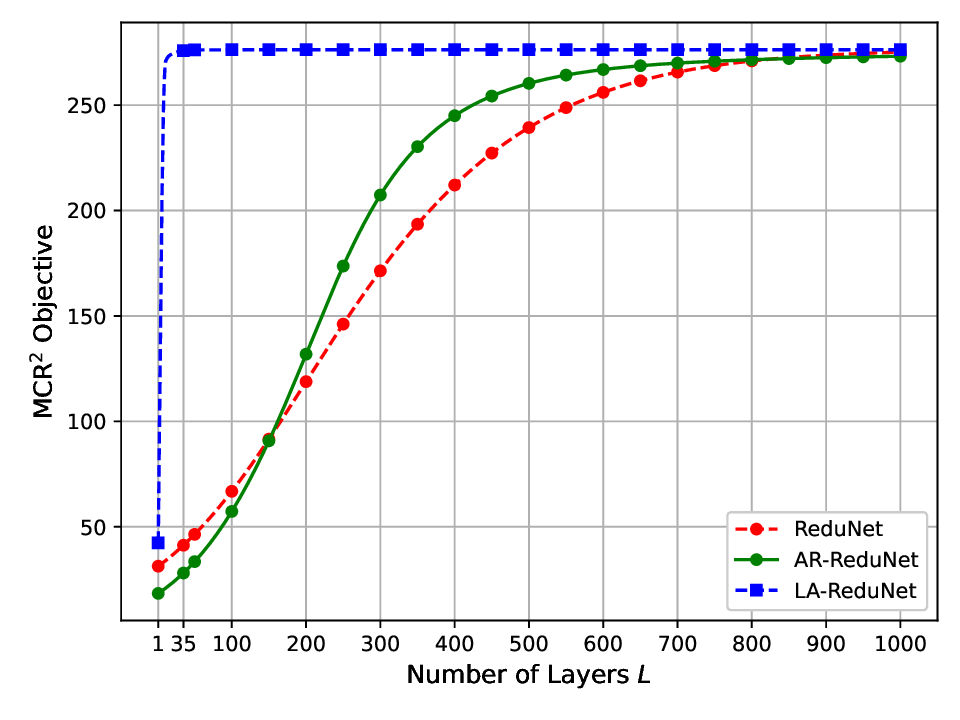}
    \caption{}  
    \label{fig:7_1_a}
    \end{subfigure}
    \hfill
    \begin{subfigure}{0.49\textwidth}
    \centering
    \includegraphics[width=\linewidth]{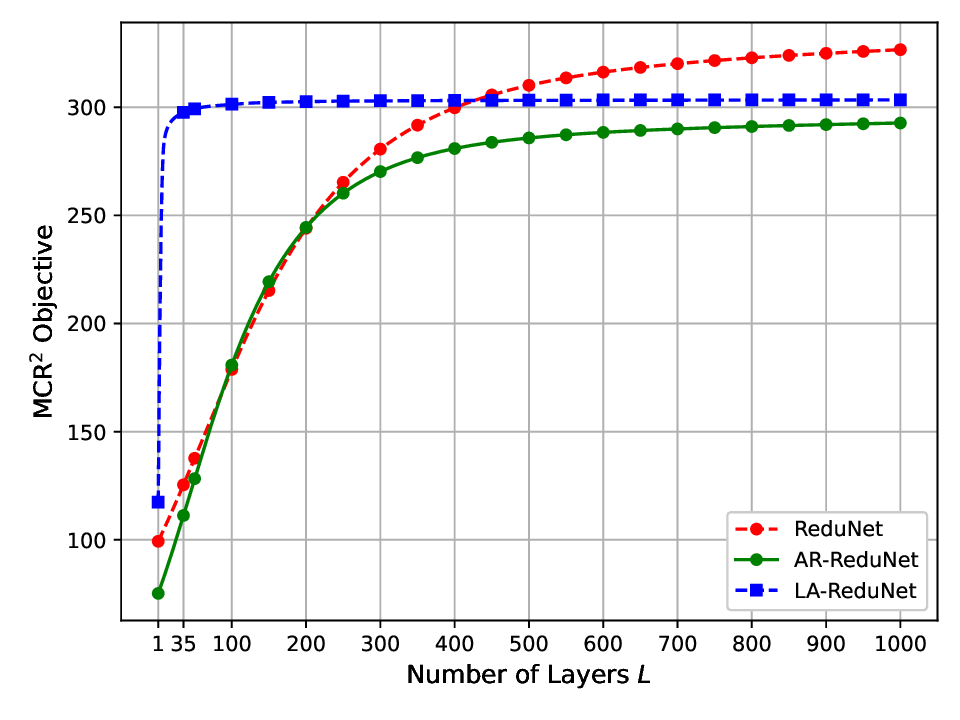}
    \caption{}  
    \label{fig:7_1_b}
    \end{subfigure}

    \vspace{0.5em}
    \caption{Curves of MCR$^2$ objective values during training: (a) CIFAR-10; (b) CIFAR-100. }
    \label{fig:7_1}
\end{figure}

Although the convolutional module improves classification accuracy on CIFAR-10 and CIFAR-100, Fig. \ref{fig:7_2_a} and Fig. \ref{fig:7_3_a} demonstrate that the features extracted by this module still fail to fully separate some classes. As illustrated in Fig. \ref{fig:7_2_f} and Fig. \ref{fig:7_3_f}, it can be observed that LA-ReduNet effectively learns more discriminative features. Compared with features obtained solely by convolutional modules, the features extracted by ReduNet ($L=1000$), AR-ReduNet ($L=1000$), and LA-ReduNet ($L=35$) exhibit much clearer boundaries. However, when the network has fewer layers, the features extracted by LA-ReduNet are more discriminative than those learned by ReduNet and AR-ReduNet. This demonstrates that LA-ReduNet can extract highly discriminative features with very few layers, greatly reducing the parameter storage relative to ReduNet and AR-ReduNet.

\begin{figure}[t!]
    \centering
    \captionsetup[subfigure]{labelformat=parens}

    \begin{subfigure}{0.49\textwidth}
    \centering
    \includegraphics[width=\linewidth]{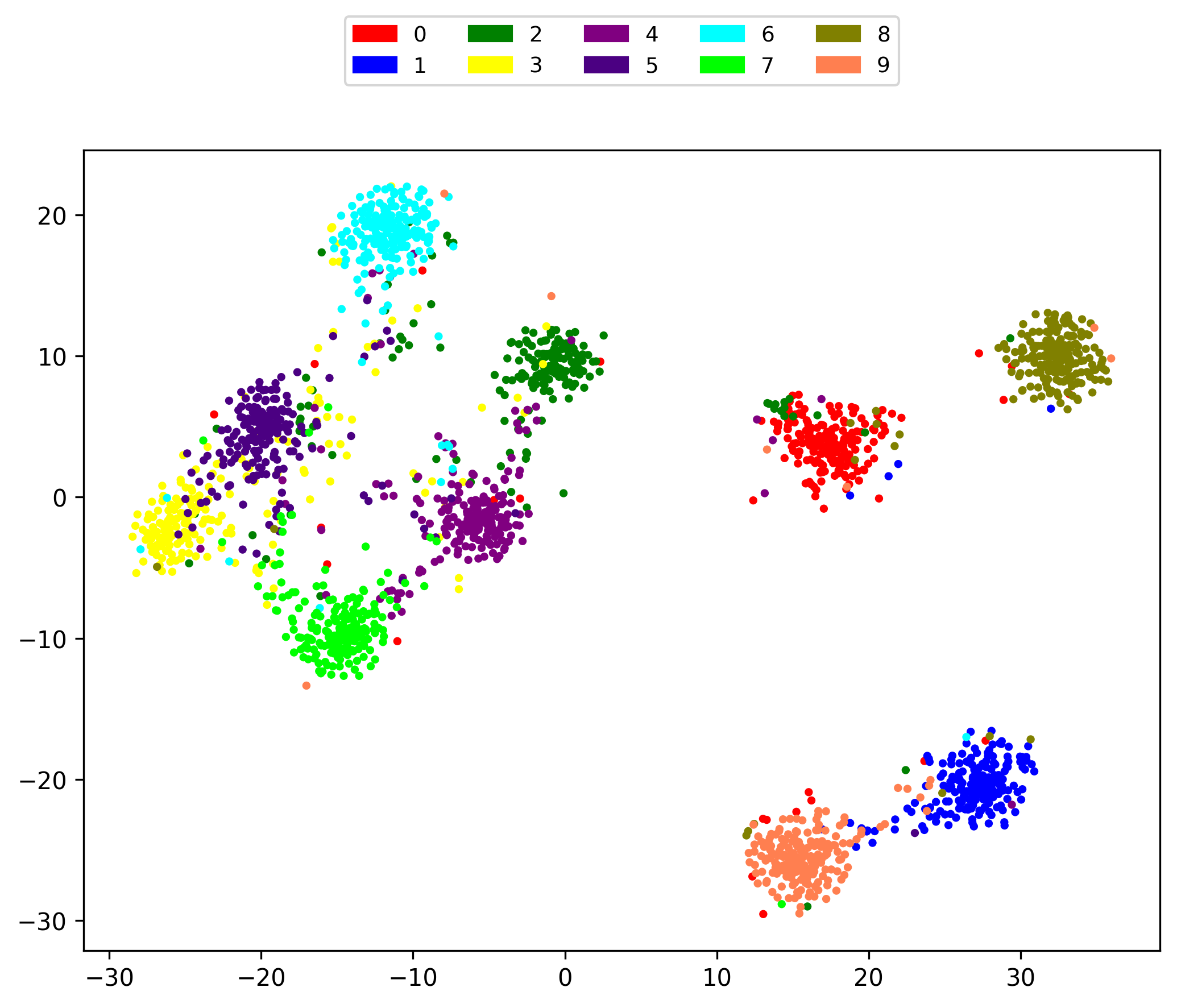}
    \caption{Conv only}
    \label{fig:7_2_a}
    \end{subfigure}
    \hfill
    \begin{subfigure}{0.49\textwidth}
    \centering
    \includegraphics[width=\linewidth]{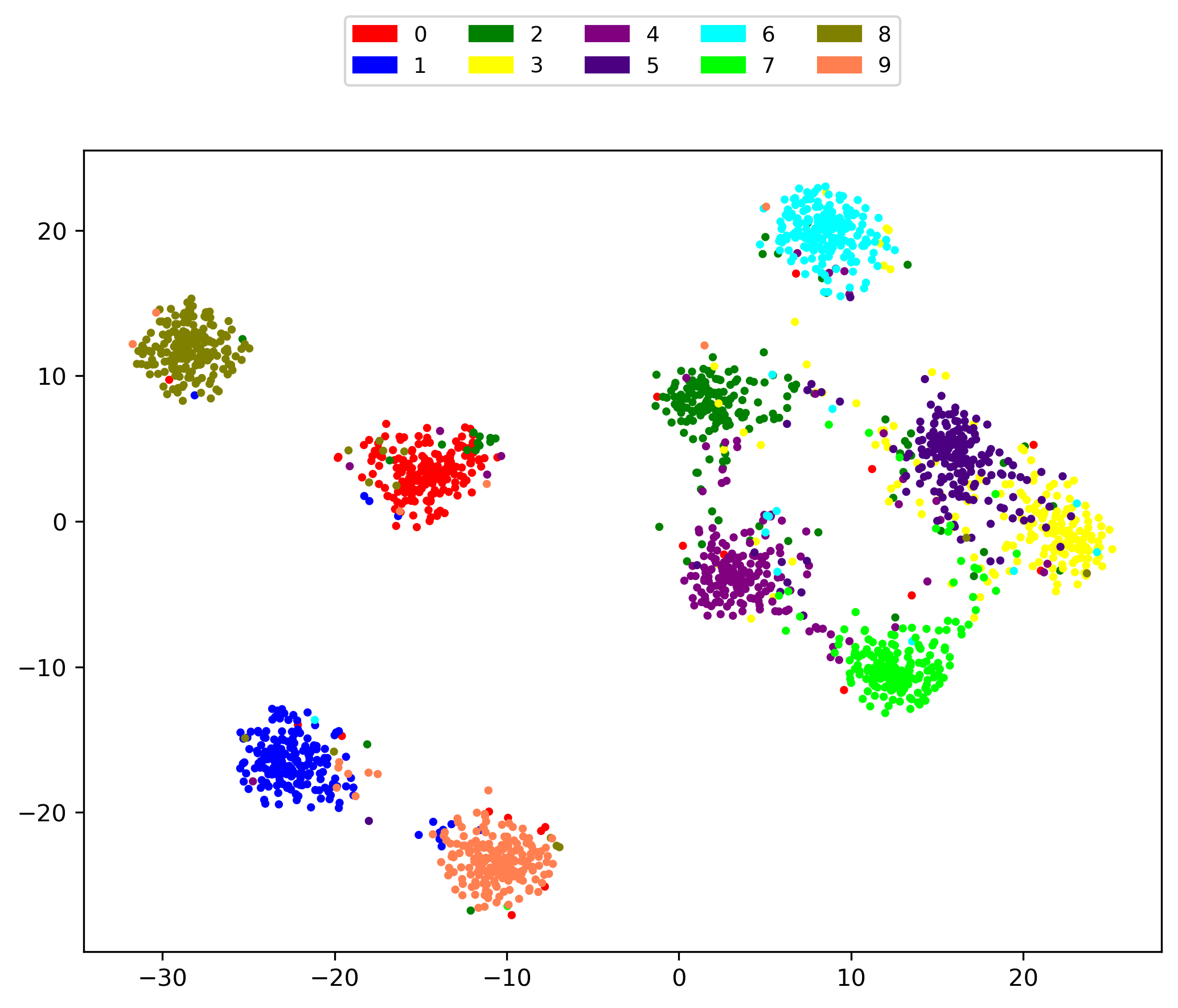}
    \caption{Conv + ReduNet (35 layers)}
    \label{fig:7_2_b}
    \end{subfigure}

    \vspace{0.5em}

    \begin{subfigure}{0.49\textwidth}
    \centering
    \includegraphics[width=\linewidth]{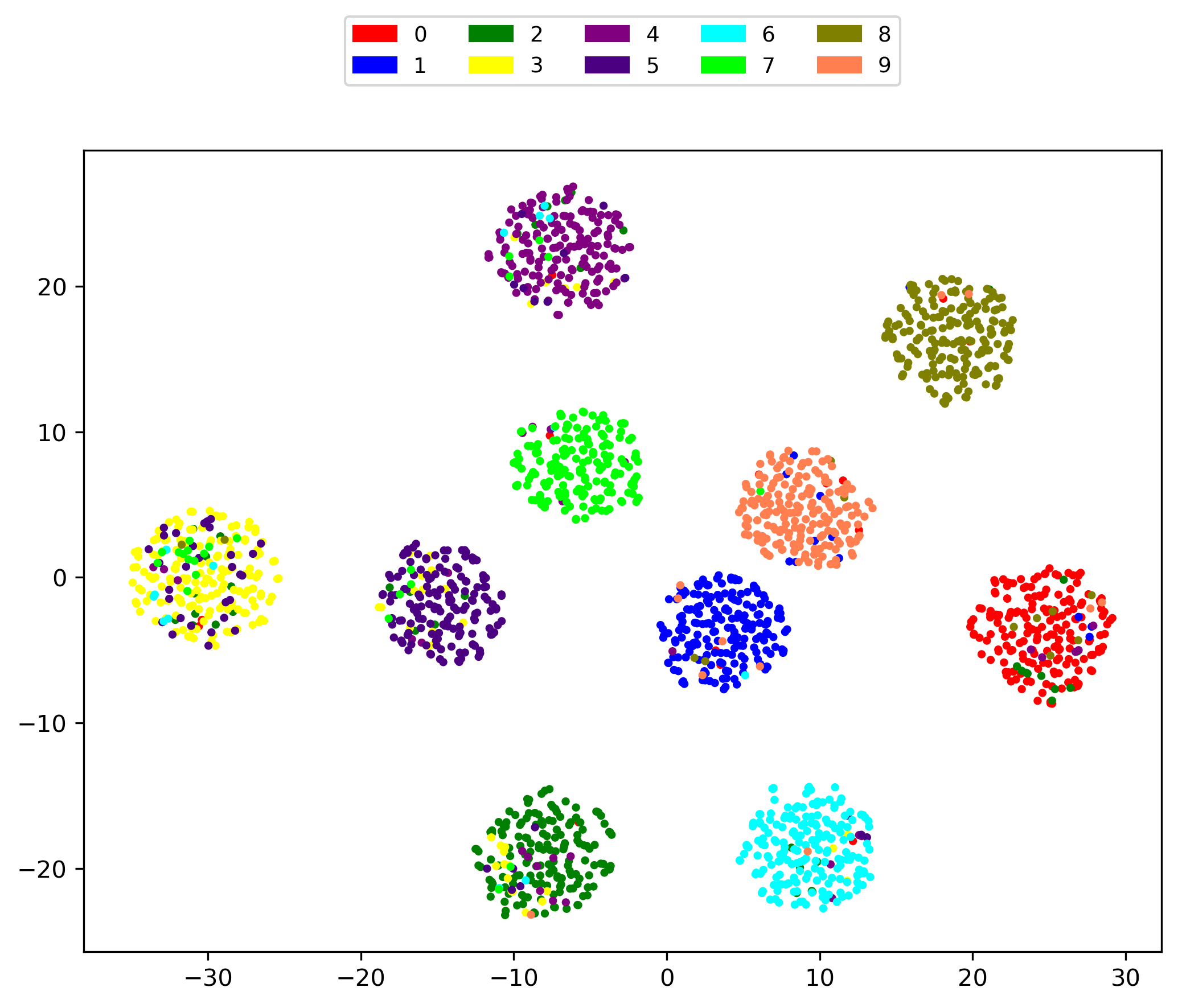}
    \caption{Conv + ReduNet (1000 layers)}
    \label{fig:7_2_c}
    \end{subfigure}
    \hfill
    \begin{subfigure}{0.49\textwidth}
    \centering
    \includegraphics[width=\linewidth]{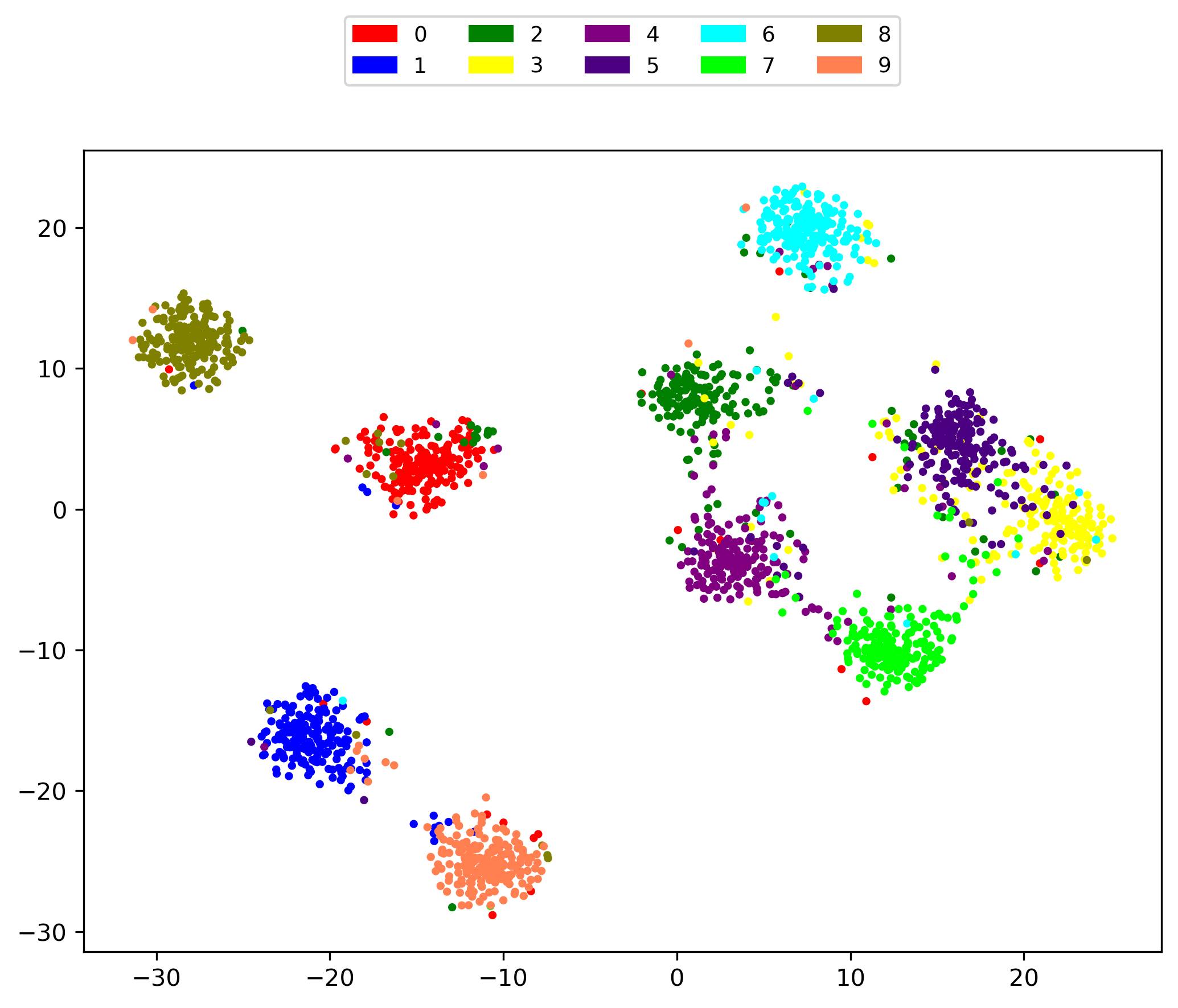}
    \caption{Conv + AR-ReduNet (35 layers)}
    \label{fig:7_2_d}
    \end{subfigure}

    \vspace{0.5em}

    \begin{subfigure}{0.49\textwidth}
    \centering
    \includegraphics[width=\linewidth]{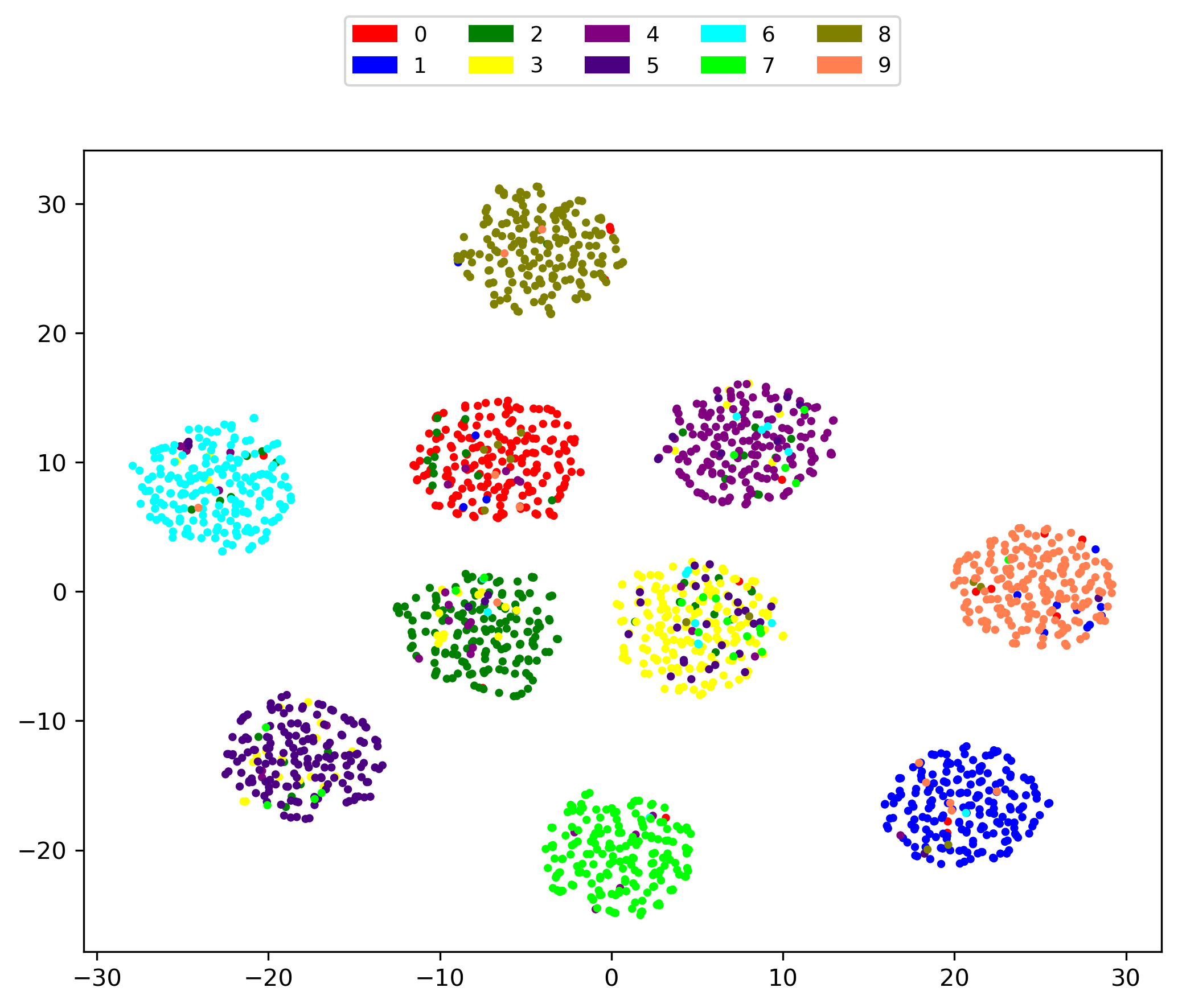}
    \caption{Conv + AR-ReduNet (1000 layers)}
    \label{fig:7_2_e}
    \end{subfigure}
    \hfill
    \begin{subfigure}{0.49\textwidth}
    \centering
    \includegraphics[width=\linewidth]{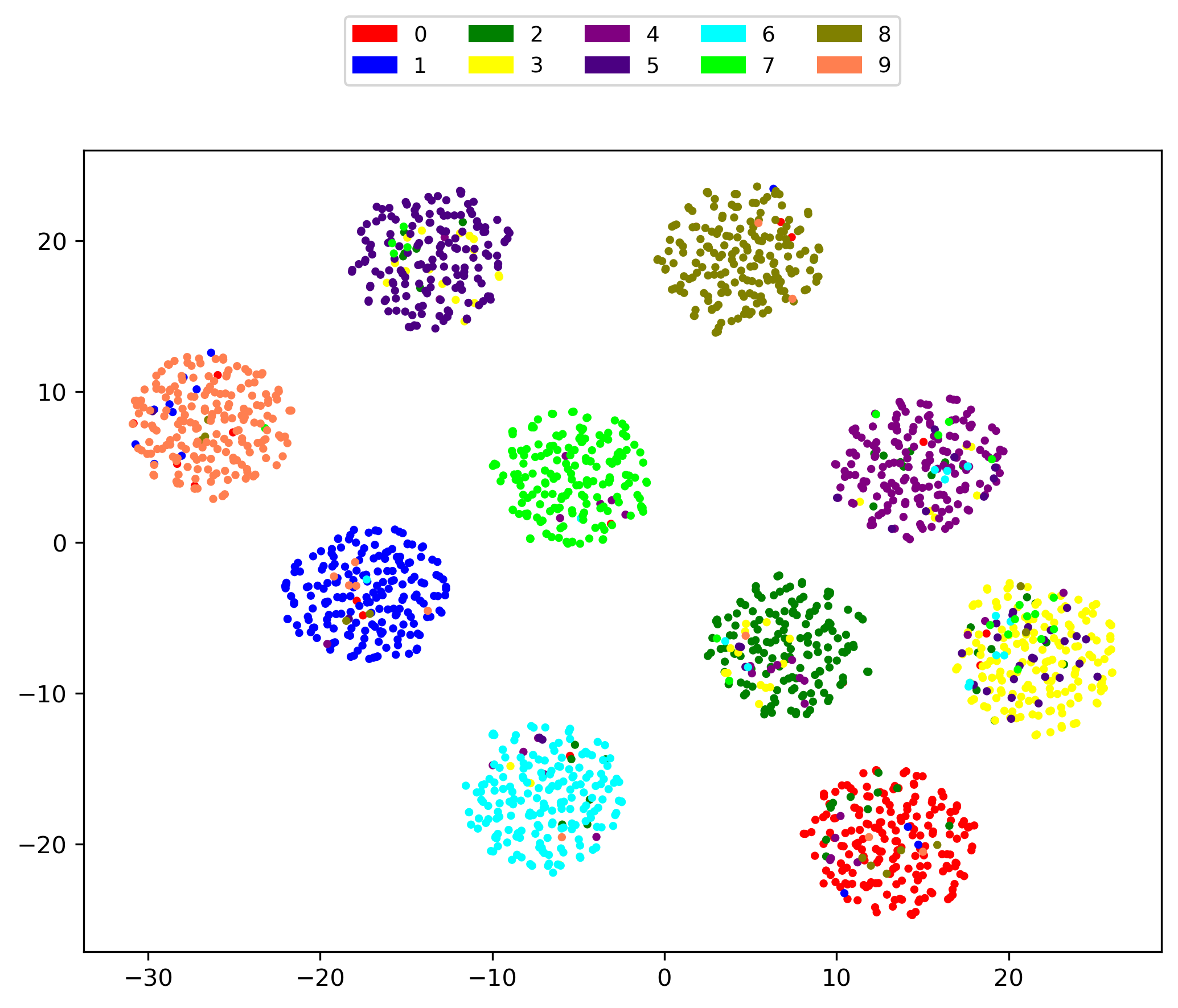}
    \caption{Conv + LA-ReduNet (35 layers)}
    \label{fig:7_2_f}
    \end{subfigure}

    \caption{t-SNE visualization of learned features on the CIFAR-10 dataset.}
    \label{fig:7_2}
\end{figure}

\begin{figure}[p]
    \centering
    \captionsetup[subfigure]{labelformat=parens}

    \begin{subfigure}{0.49\textwidth}
    \centering
    \includegraphics[width=\linewidth]{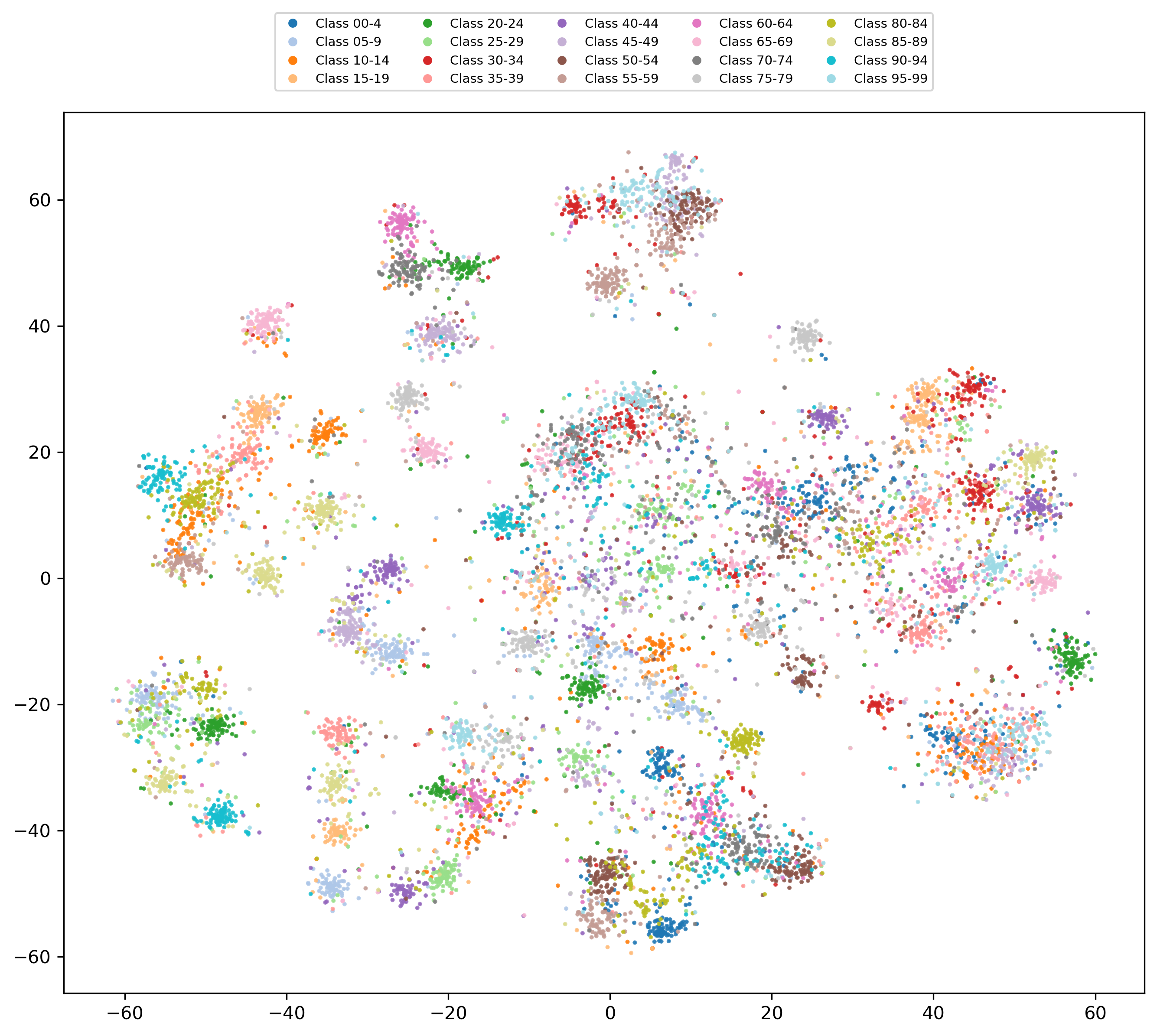}
    \caption{Conv only}
    \label{fig:7_3_a}
    \end{subfigure}
    \hfill
    \begin{subfigure}{0.49\textwidth}
    \centering
    \includegraphics[width=\linewidth]{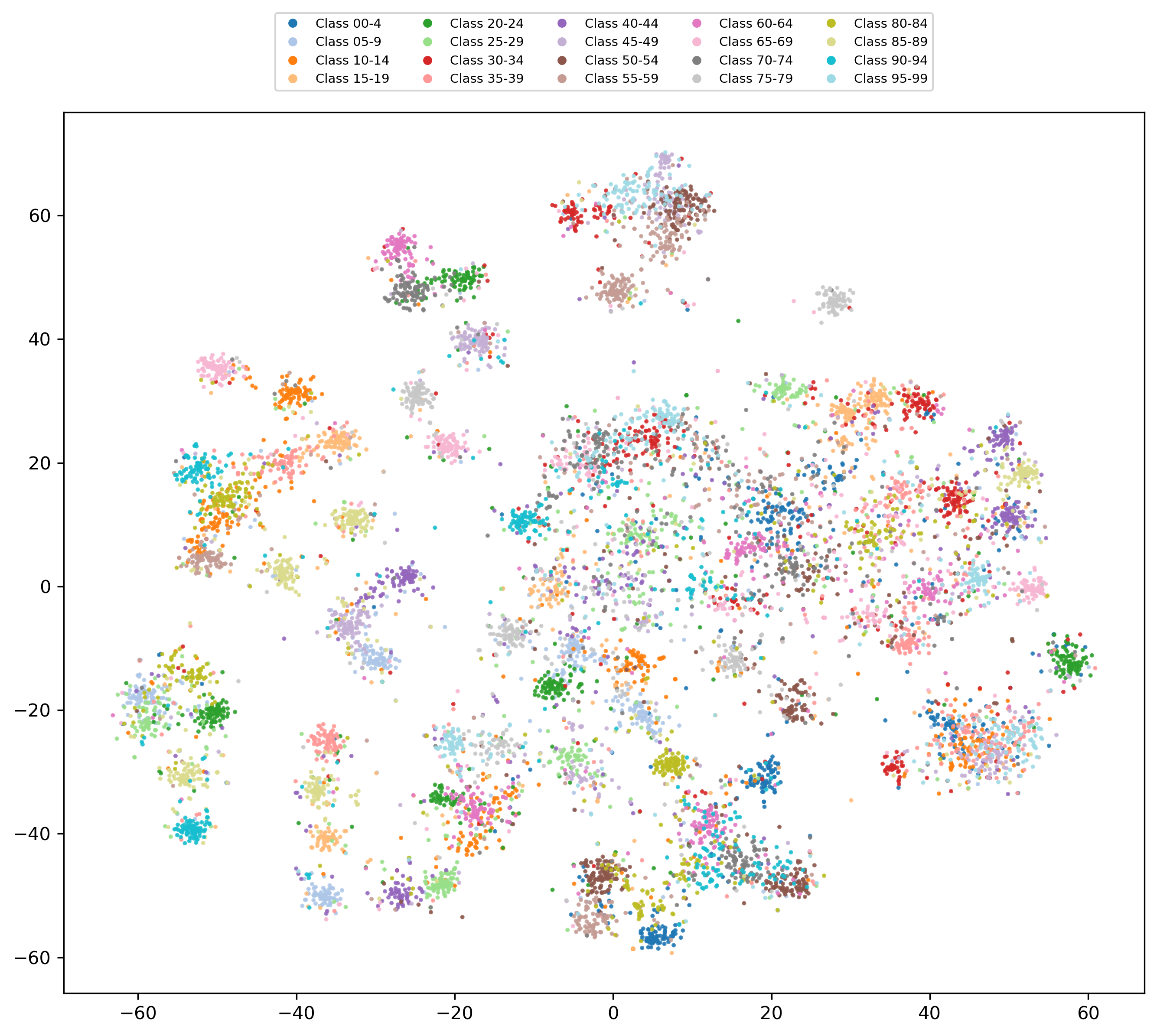}
    \caption{Conv + ReduNet (35 layers)}
    \label{fig:7_3_b}
    \end{subfigure}

    \vspace{0.5em}

    \begin{subfigure}{0.49\textwidth}
    \centering
    \includegraphics[width=\linewidth]{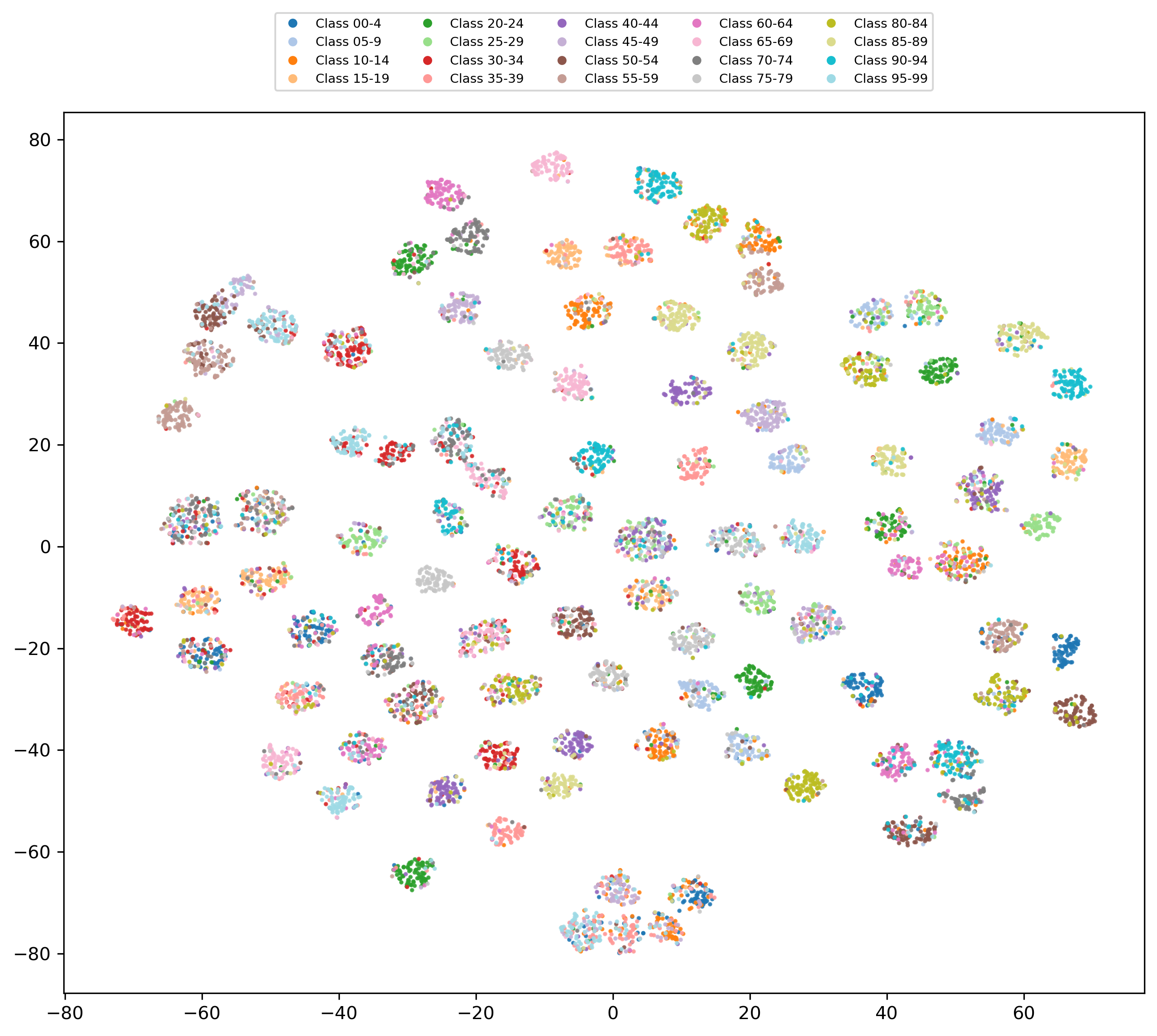}
    \caption{Conv + ReduNet (1000 layers)}
    \label{fig:7_3_c}
    \end{subfigure}
    \hfill
    \begin{subfigure}{0.49\textwidth}
    \centering
    \includegraphics[width=\linewidth]{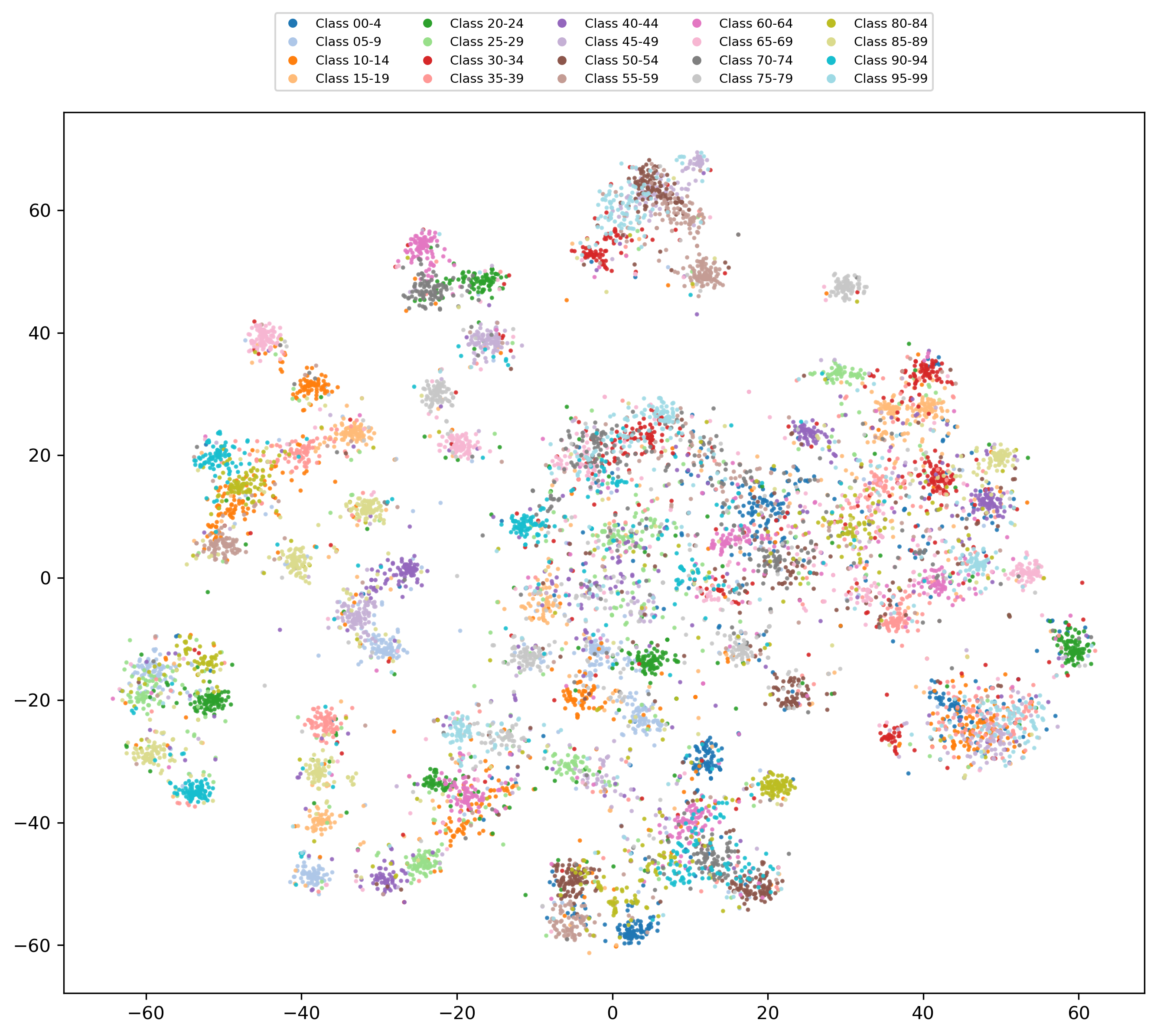}
    \caption{Conv + AR-ReduNet (35 layers)}
    \label{fig:7_3_d}
    \end{subfigure}

    \vspace{0.5em}

    \begin{subfigure}{0.49\textwidth}
    \centering
    \includegraphics[width=\linewidth]{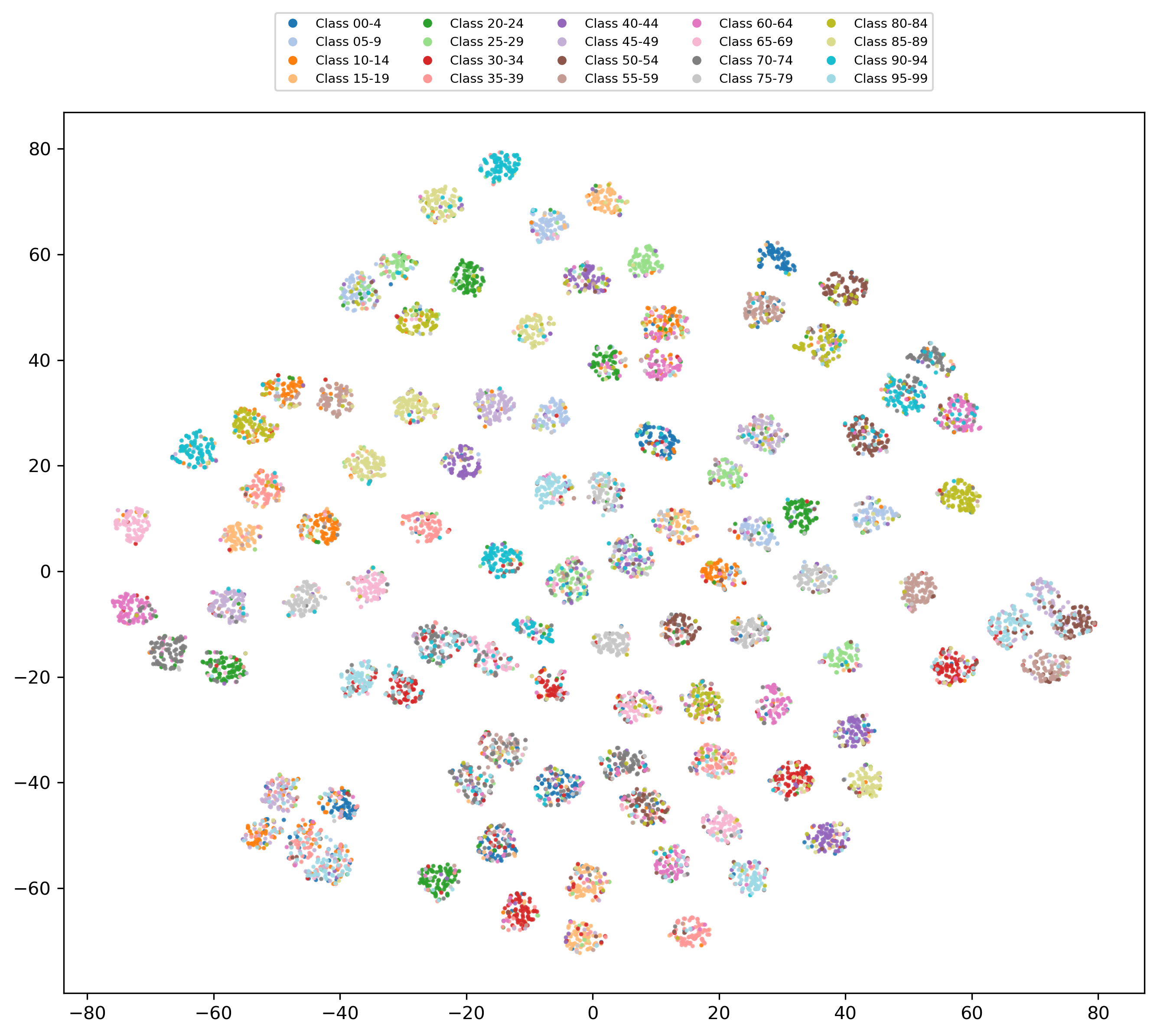}
    \caption{Conv + AR-ReduNet (1000 layers)}
    \label{fig:7_3_e}
    \end{subfigure}
    \hfill
    \begin{subfigure}{0.49\textwidth}
    \centering
    \includegraphics[width=\linewidth]{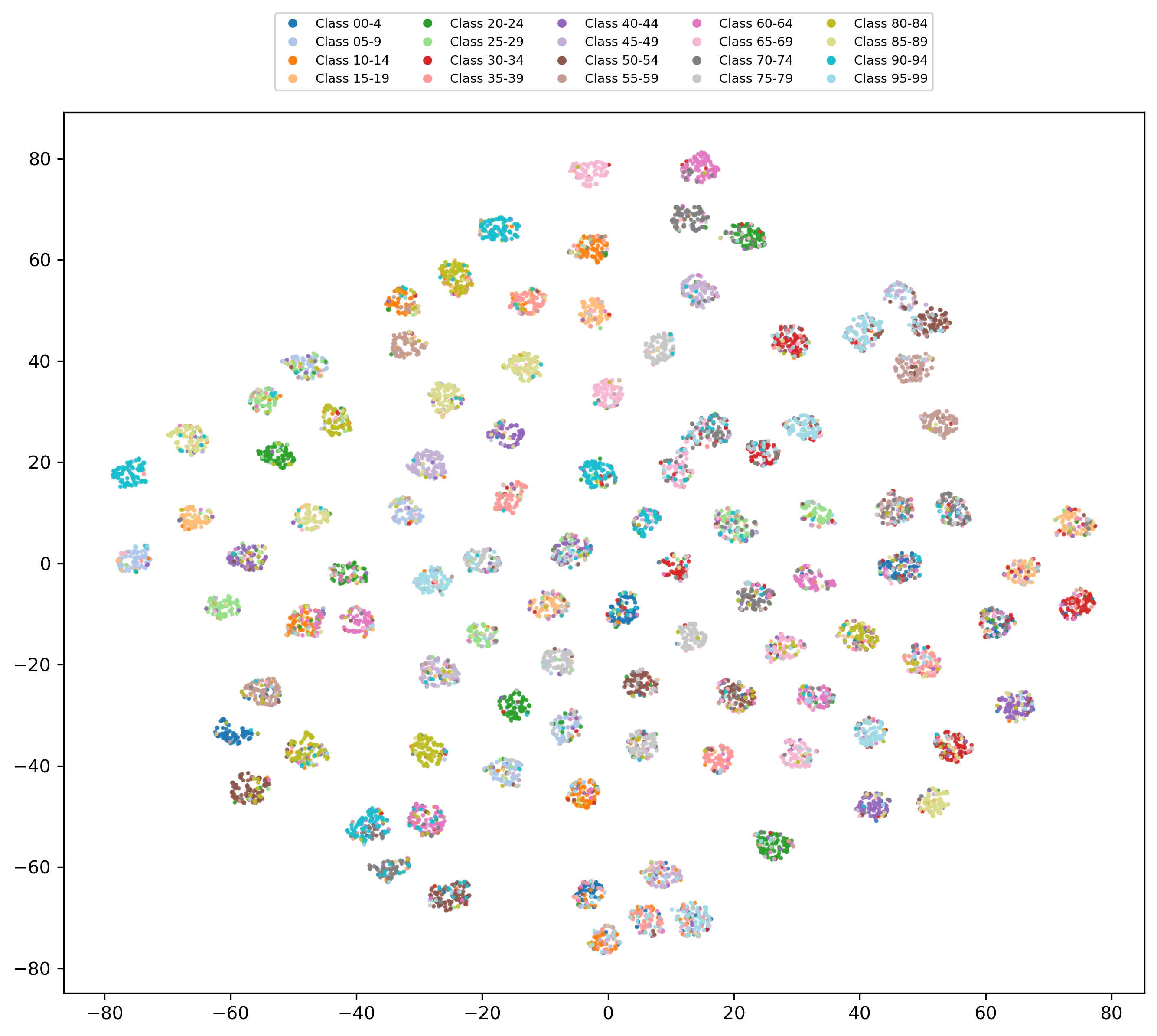}
    \caption{Conv + LA-ReduNet (35 layers)}
    \label{fig:7_3_f}
    \end{subfigure}

    \caption{t-SNE visualization of learned features on the CIFAR-100 dataset.}
    \label{fig:7_3}
\end{figure}

\subsubsection{CINIC-10}
Consistent with the experiments on CIFAR-10 and CIFAR-100, we evaluate the features extracted by ReduNet, AR-ReduNet and LA-ReduNet on CINIC-10. The hyperparameters are set to the values specified above. As illustrated in Fig. \ref{fig:5}, LA-ReduNet achieves substantially faster classification-accuracy convergence than both ReduNet and AR-ReduNet, while also attaining a higher classification accuracy. In particular, our method reaches an accuracy above 78.6$\%$ within only three iterations, whereas the baseline models remain at approximately 77$\%$ even after 50 iterations. In addition, Fig. \ref{fig:5_1} shows that LA-ReduNet achieves objective convergence after approximately 35 layers. Similarly, after objective convergence is reached, the MCR$^2$ objective values of LA-ReduNet occasionally exhibit slight layer-to-layer decreases (first occurring at layer 68). In comparison, ReduNet and AR-ReduNet require nearly 1000 layers to achieve objective convergence. Furthermore, consistent with our experiments on other datasets, we conduct t-SNE visualization for features extracted by all methods on the CINIC-10 dataset to intuitively compare inter-class separability and intra-class compactness. As illustrated in Fig. \ref{fig:5_2}, similar to the experimental results on CIFAR-10 and CIFAR-100, LA-ReduNet is capable of extracting features with superior separability. Notably, unlike CIFAR-10, CINIC-10 contains both CIFAR-10 images and downsampled ImageNet images. Consequently, the features extracted solely by the convolutional layers exhibit weaker inter-class separability. Nevertheless, LA-ReduNet can still transform these features into clearly separable representations using substantially fewer layers than ReduNet and AR-ReduNet.

\begin{figure}[t]
    \centering
    \includegraphics[scale=0.6]{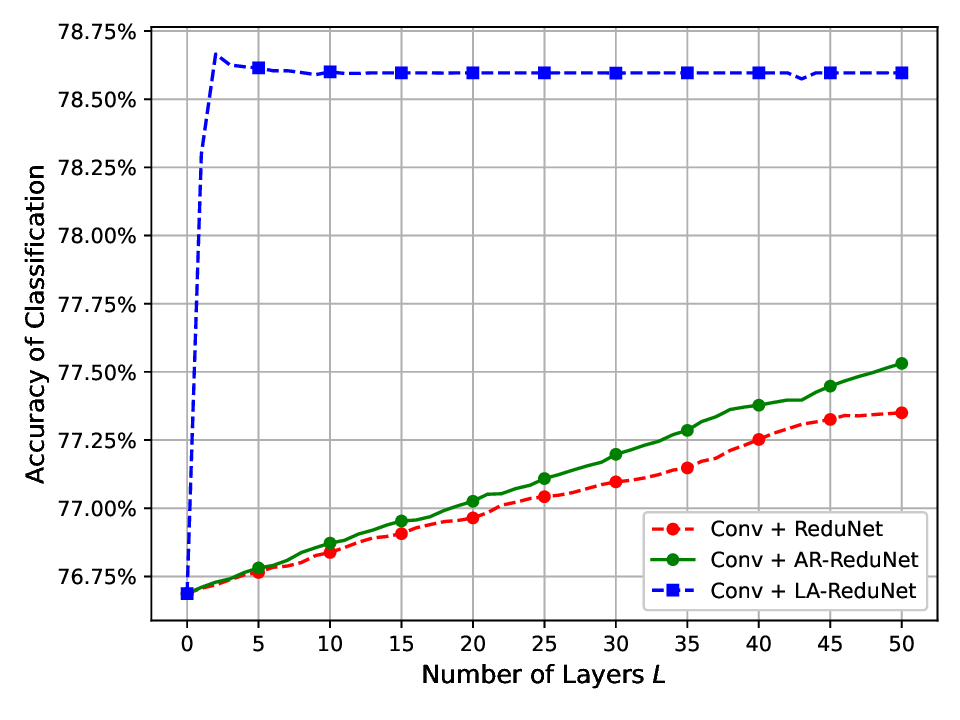}
    \caption{The trends of classification accuracy versus the number of iterative layers L for ReduNet, AR-ReduNet, and LA-ReduNet on the CINIC-10 dataset.}
    \label{fig:5}
\end{figure}

\begin{figure}[t!]
    \centering
    \includegraphics[scale=0.6]{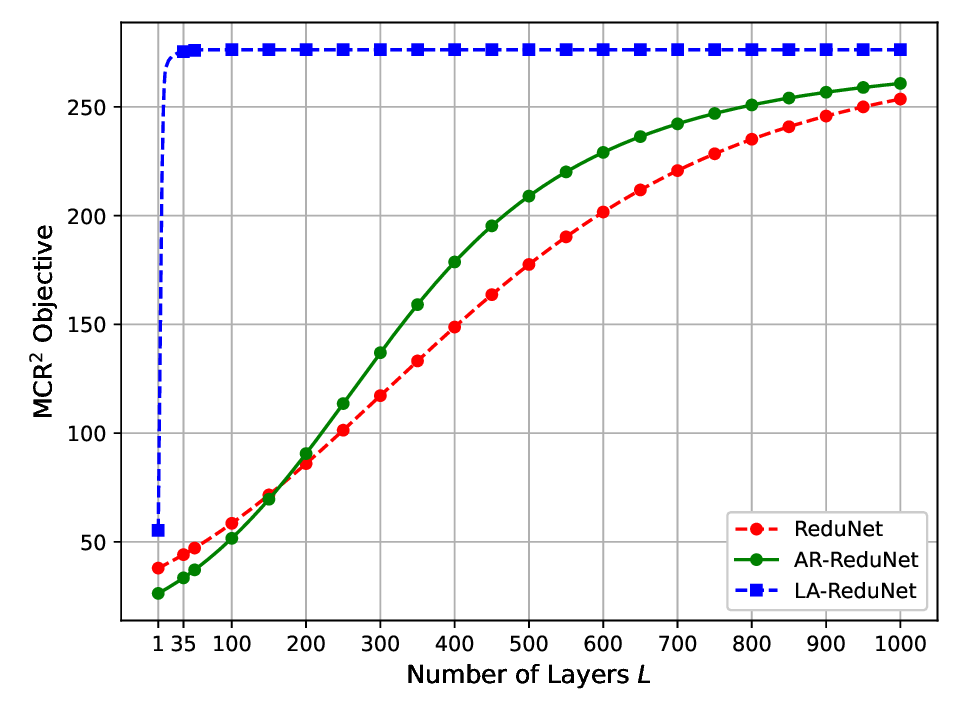}
    \caption{Curves of MCR$^2$ objective values during training on CINIC-10.}
    \label{fig:5_1}
\end{figure}

\begin{figure}[t!]
    \centering
    \captionsetup[subfigure]{labelformat=parens}

    \begin{subfigure}{0.49\textwidth}
    \centering
    \includegraphics[width=\linewidth]{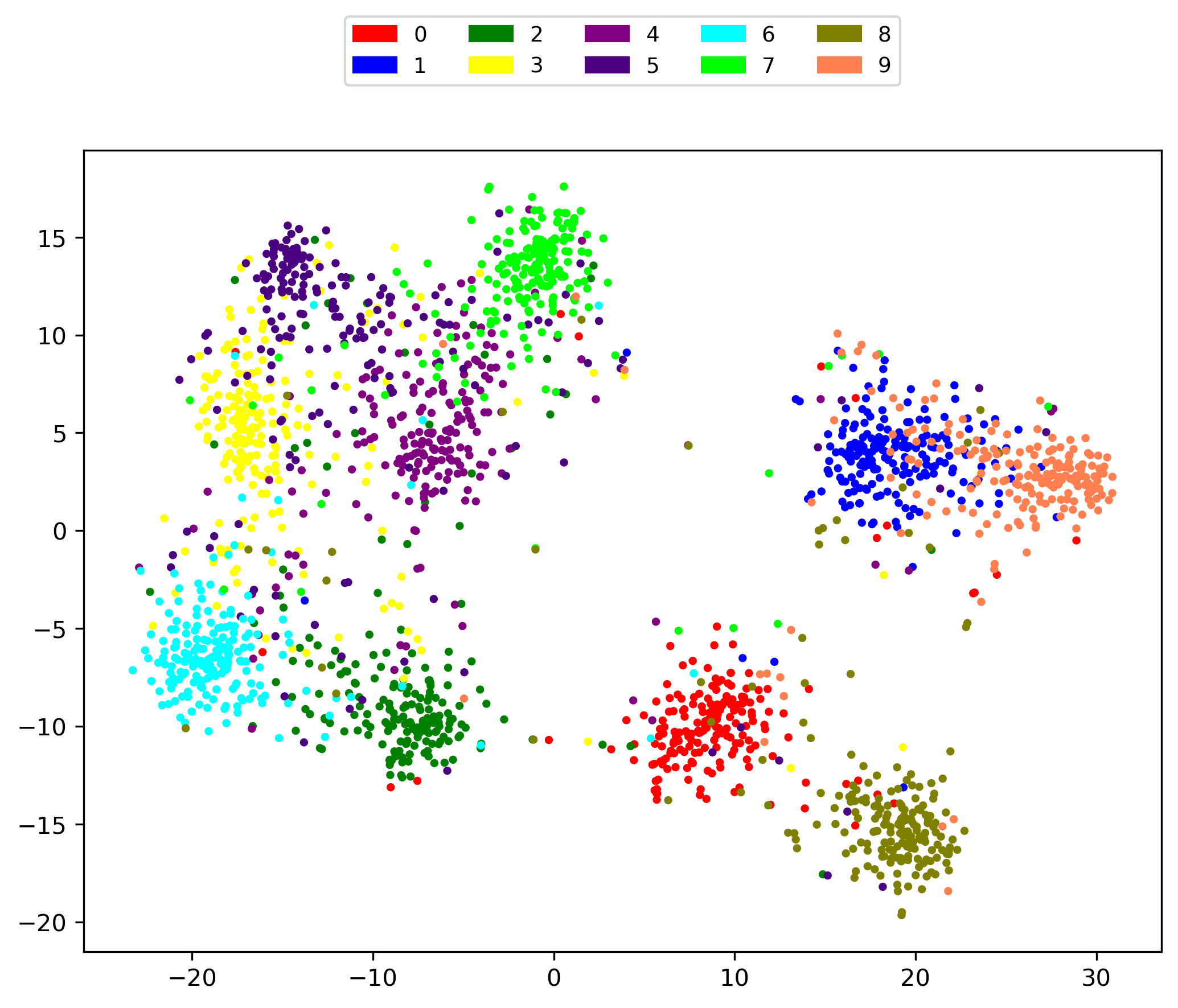}
    \caption{Conv only}
    \label{fig:5_2_a}
    \end{subfigure}
    \hfill
    \begin{subfigure}{0.49\textwidth}
    \centering
    \includegraphics[width=\linewidth]{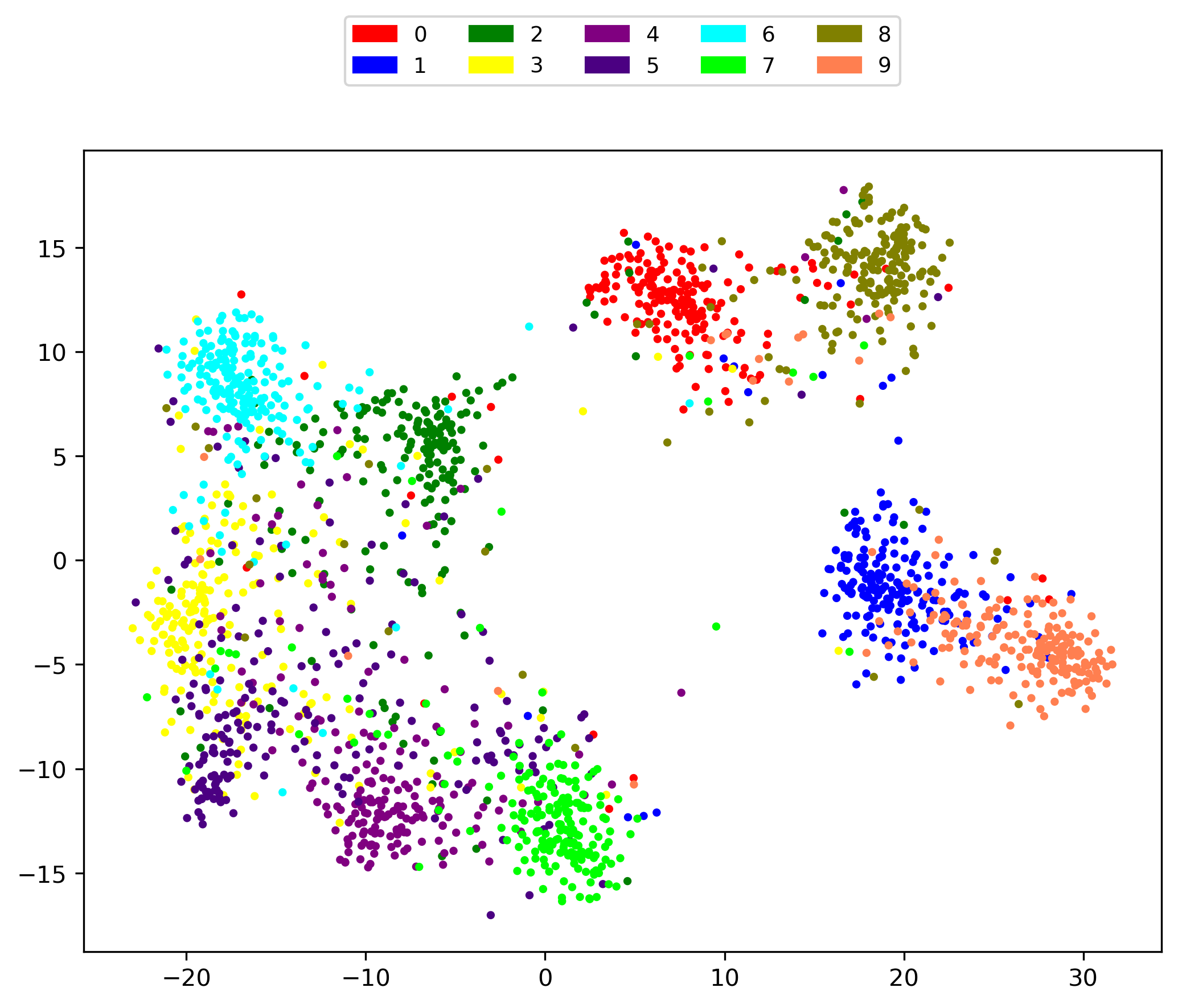}
    \caption{Conv + ReduNet (35 layers)}
    \label{fig:5_2_b}
    \end{subfigure}

    \vspace{0.5em}

    \begin{subfigure}{0.49\textwidth}
    \centering
    \includegraphics[width=\linewidth]{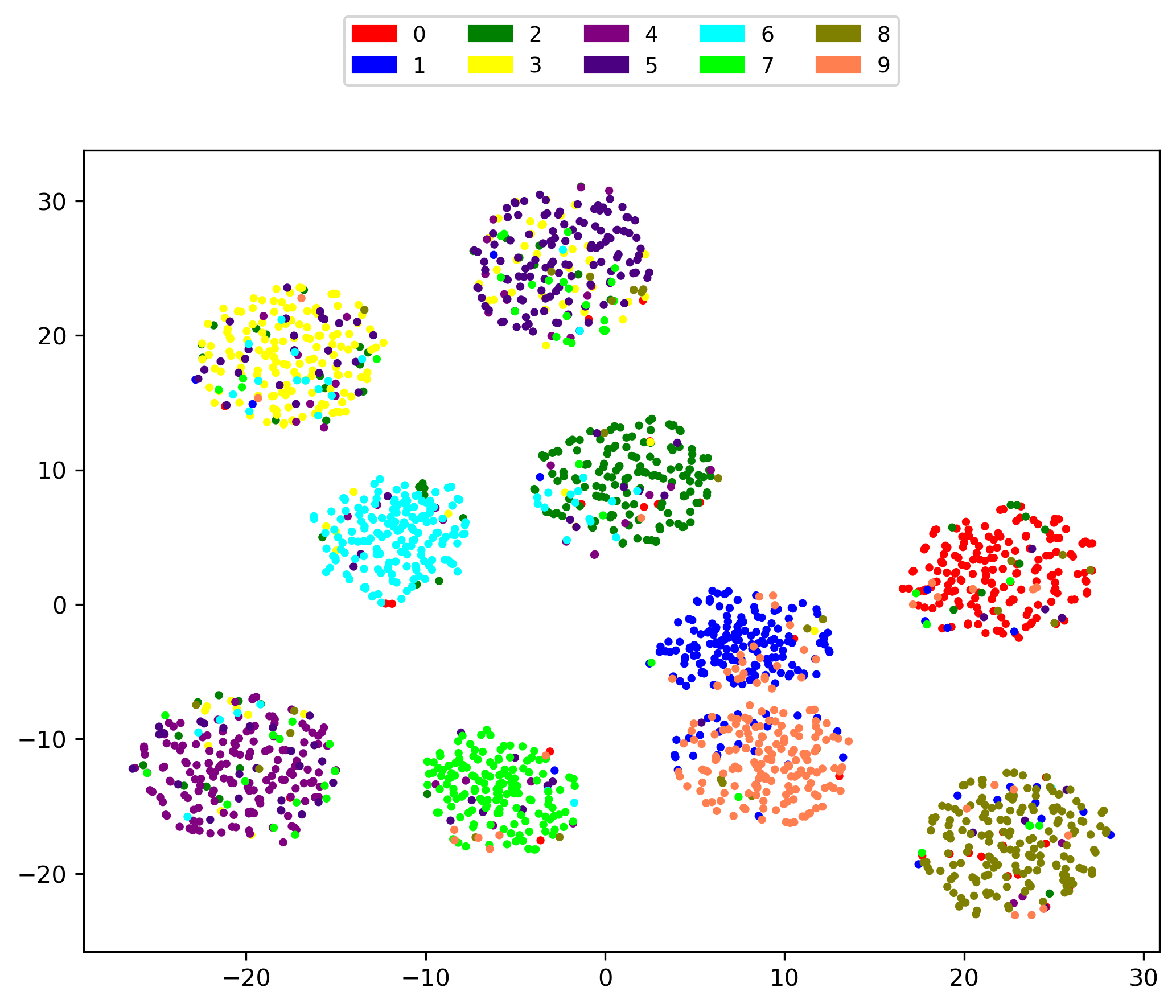}
    \caption{Conv + ReduNet (1000 layers)}
    \label{fig:5_2_c}
    \end{subfigure}
    \hfill
    \begin{subfigure}{0.49\textwidth}
    \centering
    \includegraphics[width=\linewidth]{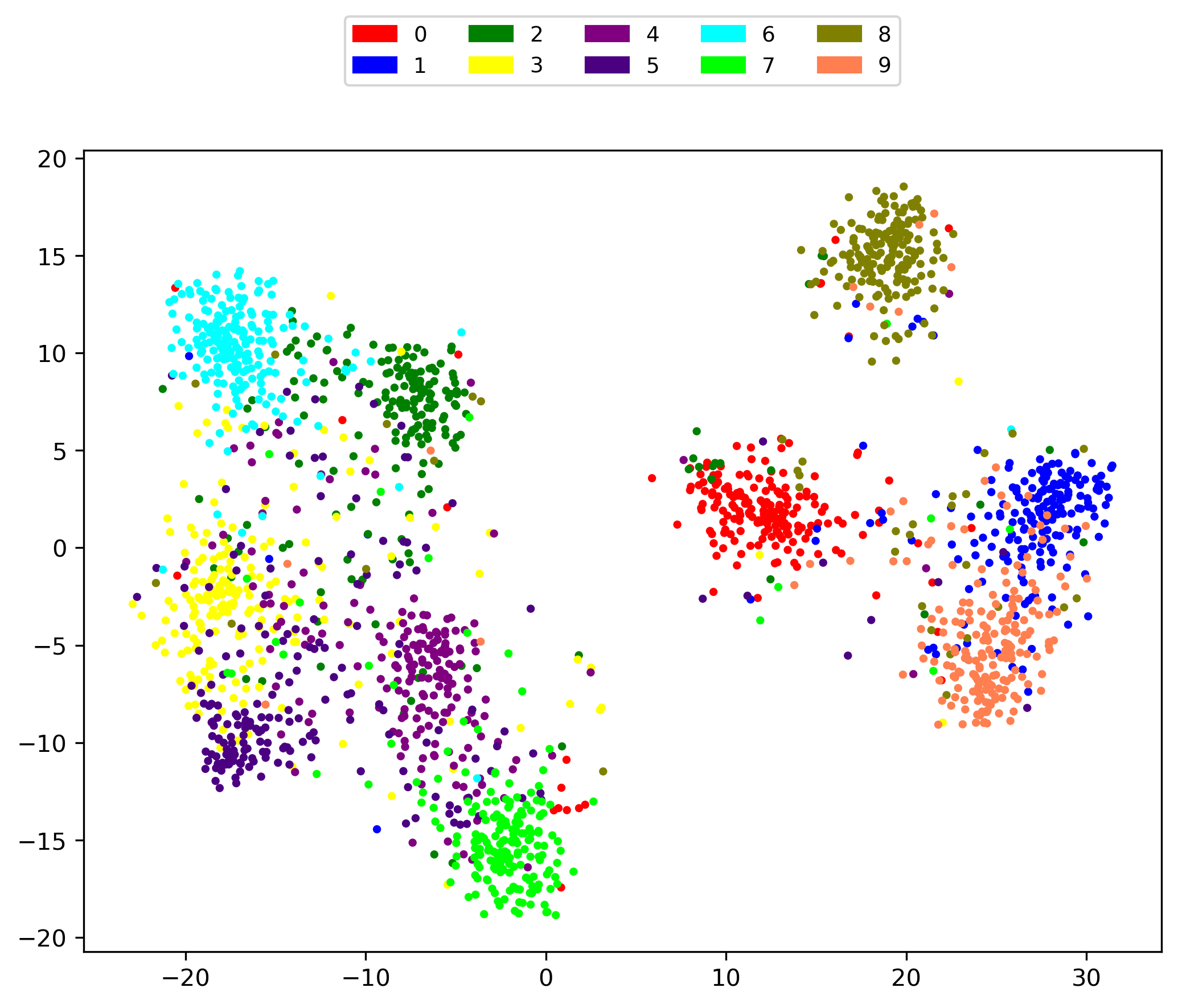}
    \caption{Conv + AR-ReduNet (35 layers)}
    \label{fig:5_2_d}
    \end{subfigure}

    \vspace{0.5em}

    \begin{subfigure}{0.49\textwidth}
    \centering
    \includegraphics[width=\linewidth]{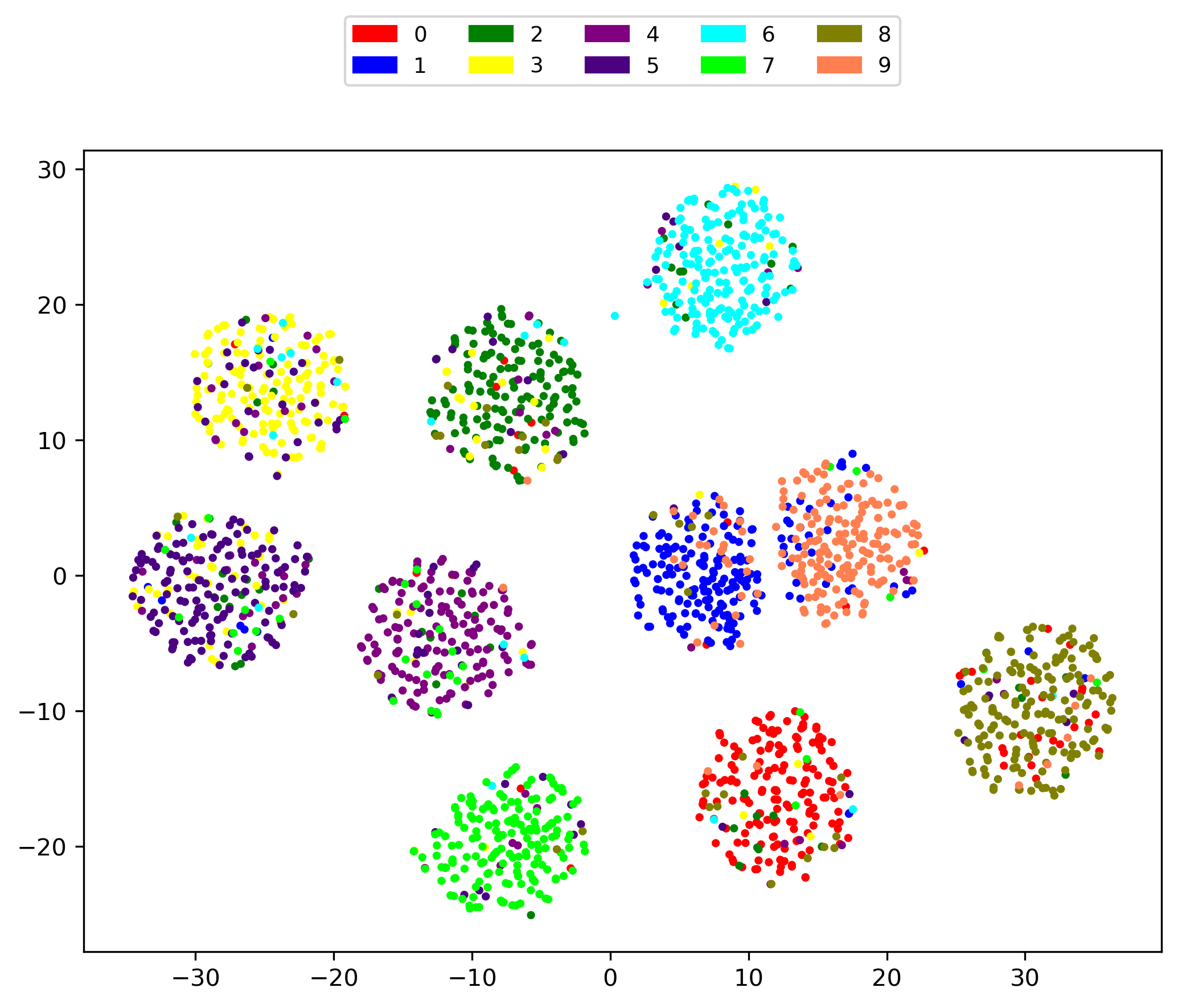}
    \caption{Conv + AR-ReduNet (1000 layers)}
    \label{fig:5_2_e}
    \end{subfigure}
    \hfill
    \begin{subfigure}{0.49\textwidth}
    \centering
    \includegraphics[width=\linewidth]{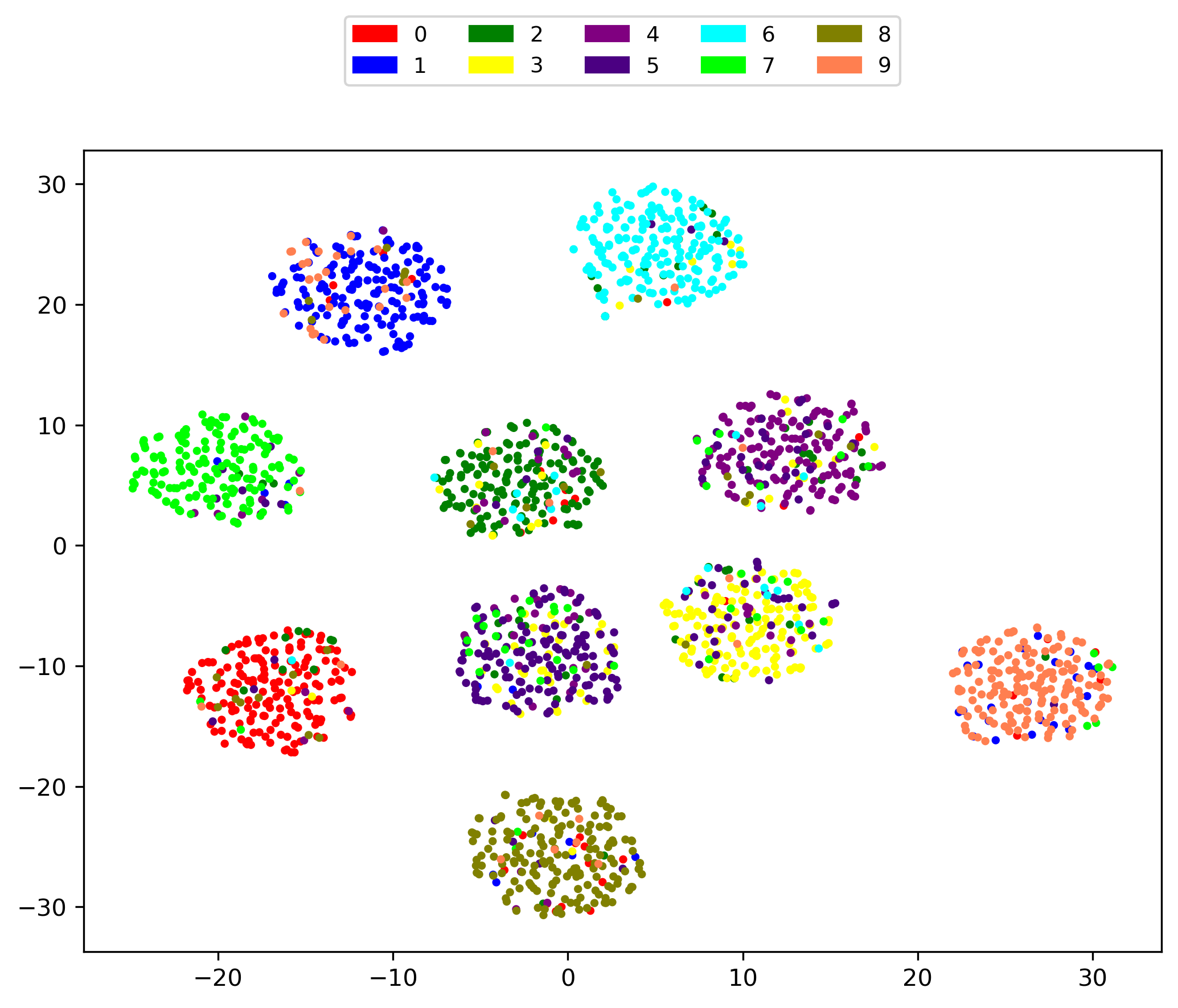}
    \caption{Conv + LA-ReduNet (35 layers)}
    \label{fig:5_2_f}
    \end{subfigure}

    \caption{t-SNE visualization of learned features on the CINIC-10 dataset.}
    \label{fig:5_2}
\end{figure}

In summary, we compare the parameter storage of the unfolded modules of LA-ReduNet, ReduNet, and AR-ReduNet, while excluding the convolutional front-end, which is shared by all three methods. For each layer, the stored parameters correspond to the expansion and compression matrices defined in \eqref{cal:E} and \eqref{cal:Cj}. In the experiments, since the common convolutional front-end reduces the feature dimension to $d$, the corresponding matrices in the unfolded modules are of size $d\times d$, and the three methods have the same parameter storage per layer. The scalar parameters $\alpha$ and $\alpha_j$ are recomputed from the current feature matrix $\bm Z$ at each layer and therefore do not need to be stored as model parameters. As a result, the total parameter storage of the unfolded module is mainly determined by the number of layers. As shown in Fig. \ref{fig:7_1} and Fig. \ref{fig:5_1}, the MCR$^2$ objective of LA-ReduNet reaches a stable value after approximately $35$ layers. Under the original parameter settings of ReduNet and AR-ReduNet, the two baseline methods require approximately $1000$ layers for their respective MCR$^2$ objectives to reach stable values. Consequently, under these settings, the parameter storage of LA-ReduNet is approximately $35/1000\approx 1/29$ of that required by the corresponding unfolded baselines. It is also worth noting that classification-accuracy convergence is generally achieved earlier, typically within approximately 5--10 layers for LA-ReduNet.

\subsection{Ablation Studies and Hyperparameter Analysis}

\begin{figure}[t]
\centering
\captionsetup[subfigure]{labelformat=parens}

\begin{subfigure}{0.47\textwidth}
\centering
\includegraphics[width=\linewidth]{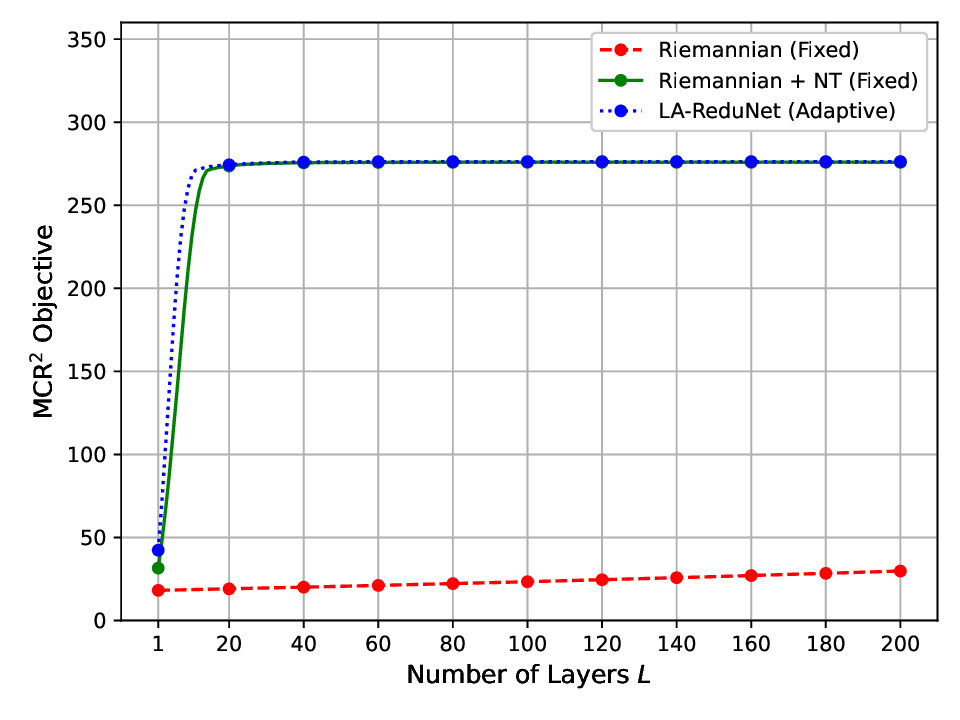}
\caption{CIFAR-10}
\label{fig:15_a}
\end{subfigure}
\hfill
\begin{subfigure}{0.47\textwidth}
\centering
\includegraphics[width=\linewidth]{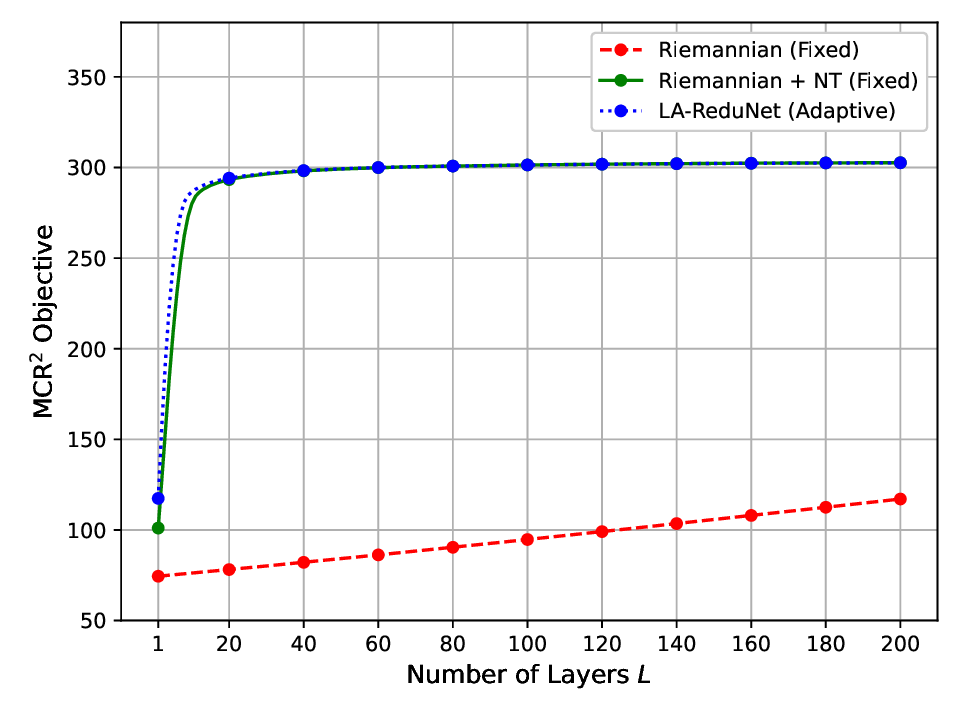}
\caption{CIFAR-100}
\label{fig:15_b}
\end{subfigure}

\vspace{0.5em}

\begin{subfigure}{0.47\textwidth}
\centering
\includegraphics[width=\linewidth]{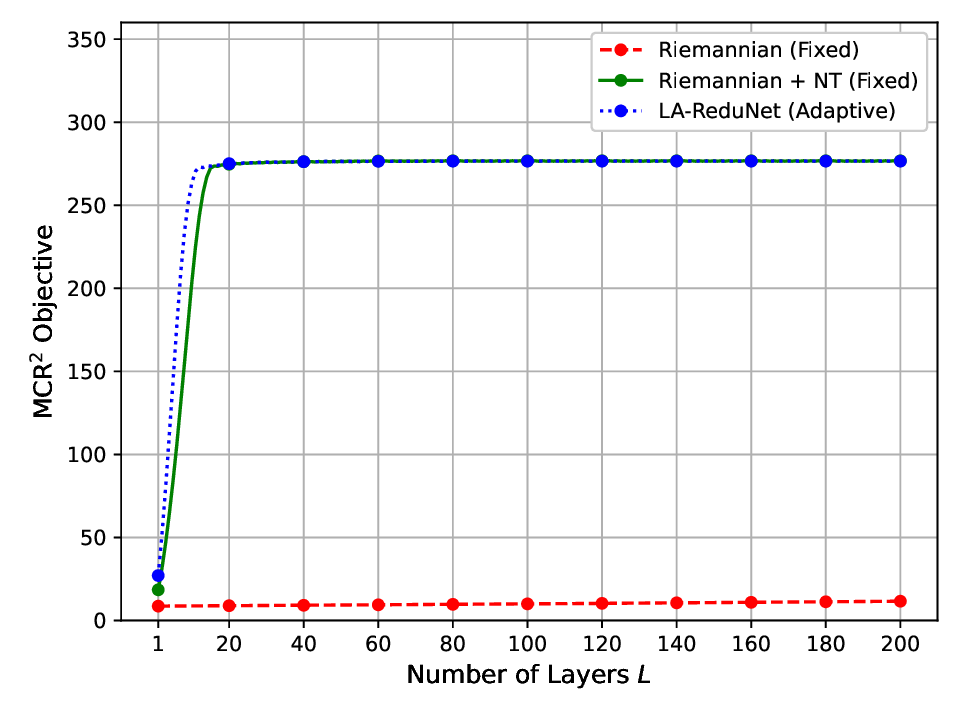}
\caption{CINIC-10}
\label{fig:15_c}
\end{subfigure}
\caption{Effect of different components on the convergence of the MCR$^2$ objective.}
\label{fig:15}
\end{figure}

In this subsection, we first conduct component-wise ablation studies to evaluate the effects of the Riemannian update, the normalization-threshold (NT) mechanism, and the adaptive step-size strategy on the objective convergence of LA-ReduNet. We then investigate the effects of the LA-ReduNet parameters $t_0$ and $\beta$ on objective convergence, and analyze the influence of the loss-weighting parameter $\mu$ in the convolutional module on classification accuracy.

\begin{table}[t!]
  \centering
  \begin{tabular}{lccc}
    \toprule
    Variant
    & Update direction
    & NT
    & Step-size parameter \\
    \midrule
    Riemannian (Fixed)
    & $\bm g_{\mathrm{T}}$
    & No
    & $t=t_0$ \\

    Riemannian + NT (Fixed)
    & $\bm g_{\mathrm{T}}/\|\bm g_{\mathrm{T}}\|_2$
    & Yes
    & $t=t_0$ \\

    LA-ReduNet (Adaptive)
    & $\bm g_{\mathrm{T}}/\|\bm g_{\mathrm{T}}\|_2$
    & Yes
    & Eq. \eqref{Ad_t} \\
    \bottomrule
  \end{tabular}
  \caption{Configurations of the variants used in the component-wise ablation study.}
  \label{tab:ablation_variants}
\end{table}

\subsubsection{Component-wise Ablation Study of LA-ReduNet}

To evaluate the contributions of the key components in LA-ReduNet, we construct three variants by progressively introducing the NT mechanism and the adaptive step-size strategy on top of the basic Riemannian update. The configurations of these variants are summarized in Table \ref{tab:ablation_variants}. As illustrated in Fig. \ref{fig:15}, the basic Riemannian update without the NT mechanism exhibits considerably slower objective convergence. Introducing the NT mechanism markedly improves the convergence behavior by reducing the dependence of the spherical angular progress on the tangent-update magnitude, while the threshold retains the stopping information carried by this magnitude. With the NT mechanism, the objective eventually reaches a similar level to that of the full LA-ReduNet, while the adaptive step-size strategy provides further improvement in the early layers. These results indicate that the NT mechanism provides the primary improvement in objective convergence, whereas the adaptive step-size strategy further improves the progress in the early layers. Notice that a relatively conservative value $\beta=1$ is adopted in these experiments, which limits the variation range of the adaptive step size and therefore results in relatively close convergence curves between LA-ReduNet and the Riemannian + NT variant in Fig. \ref{fig:15}. As further shown in the subsequent analysis of $\beta$, the advantage of the adaptive step-size strategy becomes more pronounced when a larger value of $\beta$ is used.

\begin{figure}[t]
\centering
\captionsetup[subfigure]{labelformat=parens}

\begin{subfigure}{0.47\textwidth}
\centering
\includegraphics[width=\linewidth]{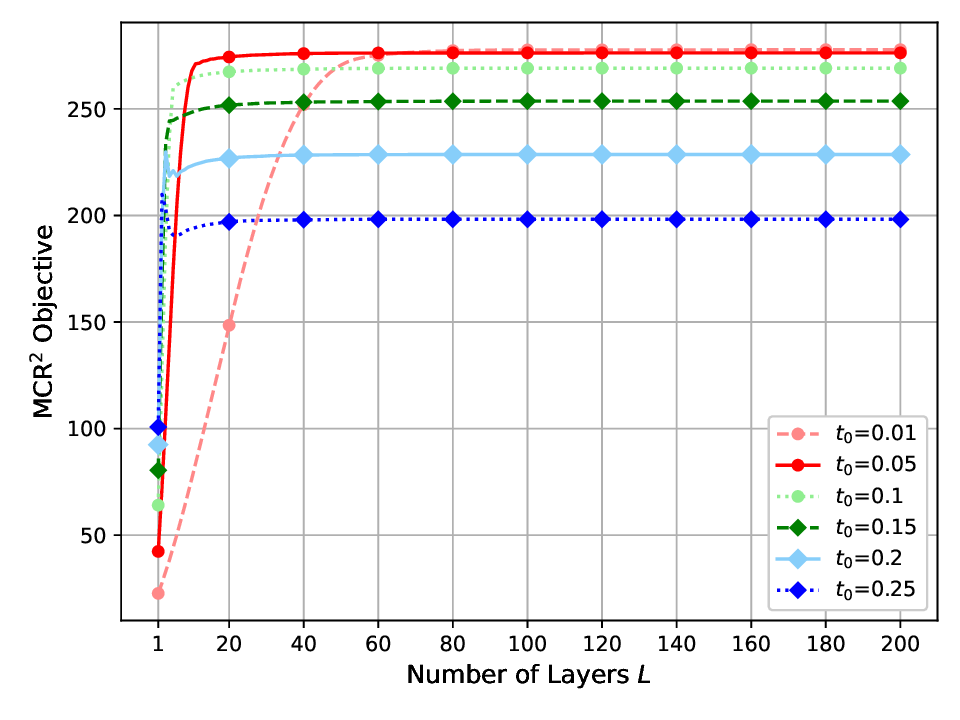}
\caption{CIFAR-10}
\label{fig:11_b}
\end{subfigure}
\hfill
\begin{subfigure}{0.47\textwidth}
\centering
\includegraphics[width=\linewidth]{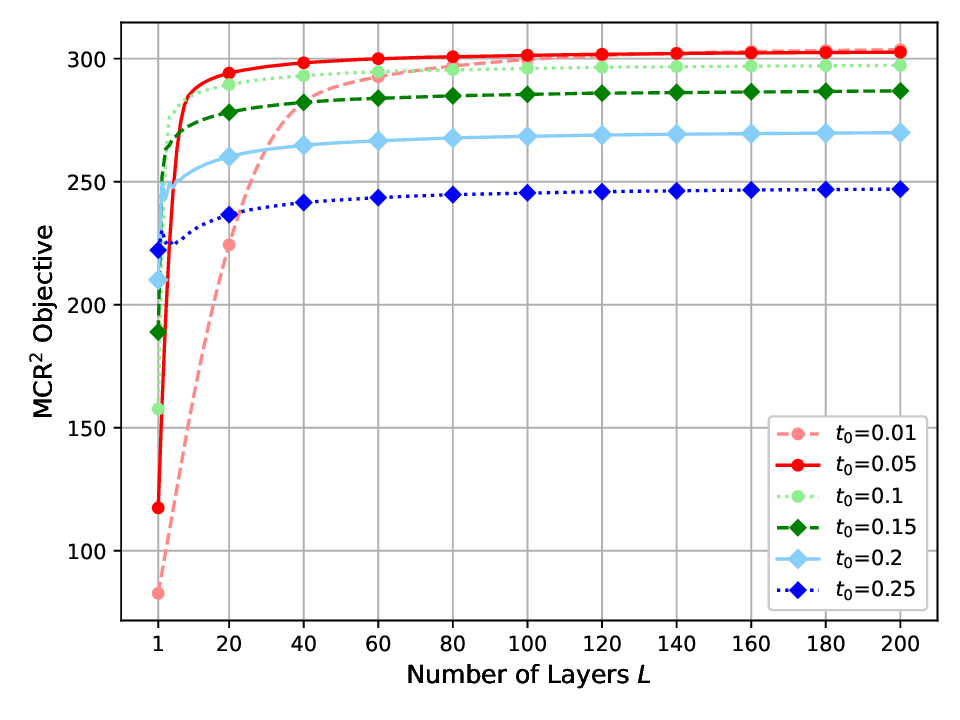}
\caption{CIFAR-100}
\label{fig:11_c}
\end{subfigure}

\vspace{0.5em}

\begin{subfigure}{0.47\textwidth}
\centering
\includegraphics[width=\linewidth]{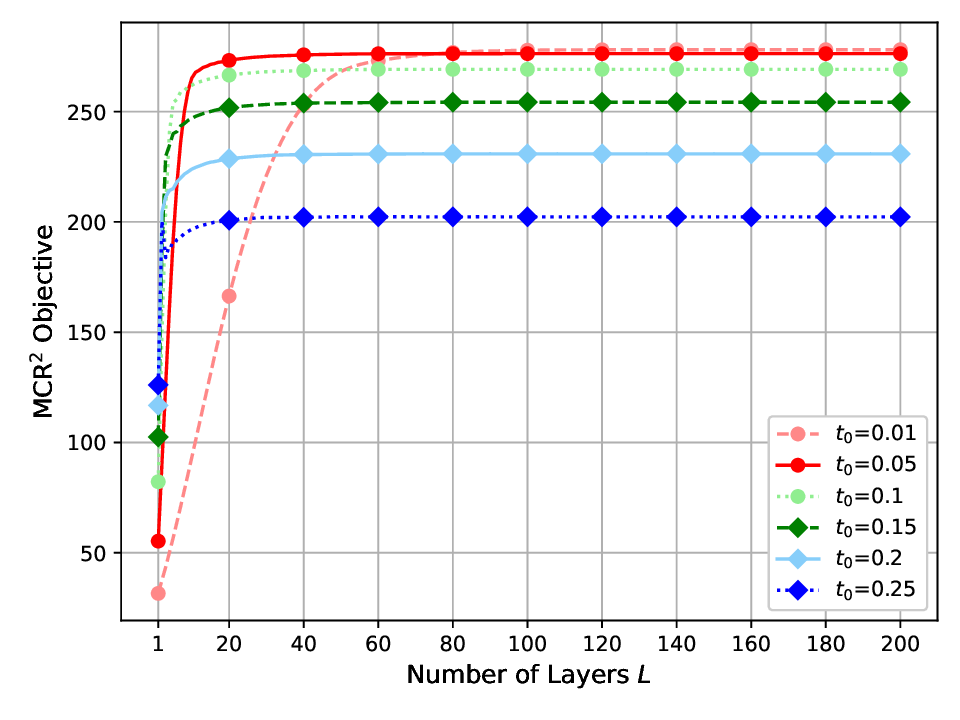}
\caption{CINIC-10}
\label{fig:11_a}
\end{subfigure}
\caption{Effect of parameter $t_0$ on the convergence of the MCR$^2$ objective of LA-ReduNet.}
\label{fig:11}
\end{figure}

\subsubsection{Impact of Hyperparameter $t_0$ on the Objective Convergence of LA-ReduNet}

\begin{figure}[t]
\centering
\captionsetup[subfigure]{labelformat=parens}

\begin{subfigure}{0.47\textwidth}
\centering
\includegraphics[width=\linewidth]{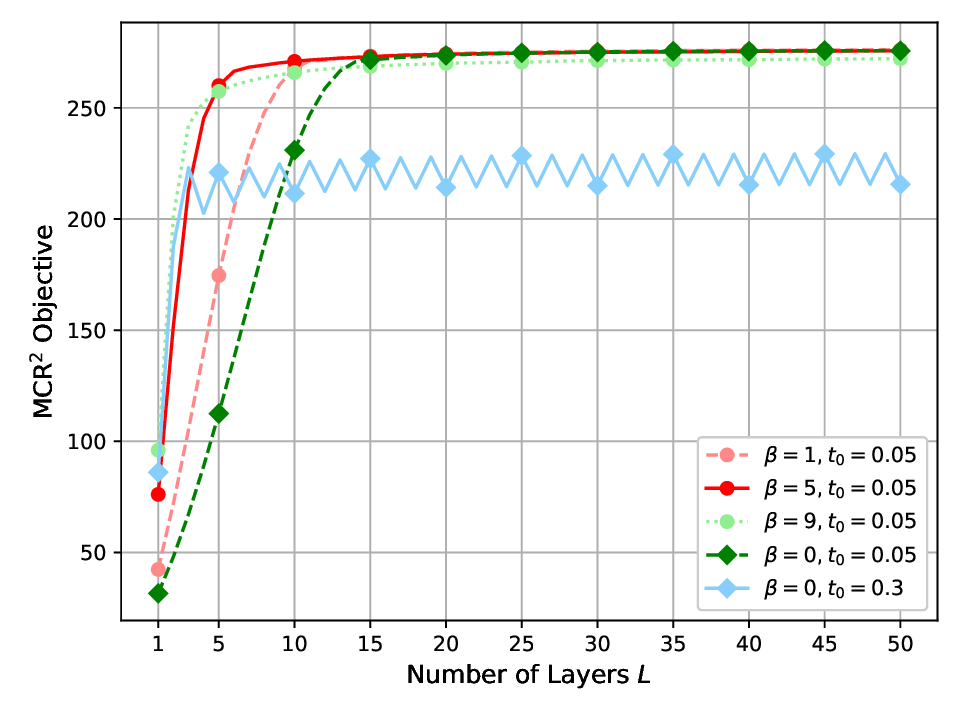}
\caption{CIFAR-10}
\label{fig:12_b}
\end{subfigure}
\hfill
\begin{subfigure}{0.47\textwidth}
\centering
\includegraphics[width=\linewidth]{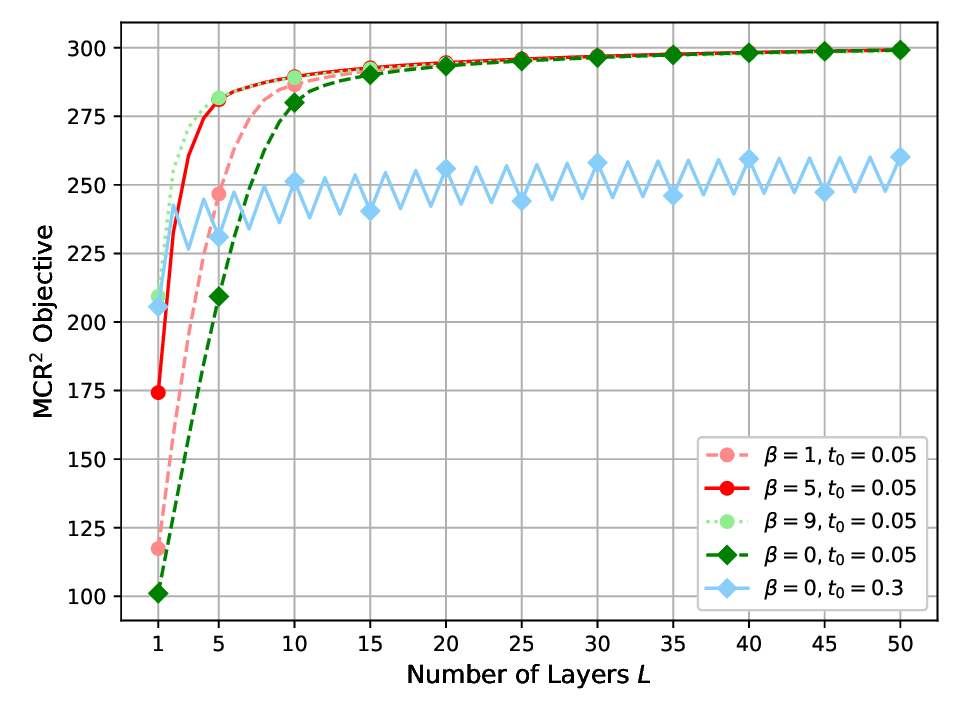}
\caption{CIFAR-100}
\label{fig:12_c}
\end{subfigure}

\vspace{0.5em}

\begin{subfigure}{0.47\textwidth}
\centering
\includegraphics[width=\linewidth]{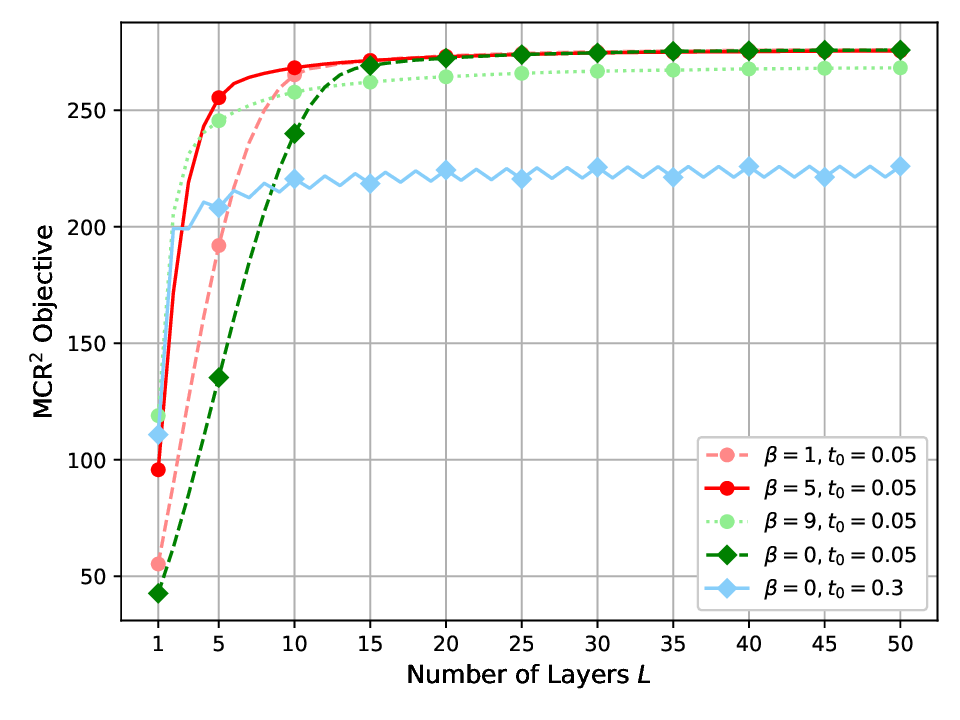}
\caption{CINIC-10}
\label{fig:12_a}
\end{subfigure}

\caption{Effect of parameter $\beta$ on the convergence of the MCR$^2$ objective of LA-ReduNet.}
\label{fig:12}
\end{figure}

Since the adaptive step $t$ in \eqref{Ad_t} depends on both $t_0$ and $\beta$, we first evaluate the influence of $t_0$ on objective convergence. As shown in Fig. \ref{fig:11}, we evaluate the effect of different values of $t_0$ on the objective convergence speed of LA-ReduNet. It can be observed that a relatively small $t_0$ yields a higher MCR$^2$ objective value, which indicates better discriminability of the extracted features. Nevertheless, an excessively small $t_0$ such as $t_0=0.01$ increases the number of layers $L$ required to achieve objective convergence. By contrast, when $t_0$ is excessively large, the MCR$^2$ objective converges to a lower value. Overall, our results demonstrate that a moderate value of $t_0$ ranging from 0.05 to 0.1 provides a favorable trade-off between objective convergence speed and the attained objective value, making it the most suitable choice.

\subsubsection{Impact of Hyperparameter $\beta$ on the Objective Convergence of LA-ReduNet}

\begin{table}[t!]
\centering
\begin{tabular}{lccc}
\toprule
Weight coefficient $\mu$ & CIFAR-10 & CIFAR-100 & CINIC-10 \\
\midrule
$10^{-1}$ & 44.71\% & 9.28\% & 30.13\% \\
$10^{-2}$ & 84.66\% & 55.97\% & 72.86\% \\
$\mathbf{10^{-3}}$ & \textbf{87.58\%} & \textbf{62.78\%} & \textbf{76.79\%} \\
$10^{-4}$ & 86.48\% & 61.84\% & 74.49\% \\
$0$ & 86.18\% & 61.67\% & 75.20\% \\
\bottomrule
\end{tabular}
\caption{Ablation study on the weight coefficient $\mu$ in terms of classification accuracy on CIFAR-10, CIFAR-100, and CINIC-10.}
\label{tab:2}
\end{table}

\begin{figure}[t]
\centering
\captionsetup[subfigure]{labelformat=parens}

\begin{subfigure}{0.47\textwidth}
\centering
\includegraphics[width=\linewidth]{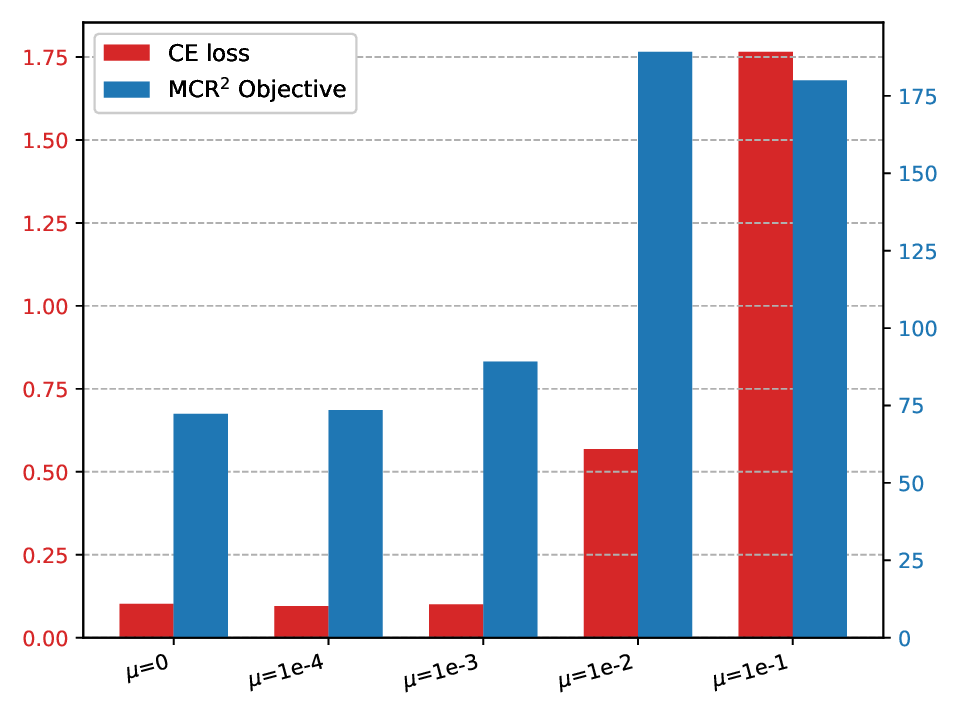}
\caption{CIFAR-10}
\label{fig:13_a}
\end{subfigure}
\hfill
\begin{subfigure}{0.47\textwidth}
\centering
\includegraphics[width=\linewidth]{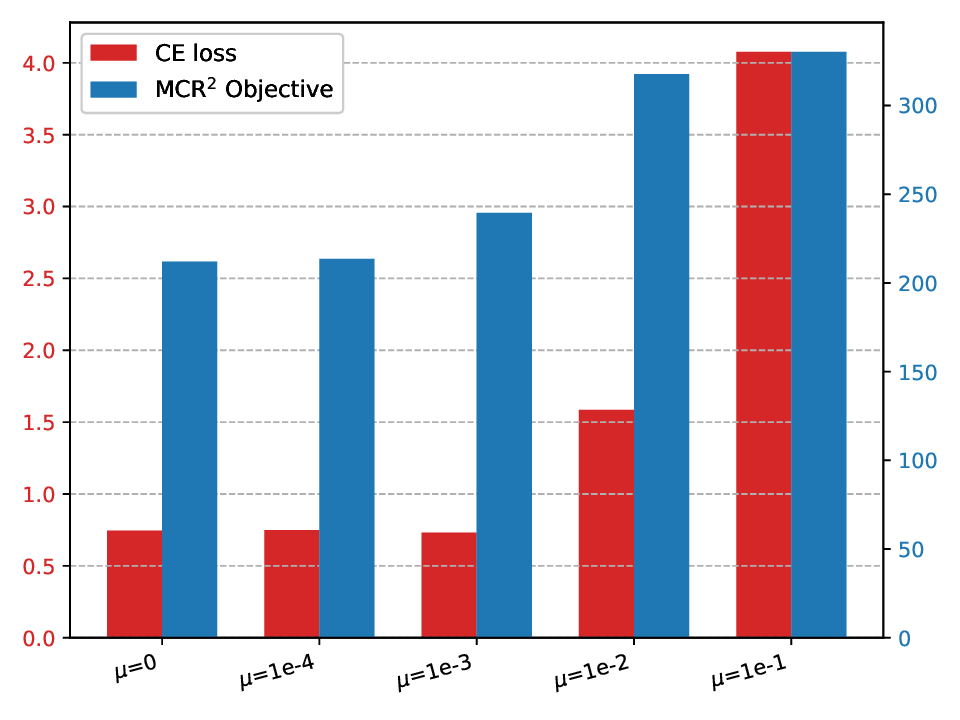}
\caption{CIFAR-100}
\label{fig:13_b}
\end{subfigure}

\vspace{0.5em}

\begin{subfigure}{0.47\textwidth}
\centering
\includegraphics[width=\linewidth]{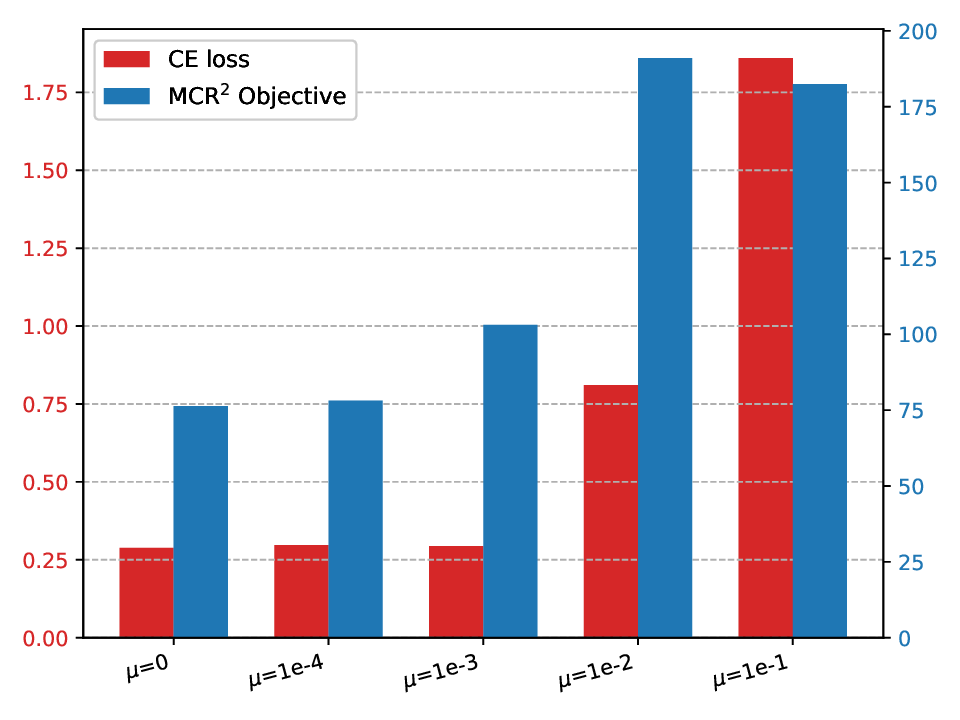}
\caption{CINIC-10}
\label{fig:13_c}
\end{subfigure}

\caption{Ablation study on the weight coefficient $\mu$, illustrating how the final CE loss and MCR$^2$ objective value vary with $\mu$.}
\label{fig:13}
\end{figure}

In this subsection, we evaluate the effect of hyperparameter $ \beta $ on the objective convergence speed of LA-ReduNet. As shown in Fig. \ref{fig:12}, it can be observed that $\beta$ effectively accelerates the growth rate of the MCR$^2$ objective during the training of LA-ReduNet, and this advantage is particularly prominent at shallow depths where $L \leq 15$. Furthermore, we observe that increasing the value of $\beta$ barely affects the peak value of MCR$^2$ that LA-ReduNet can attain upon objective convergence. In particular, with $ \beta=5 $ as a representative case, we compare the adaptive step-size parameter $ t $ against two fixed step-size settings, $t_0=0.05$ and $t_0=0.3$, which respectively correspond to the lower and upper bounds of $ t $ when $ \beta=5 $. As illustrated in Fig. \ref{fig:12}, the adaptive strategy accelerates objective convergence relative to the smaller fixed step size. In contrast, when compared to the larger fixed step size, the adaptive step-size strategy not only achieves more stable optimization but also yields a higher peak value of the MCR$^2$ objective.

In summary, LA-ReduNet benefits from a larger $ \beta $ for faster objective convergence, coupled with a relatively small $t_0$ to safeguard the peak MCR$^2$ value.

\subsubsection{Ablation Study on the Influence of $ \mu $ in the Convolutional Module}
To investigate the impact of the hyperparameter $ \mu $ on the classification performance of the convolutional module, we tune $ \mu $ with all other experimental settings fixed, and evaluate the test set classification accuracy of extracted features under the NS classifier. As shown in Table \ref{tab:2} and Fig. \ref{fig:13}, a lower CE loss is generally associated with higher classification accuracy. Moreover, when the CE loss is kept nearly unchanged, increasing the MCR$^2$ objective value can further enhance classification performance. In particular, for $\mu \in [0, 10^{-3}]$, the MCR$^2$ objective value increases while the CE loss remains almost constant, resulting in improved classification accuracy. By contrast, when $\mu=10^{-2}$, the CE loss becomes substantially higher, and the classification accuracy is markedly lower compared with settings using smaller $\mu$. Thus, we set $10^{-3}$ in the preceding experiments, as it increases the MCR$^2$ objective value without noticeably increasing the CE loss and consequently yields the best overall classification performance.

\section{Conclusions and Discussions}\label{sec:conclusions}
In this paper, we propose LA-ReduNet, a lightweight ReduNet architecture with an adaptive step-size mechanism. On the theoretical side, we establish a finite-termination guarantee for the proposed Riemannian update scheme under explicit sufficient threshold and step-size conditions. Experimentally, compared with the gradient ascent-based ReduNet and AR-ReduNet, LA-ReduNet requires substantially fewer layers for the MCR$^2$ objective to reach a stable value. Simulation results demonstrate that the classification accuracy of LA-ReduNet generally stabilizes within approximately 5--10 layers, while the MCR$^2$ objective reaches a stable value in approximately $35$ layers under the considered experimental settings. We expect that LA-ReduNet can be more effectively applied to a broader range of downstream tasks and can be integrated as a modular component into different neural network architectures. Our experimental results further demonstrate that LA-ReduNet, in conjunction with the convolutional module, yields features with superior classification accuracy and better-separated class boundaries.

\appendix
\section*{Appendices} 
\addcontentsline{toc}{section}{Appendices} 
The proof of Proposition \ref{pro:1} relies on four supporting lemmas, whose detailed proofs are provided in Appendices A--D. The detailed proof of Proposition \ref{pro:1} is then presented in \ref{app:proof-proposition}.
\section{Positive Definiteness and Uniform Eigenvalue Bounds}
\label{app:proof-lemma-first}

\begin{lemma}\label{lem:1}
For any $\bm Z \in \mathbb{R}^{n \times m}$ with unit-norm columns, let $ \alpha \in [0, 1] $ satisfy
\begin{align}
\log\det\left(\alpha\bm I+ \frac{n}{\operatorname{tr}(\bm Z\bm Z^{\rm T})}\bm Z\bm Z^{\rm T}\right)=0. \label{compute:alpha_copy}
\end{align}
Then, the matrix $\alpha \bm{I} + n/\tr(\bm{Z}\bm{Z}^{\rm T})\bm{ZZ}^{\rm T}$ is positive definite. Moreover, all its eigenvalues satisfy
\begin{align}
\frac{1}{n+1} \leq \lambda_{i} \leq n+1,  \qquad i=1,\ldots,n.
\end{align}
Here $ \lambda_{i} $ denotes the $i$-th eigenvalue of matrix $\alpha \bm{I} + n / \tr(\bm{Z}\bm{Z}^{\rm T})\bm{ZZ}^{\rm T}$.
\end{lemma}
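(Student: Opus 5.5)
Throughout, write $\bm M=\alpha\bm I+\frac{n}{\tr(\bm Z\bm Z^{\rm T})}\bm Z\bm Z^{\rm T}$. The plan is to reduce everything to a statement about the spectrum of a trace-normalized PSD matrix. Since every column of $\bm Z$ has unit norm, $\tr(\bm Z\bm Z^{\rm T})=\sum_i\|\bm z_i\|_2^2=m$. So $\bm S:=\frac{n}{m}\bm Z\bm Z^{\rm T}$ is symmetric positive semidefinite with eigenvalues $s_1,\ldots,s_n\ge 0$ and $\sum_i s_i=n$. Then $\bm M=\alpha\bm I+\bm S$ has eigenvalues $\lambda_i=\alpha+s_i\ge 0$, and the defining equation \eqref{compute:alpha_copy} reads $\sum_{i=1}^n\log\lambda_i=0$, that is, $\prod_i\lambda_i=1$.

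Positive definiteness is immediate. If some $\lambda_i$ were $0$, then $\log\det\bm M=-\infty\neq 0$, contradicting \eqref{compute:alpha_copy}. Hence every $\lambda_i>0$; this also covers the degenerate case $\alpha=0$ with $\bm S$ singular, which is therefore excluded. The upper bound is also direct: $\lambda_i=\alpha+s_i\le 1+\sum_j s_j=1+n$, using $\alpha\le 1$ and $s_j\ge 0$.

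For the lower bound, fix an index $i$ and write $\lambda_i=1/\prod_{j\neq i}\lambda_j$. It then suffices to show $\prod_{j\neq i}\lambda_j\le n+1$. The constraint available is $\sum_{j\neq i}\lambda_j=(n-1)\alpha+n-s_i$. By AM--GM,
\begin{align*}
\prod_{j\neq i}\lambda_j\le\Bigl(\alpha+\tfrac{n-s_i}{n-1}\Bigr)^{n-1}.
\end{align*}
With only $\alpha\le 1$ this gives roughly $e^{n}$-type growth, so $\alpha$ must be controlled jointly with the spectrum. I would use the equation $\prod_j(\alpha+s_j)=1$ itself.

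The key observation is that $\alpha$ and the spread of the $s_j$ trade off. First, since $\log$ is concave, $\sum_j\log(\alpha+s_j)$ over the simplex $\{s_j\ge0,\ \sum_j s_j=n\}$ is minimized at a vertex. This gives $\alpha^{n-1}(\alpha+n)\le 1$, which caps $\alpha$. Second, substituting $\prod_{j\ne i}(\alpha+s_j)=1/(\alpha+s_i)$ back into the AM--GM bound yields the one-variable inequality
\begin{align*}
(\alpha+s_i)\Bigl(\alpha+\tfrac{n-s_i}{n-1}\Bigr)^{n-1}\ge 1 .
\end{align*}
I would then minimize $\lambda_i=\alpha+s_i$ over the feasible pairs $(\alpha,s_i)$ satisfying both this inequality and the cap on $\alpha$. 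The expected extremal configuration is $s_i=0$ with the remaining $s_j$ equal to $n/(n-1)$. There $\lambda_i=\alpha$ solves $\alpha(\alpha+\frac{n}{n-1})^{n-1}=1$, giving $\alpha\approx((n-1)/n)^{n-1}\ge e^{-1}\ge\frac{1}{n+1}$ for $n\ge 2$.

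The main obstacle is this two-parameter optimization: showing rigorously that no interior configuration does worse than the extremal one. My first attempt would be a monotonicity argument: increasing $s_i$ or $\alpha$ only increases $\lambda_i$. As a fallback I would use the cruder chain $\lambda_i\ge 1/\prod_{j\ne i}\lambda_j$ together with $\lambda_j\le n+1$ on the at most one eigenvalue that can be large. Since $\sum_j s_j=n$ forces all but $O(1)$ of the $\lambda_j$ to be at most $\alpha+O(1)$, this suffices for the stated, non-sharp bound $1/(n+1)$.
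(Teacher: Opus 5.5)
Your reduction to $\lambda_i=\alpha+s_i$ with $\sum_i s_i=n$ and $\prod_i\lambda_i=1$ is correct and matches the paper, and so are your positive-definiteness argument and the upper bound $\lambda_i\le n+1$. The gap is in the lower bound, exactly at the step you call the main obstacle, and neither route you offer gets through it.

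The missing idea is the coupling $\alpha\le\alpha+s_i=\lambda_i$, which follows from $s_i\ge0$. It turns your AM--GM bracket into a function of $\lambda_i$ alone:
\begin{align*}
\alpha+\frac{n-s_i}{n-1}=\frac{n\alpha+n-\lambda_i}{n-1}\le\lambda_i+\frac{n}{n-1},
\qquad\text{so}\qquad
1\le\lambda_i\Bigl(\lambda_i+\frac{n}{n-1}\Bigr)^{n-1},
\end{align*}
and the right-hand side is strictly increasing in $\lambda_i$. This is the paper's argument, run by contradiction on the smallest eigenvalue via $\sum_{i\ne r}\lambda_i\le n+(n-1)\lambda_r$. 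The monotonicity you need is that, for fixed $\lambda_i$, the bracket increases with $\alpha$; that is why $s_i=0$ is extremal. Your ``increasing $s_i$ or $\alpha$ increases $\lambda_i$'' does not address this.

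Your cap $\alpha^{n-1}(\alpha+n)\le1$ is true but too weak: it only gives $\alpha\le n^{-1/(n-1)}=1-O(\ln n/n)$. The fallback fails for the reason you diagnosed yourself. Bounding the other $n-1$ eigenvalues by $\alpha+O(1)$, with $\alpha$ not tied to $\lambda_i$, leaves a product of size $e^{\Theta(n)}$, so you get only an exponentially small lower bound.

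Your evaluation of the extremal case is also wrong. Dropping $\alpha$ inside the bracket is legitimate only if $\alpha\ll 1/n$, which is false here. The root of $\mu\bigl(\mu+\frac{n}{n-1}\bigr)^{n-1}=1$ is about $0.306$ already for $n=3$, which is below $e^{-1}$, and it is of order $\ln n/n$ for large $n$. So ``$\ge e^{-1}$'' is false, and you must compare with $\frac{1}{n+1}$ directly.

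What you need is $\bigl(\frac{1}{n+1}+\frac{n}{n-1}\bigr)^{n-1}<n+1$. This follows from $\frac{1}{n+1}+\frac{n}{n-1}<\frac{n+1}{n-1}$ together with $\bigl(\frac{n+1}{n-1}\bigr)^{n-1}\le n+1$, which is exactly the estimate in the paper. Check the latter by hand for $2\le n\le 6$; for $n\ge7$ use $\bigl(1+\frac{2}{n-1}\bigr)^{n-1}<e^2\le n+1$.

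Finally, handle $n=1$ separately, since your AM--GM step divides by $n-1$. There the defining equation forces $\alpha=0$ and $\lambda_1=1$.
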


\begin{proof}
Let $ \lambda_{i}= \alpha + q_{i} $, where $ q_{i} $ denotes the $i$-th eigenvalue of $ n / \tr(\bm{Z}\bm{Z}^{\rm T})\bm{ZZ}^{\rm T} $. From \eqref{compute:alpha_copy} and $ \tr(\bm{ZZ}^{\rm T})=m $, it follows that
\begin{align}
\det(\alpha\bm{I}+\frac{n}{m}\bm{ZZ}^{\rm T})=\prod_{i=1}^{n}\lambda_{i}=1.
\end{align}
From the fact that $ q_{i} \leq \sum_{s=1}^{n}q_{s} = n $, and $ \alpha \leq 1 $, we obtain $ \lambda_{i} \leq n+1 $.

Moreover, since $\alpha\geq0$ and $q_i\geq0$, we have $\lambda_i\geq0$. Together with $\prod_{i=1}^{n}\lambda_i=1$, this implies that $\lambda_i>0$ for all $i=1,\ldots,n$. For $n=1$, \eqref{compute:alpha_copy} reduces to $\log(\alpha+1)=0$, which gives $\alpha=0$ and hence $\lambda_1=1$. Therefore, the desired bound holds. For $n\geq2$, let
\begin{align}
\lambda_r=\min_{1\leq i\leq n}\lambda_i.
\end{align}
Since $\lambda_r=\alpha+q_r$ and $q_r\geq0$, we have
\begin{align}
\alpha\leq\lambda_r.
\end{align}
Suppose, by contradiction, that
\begin{align}
\lambda_r<\frac{1}{n+1}.
\label{lambda_contradiction}
\end{align}
Since the sum of the eigenvalues of a matrix equals its trace and $\sum_{i=1}^{n}q_i=n$, we have
\begin{align}
\sum_{i=1}^{n}\lambda_i=n\alpha+\sum_{i=1}^{n}q_i=n(1+\alpha).
\end{align}
Then, it follows from $\alpha\leq\lambda_r$ that
\begin{align}
\sum_{i\neq r}\lambda_i&=n(1+\alpha)-\lambda_r \\
&\leq n+(n-1)\lambda_r \\
&< n+\frac{n-1}{n+1}  \\
&< n+1.
\label{remaining_eigenvalue_sum}
\end{align}
By the arithmetic--geometric mean inequality,
\begin{align}
\prod_{i\neq r}\lambda_i
&\leq
\left(
\frac{\sum_{i\neq r}\lambda_i}{n-1}
\right)^{n-1} \\
&<
\left(
\frac{n+1}{n-1}
\right)^{n-1}.
\label{remaining_eigenvalue_product}
\end{align}

Moreover, for all $n\geq2$,
\begin{align}
\left(
\frac{n+1}{n-1}
\right)^{n-1}
\leq n+1.
\label{auxiliary_eigenvalue_bound}
\end{align}
Indeed, the inequality holds directly for $2\leq n\leq6$. For $n\geq7$, using $1+x<e^x$ for $x>0$, we have
\begin{align}
\left(\frac{n+1}{n-1}\right)^{n-1}&=\left(1+\frac{2}{n-1}\right)^{n-1} \\
&<\left(e^{\frac{2}{n-1}}\right)^{n-1} \\
&=e^2 \\
&\leq n+1.
\end{align}

Combining \eqref{remaining_eigenvalue_product} and \eqref{auxiliary_eigenvalue_bound} yields
\begin{align}
\prod_{i\neq r}\lambda_i<n+1.
\end{align}
On the other hand, since $\prod_{i=1}^{n}\lambda_i=1$ and $\lambda_r<1/(n+1)$, we have
\begin{align}
\prod_{i\neq r}\lambda_i
=\frac{1}{\lambda_r}>n+1,
\end{align}
which is a contradiction. Therefore,
\begin{align}
\lambda_r\geq\frac{1}{n+1},
\end{align}
and hence
\begin{align}
\frac{1}{n+1}
\leq
\lambda_i
\leq
n+1,
\qquad
i=1,\ldots,n.
\end{align}

Then, $\alpha \bm{I} + n / \tr(\bm{Z}\bm{Z}^{\rm T})\bm{ZZ}^{\rm T}$ is positive definite, since all its eigenvalues are strictly positive.
\end{proof}

Notice that we can similarly prove that the matrix $ \alpha_{j}\bm{I}+n / \tr(\bm{Z}\bm{\Pi}_{j}\bm{Z}^{\rm T})\bm{Z}\bm{\Pi}_{j}\bm{Z}^{\rm T} $ is positive definite, and its eigenvalues share the same upper and lower bounds as those of $\alpha \bm{I} + n / \tr(\bm{Z}\bm{Z}^{\rm T})\bm{ZZ}^{\rm T}$.

\section{Lipschitz Continuity of the Implicit Parameter Mapping}
\label{app:proof-lemma-second}
At each iteration, $\alpha$ and $\alpha_j$ are first recomputed from the current feature matrix $\bm Z$ and then treated as fixed quantities when computing the update direction with respect to $\bm Z$. To characterize how these recomputed parameters vary with $\bm Z$ across iterations, we denote their values by $\alpha(\bm Z)$ and $\alpha_j(\bm Z^j)$, which are the unique solutions to \eqref{compute:alpha} and \eqref{compute:alphaj}, respectively. The following lemma establishes the Lipschitz continuity of these parameter mappings.

\begin{lemma} \label{lem:2}
Let \(\mathcal{M} = (\mathbb{S}^{n-1})^m\) denote the set of all \(n\times m\) matrices with unit-norm columns. For any \(\bm Z_1, \bm Z_2 \in \mathcal{M}\), there exists a constant $ L_\alpha > 0 $ such that
\begin{align}
|\alpha(\bm{Z}_1)-\alpha(\bm{Z}_2)| &\leq L_{\alpha}\|\bm{Z}_1-\bm{Z}_2\|_F,
\end{align}
where
\begin{align}
L_\alpha = \frac{\pi}{\sqrt{m}}.
\end{align}
\end{lemma}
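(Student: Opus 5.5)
The natural route is the implicit function theorem, applied to the defining equation of $\alpha$. Since every column of $\bm Z\in\mathcal M$ has unit norm, $\tr(\bm Z\bm Z^{\rm T})=m$. We therefore work with
\begin{align}
\Phi(\alpha,\bm Z)=\log\det\Big(\alpha\bm I+\frac{n}{m}\bm Z\bm Z^{\rm T}\Big)=\sum_{i=1}^n\log(\alpha+q_i(\bm Z)),
\end{align}
where $q_i(\bm Z)$ are the eigenvalues of $\frac{n}{m}\bm Z\bm Z^{\rm T}$, so that $\alpha(\bm Z)$ is the unique root of $\Phi(\cdot,\bm Z)=0$ in $[0,1]$. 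Lemma \ref{lem:1} puts every eigenvalue of $\bm A(\bm Z)=\alpha(\bm Z)\bm I+\frac{n}{m}\bm Z\bm Z^{\rm T}$ in $[1/(n+1),n+1]$. Hence $\Phi$ is smooth in a neighbourhood of $(\alpha(\bm Z),\bm Z)$, and
\begin{align}
\partial_\alpha\Phi=\tr(\bm A^{-1})>0,
\end{align}
which gives differentiability of $\bm Z\mapsto\alpha(\bm Z)$ with
\begin{align}
\nabla_{\bm Z}\alpha=-\frac{\nabla_{\bm Z}\Phi}{\partial_\alpha\Phi}=-\frac{\frac{2n}{m}\bm A^{-1}\bm Z}{\tr(\bm A^{-1})}.
\end{align}

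The main step is a uniform bound on $\|\nabla_{\bm Z}\alpha\|_F$. In the eigenbasis of $\bm A$ (which is also that of $\bm Z\bm Z^{\rm T}$),
\begin{align}
\|\bm A^{-1}\bm Z\|_F^2=\tr(\bm A^{-2}\bm Z\bm Z^{\rm T})=\frac{m}{n}\sum_i\frac{q_i}{(\alpha+q_i)^2}\le\frac{m}{n}\sum_i\frac{1}{\alpha+q_i}=\frac{m}{n}\tr(\bm A^{-1}).
\end{align}
Consequently
\begin{align}
\|\nabla\alpha\|_F\le\frac{2n}{m}\sqrt{\frac{m}{n}}\,\tr(\bm A^{-1})^{-1/2}=\frac{2\sqrt n}{\sqrt m}\,\tr(\bm A^{-1})^{-1/2}.
\end{align}
It remains to lower-bound $\tr(\bm A^{-1})$, for which there are two routes. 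By AM--GM and $\det\bm A=1$, we get $\tr(\bm A^{-1})\ge n$, hence $\|\nabla\alpha\|_F\le 2/\sqrt m$. Alternatively, Lemma \ref{lem:1} gives each eigenvalue of $\bm A^{-1}$ at least $1/(n+1)$; this yields a weaker bound, but still of order $m^{-1/2}$. Either way the bound is at most $2/\sqrt m$.

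The last step passes from the gradient bound to a Lipschitz bound on the manifold, and this is where the constant $\pi$ enters. The set $\mathcal M$ is not convex, so the mean value theorem cannot be applied along a straight segment. Instead, connect $\bm Z_1$ and $\bm Z_2$ columnwise by great-circle geodesics. A geodesic between unit vectors $\bm u,\bm v$ has length $\theta=\arccos(\bm u^{\rm T}\bm v)$, and
\begin{align}
\theta\le\frac{\pi}{2}\|\bm u-\bm v\|_2,
\end{align}
since $\|\bm u-\bm v\|_2=2\sin(\theta/2)\ge 2\theta/\pi$ for $\theta\in[0,\pi]$. The product curve therefore has Frobenius length at most $\frac{\pi}{2}\|\bm Z_1-\bm Z_2\|_F$. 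Integrating $\nabla\alpha$ along this curve and using $\|\nabla\alpha\|_F\le 2/\sqrt m$ gives
\begin{align}
|\alpha(\bm Z_1)-\alpha(\bm Z_2)|\le\frac{2}{\sqrt m}\cdot\frac{\pi}{2}\|\bm Z_1-\bm Z_2\|_F=\frac{\pi}{\sqrt m}\|\bm Z_1-\bm Z_2\|_F.
\end{align}
Antipodal columns ($\theta=\pi$) need care, because the geodesic is not unique there, but any choice works since the length bound still holds.

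The main obstacle is justifying differentiability everywhere on $\mathcal M$, including at points where $\alpha(\bm Z)=0$ lies on the boundary of $[0,1]$. There $\alpha(\bm Z)$ is still the root of the equation on all of $\mathbb R$, provided $\bm A$ stays positive definite, which Lemma \ref{lem:1} guarantees. I would therefore apply the implicit function theorem on an open $\alpha$-interval around the root rather than on $[0,1]$.
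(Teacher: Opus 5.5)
Your proposal is correct and follows the paper's proof essentially step for step: implicit differentiation of $\log\det\bm A=0$, the bound $\|\bm A^{-1}\bm Z\|_F^2\le\frac{m}{n}\tr(\bm A^{-1})$, AM--GM with $\det\bm A=1$ to get $\tr(\bm A^{-1})\ge n$, and integration along a geodesic using $d_{\mathcal M}(\bm Z_1,\bm Z_2)\le\frac{\pi}{2}\|\bm Z_1-\bm Z_2\|_F$. One small slip is that the Lemma~\ref{lem:1} alternative only gives $\tr(\bm A^{-1})\ge n/(n+1)$, hence $\|\nabla\alpha\|_F\le 2\sqrt{n+1}/\sqrt m$, which is not at most $2/\sqrt m$; so ``either way'' is wrong, and the AM--GM route is the one actually needed for the constant $\pi/\sqrt m$.
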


\begin{proof}
Let $ \bm{Z}(s) $ denote a minimizing geodesic curve on $\mathcal M$ connecting $\bm{Z}_1$ and $\bm{Z}_2$,
\begin{align}
    \bm{Z}(0)=\bm{Z}_{1},  \qquad \bm{Z}(1)=\bm{Z}_2,
\end{align}
where $ s \in [0, 1] $. Define $ \alpha(s)=\alpha(\bm{Z}(s)) $. Then, from the definition in \eqref{compute:alpha}, we have
\begin{align}
    \log \det \left(\alpha(s) \bm{I}+\frac{n}{\tr(\bm{Z}(s)\bm{Z}(s)^{\rm T})}\bm{Z}(s)\bm{Z}(s)^{\rm T}\right)=0. \label{As_0}
\end{align}
Define
\begin{align}
    \bm{A}(s)&=\alpha(s) \bm{I}+\frac{n}{\tr(\bm{Z}(s)\bm{Z}(s)^{\rm T})}\bm{Z}(s)\bm{Z}(s)^{\rm T}, \label{AS} \\
    \bm{B}(s)&=\frac{n}{\tr(\bm{Z}(s)\bm{Z}(s)^{\rm T})} \bm{Z}(s)\bm{Z}(s)^{\rm T}. \label{BS}
\end{align}

By Lemma \ref{lem:1}, $\bm A(s)$ is positive definite. Since
\begin{align}
\frac{\partial \bm A(s)}{\partial \alpha(s)}=\bm I,
\end{align}
we have
\begin{align}
\frac{\partial}{\partial \alpha(s)}
\log\det\bm A(s)
&=
\tr\left(\bm A(s)^{-1}\right)>0.  \label{alpha_partial}
\end{align}
Since $\bm Z(s)$ varies smoothly along the geodesic and \eqref{alpha_partial} shows that the partial derivative with respect to
$\alpha(s)$ is strictly positive, the unique solution $\alpha(s)$ varies differentiably with $s$. Then, it follows from \eqref{As_0} that $\log\det\bm A(s)=0$ for all $s\in[0,1]$. Differentiating $\log\det\bm A(s)$ with respect to $s$ yields
\begin{align}
&\frac{\mathrm{d}}{\mathrm{d}s}\log\det\bm{A}(s) \notag \\
&=\tr\left(\bm{A}(s)^{-1} \frac{\mathrm{d}\bm{A}(s)}{\mathrm{d}s} \right) \\
&=\tr\left(\bm{A}(s)^{-1}(\alpha'(s)\bm{I}+\bm{B}'(s))\right)  \\
&=\tr\left(\bm{A}(s)^{-1}(\alpha'(s)\bm{I}+\frac{n}{m}(\bm{Z}'(s)\bm{Z}(s)^{\rm T}+\bm{Z}(s)\bm{Z}'(s)^{\rm T}))\right) \label{Bm} \\
&=\alpha'(s)\tr\left(\bm{A}(s)^{-1}\right)+\frac{n}{m}\tr\left(\bm{A}(s)^{-1}(\bm{Z}'(s)\bm{Z}(s)^{\rm T}+\bm{Z}(s)\bm{Z}'(s)^{\rm T})\right) \\
&=0,
\end{align}
where \eqref{Bm} follows from $\tr(\bm Z(s)\bm Z(s)^{\rm T})=\tr(\bm Z(s)^{\rm T}\bm Z(s))=m$. Therefore,
\begin{align}
\alpha'(s)&= -\frac{n}{m}\frac{\tr\left(\bm{A}(s)^{-1}(\bm{Z}'(s)\bm{Z}(s)^{\rm T}+\bm{Z}(s)\bm{Z}'(s)^{\rm T})\right)}{\tr\left(\bm{A}(s)^{-1}\right)}  \\
\alpha'(s)&= -\frac{2n}{m}\frac{\tr\left(\bm{Z}(s)^{\rm T}\bm{A}(s)^{-1}\bm{Z}'(s)\right)}{\tr\left(\bm{A}(s)^{-1}\right)}.  \label{alpha_s_de}
\end{align}
By Lemma \ref{lem:1}, $\bm A(s)$ is positive definite. Let $\lambda_i(s)>0$, $i=1,\ldots,n$, denote the eigenvalues of $\bm A(s)$. From \eqref{As_0}, we have
\begin{align}
\prod_{i=1}^{n}\lambda_i(s)=1.
\end{align}
Since the eigenvalues of $\bm A(s)^{-1}$ are $1/\lambda_i(s)$, the arithmetic--geometric mean inequality gives
\begin{align}
\tr\left(\bm A(s)^{-1}\right)
&=
\sum_{i=1}^{n}\frac{1}{\lambda_i(s)} \notag\\
&\geq
n\left(
\prod_{i=1}^{n}\frac{1}{\lambda_i(s)}
\right)^{1/n} \notag\\
&=n.
\label{numerator_1}
\end{align}
Moreover, we obtain
\begin{align}
&|\tr\left(\bm{Z}(s)^{\rm T}\bm{A}(s)^{-1}\bm{Z}'(s)\right)|  \notag \\
&\leq \|\bm{A}(s)^{-1}\bm{Z}(s)\|_F\|\bm{Z}'(s)\|_F
\label{Cauchy--Schwarz} \\
&=
\left[
\tr\left(
\bm{Z}(s)^{\rm T}\bm{A}(s)^{-2}\bm{Z}(s)
\right)
\right]^{1/2}
\|\bm{Z}'(s)\|_F
\label{Frobenius_identity} \\
&=
\left[
\tr\left(
\bm{A}(s)^{-2}\bm{Z}(s)\bm{Z}(s)^{\rm T}
\right)
\right]^{1/2}
\|\bm{Z}'(s)\|_F \\
&=
\left[
\frac{m}{n}
\tr\left(
\bm{A}(s)^{-2}
\left(\bm{A}(s)-\alpha(s)\bm I\right)
\right)
\right]^{1/2}
\|\bm{Z}'(s)\|_F \\
&=
\left\{
\frac{m}{n}
\left[
\tr\left(\bm{A}(s)^{-1}\right)
-\alpha(s)\tr\left(\bm{A}(s)^{-2}\right)
\right]
\right\}^{1/2}
\|\bm{Z}'(s)\|_F \\
&\leq
\sqrt{
\frac{m}{n}
\tr\left(\bm{A}(s)^{-1}\right)
}
\|\bm{Z}'(s)\|_F.
\label{denominator_1}
\end{align}

Step \eqref{Cauchy--Schwarz} follows from the Cauchy-Schwarz inequality for the Frobenius inner product. Step \eqref{Frobenius_identity} follows from the definition of the Frobenius norm and the symmetry of $\bm A(s)^{-1}$. Step \eqref{denominator_1} follows from $\alpha(s)\geq0$ and $\tr(\bm A(s)^{-2})\geq0$. Thus, combining \eqref{numerator_1} and \eqref{denominator_1}, we obtain an upper bound for $ |\alpha'(s)| $ as follows:
\begin{align}
|\alpha'(s)|
&=
\frac{2n}{m}
\frac{
|\tr\left(\bm{Z}(s)^{\rm T}\bm{A}(s)^{-1}\bm{Z}'(s)\right)|
}{
\tr\left(\bm{A}(s)^{-1}\right)
} \\
&\leq
\frac{2n}{m}
\frac{
\sqrt{
\frac{m}{n}
\tr\left(\bm{A}(s)^{-1}\right)
}
}{
\tr\left(\bm{A}(s)^{-1}\right)
}
\|\bm{Z}'(s)\|_F \\
&=
2\sqrt{\frac{n}{m}}
\frac{1}{
\sqrt{\tr\left(\bm{A}(s)^{-1}\right)}
}
\|\bm{Z}'(s)\|_F \\
&\leq
\frac{2}{\sqrt{m}}
\|\bm{Z}'(s)\|_F.
\label{alpha_derivative_bound}
\end{align}
The above result bounds the derivative of $\alpha(s)$ along the geodesic curve. Therefore, by integrating over $s$, we obtain
\begin{align}
\alpha(\bm Z_{2})-\alpha(\bm Z_{1})
&=
\int_{0}^{1}\alpha'(s)\mathrm{d}s \\
|\alpha(\bm Z_{1})-\alpha(\bm Z_{2})|
&=
\left|
\int_{0}^{1}\alpha'(s)\mathrm{d}s
\right| \\
&\leq
\int_{0}^{1}|\alpha'(s)|\mathrm{d}s
\label{integrals} \\
&\leq
\frac{2}{\sqrt m}
\int_{0}^{1}
\|\bm Z'(s)\|_F
\mathrm{d}s \\
&=
\frac{2}{\sqrt m}
d_{\mathcal M}(\bm Z_{1},\bm Z_{2}),
\end{align}
where $d_{\mathcal{M}}(\bm{Z}_{1}, \bm{Z}_{2})$ denotes the geodesic distance on the product manifold $\mathcal M=(\mathbb S^{n-1})^m$. Step \eqref{integrals} follows from the triangle inequality for integrals. Since $\mathcal{M}=(\mathbb{S}^{n-1})^m$ is a product of unit spheres, the geodesic distance can be bounded by the Euclidean distance as
\begin{align}
d_{\mathcal M}(\bm Z_1,\bm Z_2) \leq \frac{\pi}{2} \|\bm Z_1-\bm Z_2\|_F.
\end{align}
Thus,
\begin{align}
|\alpha(\bm{Z}_{1})-\alpha(\bm{Z}_{2})| &\leq \frac{\pi}{\sqrt{m}} \|\bm Z_1-\bm Z_2\|_F.
\end{align}
By setting
\begin{align}
L_\alpha = \frac{\pi}{\sqrt{m}},
\end{align}
the conclusion is proved.
\end{proof}
In addition, analogous results can be derived for the parameter $\alpha_j(\bm Z^j)$. That is,
\begin{align}
|\alpha_j(\bm{Z}_{1}^j)-\alpha_j(\bm{Z}_{2}^j)| \leq \frac{\pi}{\sqrt{m_j}} \|\bm Z_1^j-\bm Z_2^j\|_F,
\end{align}
where $\bm{Z}_1^j, \bm{Z}_2^j \in \mathbb{R}^{n \times m_j}$, and $ m_j $ denotes the number of samples in the $j$-th class. It follows that the constant term satisfies
\begin{align}
L_{\alpha_j} = \frac{\pi}{\sqrt{m_j}}.
\end{align}
Since $\bm\Pi_j$ retains the columns corresponding to the $j$-th class and sets all other columns to zero, we have
\begin{align}
\|\bm Z_1^j-\bm Z_2^j\|_F
&=\|(\bm Z_1-\bm Z_2)\bm\Pi_j\|_F \\
&\leq\|\bm Z_1-\bm Z_2\|_F \|\bm\Pi_j\|_2 \\
&=\|\bm Z_1-\bm Z_2\|_F, \label{alpha_j_inequality}
\end{align}
where \eqref{alpha_j_inequality} follows from $\|\bm\Pi_j\|_2=1$. Therefore,
\begin{align}
|\alpha_j(\bm Z_1^j)-\alpha_j(\bm Z_2^j)|
\leq \frac{\pi}{\sqrt{m_j}}
\|\bm Z_1-\bm Z_2\|_F.
\end{align}

Furthermore, by applying the same argument to the $j$-th class, the derivative of the corresponding implicit parameter along the geodesic satisfies
\begin{align}
|\alpha_j'(s)|
\leq
\frac{2}{\sqrt{m_j}}
\|(\bm Z^j)'(s)\|_F
\leq
\frac{2}{\sqrt{m_j}}
\|\bm Z'(s)\|_F.
\label{alphaj_derivative_bound}
\end{align}

\section{Lipschitz Continuity of the Inverse Matrices in the MCR$^2$ Objective}
\label{app:proof-lemma-third}
Based on Lemma \ref{lem:2}, we further establish the Lipschitz continuity of the inverse matrices appearing in the MCR$^2$ update with respect to $\bm Z$.
\begin{lemma} \label{lem:3}
For any $\bm Z_1,\bm Z_2 \in\mathcal M$, where $\mathcal M=(\mathbb S^{n-1})^m$, there exists a constant $ L_{\rm inv} > 0 $ such that
\begin{align}
&
\left\|
\left(
\alpha(\bm Z_1)\bm I+
\frac{n}{m\epsilon^2}\bm Z_1\bm Z_1^{\rm T}
\right)^{-1}
-
\left(
\alpha(\bm Z_2)\bm I+
\frac{n}{m\epsilon^2}\bm Z_2\bm Z_2^{\rm T}
\right)^{-1}
\right\|_F
\nonumber\\
&\qquad\leq
L_{\rm inv}
\|\bm Z_1-\bm Z_2\|_F,
\end{align}
where
\begin{align}
L_{\rm inv}
=\frac{(n+1)^2}{\sqrt{m}}
\left(\pi\sqrt{n}+\frac{2n}{\epsilon^2}\right).
\label{L_inv}
\end{align}
\end{lemma}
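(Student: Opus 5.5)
The plan is to use the resolvent identity together with the eigenvalue bounds of Lemma \ref{lem:1} and the Lipschitz bound on $\alpha$ from Lemma \ref{lem:2}. Write
\begin{align}
\bm A_r=\alpha(\bm Z_r)\bm I+\frac{n}{m\epsilon^2}\bm Z_r\bm Z_r^{\rm T},\qquad r=1,2.
\end{align}
Then
\begin{align}
\bm A_1^{-1}-\bm A_2^{-1}=\bm A_1^{-1}(\bm A_2-\bm A_1)\bm A_2^{-1}.
\end{align}
Applying $\|\bm P\bm Q\bm R\|_F\leq\|\bm P\|_2\|\bm Q\|_F\|\bm R\|_2$ gives
\begin{align}
\|\bm A_1^{-1}-\bm A_2^{-1}\|_F\leq\|\bm A_1^{-1}\|_2\,\|\bm A_1-\bm A_2\|_F\,\|\bm A_2^{-1}\|_2 .
\end{align}
The factor $(n+1)^2$ in $L_{\rm inv}$ should come from $\|\bm A_r^{-1}\|_2\leq n+1$. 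The factor $\frac{1}{\sqrt m}\bigl(\pi\sqrt n+\frac{2n}{\epsilon^2}\bigr)$ should come from $\|\bm A_1-\bm A_2\|_F$.

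\textbf{Step 1 (operator-norm bound on the inverses).} Since $\tr(\bm Z_r\bm Z_r^{\rm T})=m$, the eigenvalues of $\bm A_r$ are $\alpha+q_i/\epsilon^2$, where $\alpha+q_i$ are the eigenvalues controlled by Lemma \ref{lem:1}. In the regime $\epsilon\leq1$ (the practical setting, e.g.\ $\epsilon=0.3$) we have $q_i/\epsilon^2\geq q_i$. Hence $\lambda_{\min}(\bm A_r)\geq\alpha+q_{\min}\geq 1/(n+1)$, and so $\|\bm A_r^{-1}\|_2\leq n+1$.

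This is the step I expect to be the main obstacle, because Lemma \ref{lem:1} is stated for the normalization $n/\tr(\bm Z\bm Z^{\rm T})$ rather than $n/(m\epsilon^2)$. For $\epsilon>1$ I would instead use $\lambda_{\min}\geq\min(1,\epsilon^{-2})/(n+1)$ and absorb the extra factor $\epsilon^{4}$ into the constant. Alternatively, I would state $\epsilon\leq1$ explicitly as a standing assumption.

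\textbf{Step 2 (difference of the matrices).} Split the difference into a regularization part and a Gram part:
\begin{align}
\|\bm A_1-\bm A_2\|_F\leq|\alpha(\bm Z_1)-\alpha(\bm Z_2)|\,\|\bm I\|_F+\frac{n}{m\epsilon^2}\|\bm Z_1\bm Z_1^{\rm T}-\bm Z_2\bm Z_2^{\rm T}\|_F .
\end{align}
For the first term, Lemma \ref{lem:2} together with $\|\bm I\|_F=\sqrt n$ gives at most $\frac{\pi\sqrt n}{\sqrt m}\|\bm Z_1-\bm Z_2\|_F$.

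For the second term, write
\begin{align}
\bm Z_1\bm Z_1^{\rm T}-\bm Z_2\bm Z_2^{\rm T}=\bm Z_1(\bm Z_1-\bm Z_2)^{\rm T}+(\bm Z_1-\bm Z_2)\bm Z_2^{\rm T}.
\end{align}
Use $\|\bm Z_r\|_2\leq\|\bm Z_r\|_F=\sqrt m$, which holds because the columns have unit norm. This gives $\|\bm Z_1\bm Z_1^{\rm T}-\bm Z_2\bm Z_2^{\rm T}\|_F\leq 2\sqrt m\|\bm Z_1-\bm Z_2\|_F$. The second term is therefore at most $\frac{2n}{\sqrt m\,\epsilon^2}\|\bm Z_1-\bm Z_2\|_F$.

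\textbf{Step 3 (combine).} Multiplying the bound $(n+1)^2$ from Step 1 by the bound from Step 2 yields exactly
\begin{align}
L_{\rm inv}=\frac{(n+1)^2}{\sqrt m}\Bigl(\pi\sqrt n+\frac{2n}{\epsilon^2}\Bigr).
\end{align}
The same argument, with $\alpha_j$, $\bm\Pi_j$, the bound from \eqref{alpha_j_inequality}, and $\|\bm\Pi_j\|_2=1$, gives the analogous statement for the class-wise inverses.
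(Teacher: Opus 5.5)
Your proposal is correct and follows the paper's proof essentially step for step. You use the resolvent identity with the bound $\|\bm A_1^{-1}\|_2\|\bm A_1-\bm A_2\|_F\|\bm A_2^{-1}\|_2$ (your $\bm A_r$ is the paper's $\bm Q_r$), obtain $\|\bm A_r^{-1}\|_2\leq n+1$ from Lemma \ref{lem:1} by comparing with the $n/\tr(\bm Z\bm Z^{\rm T})$-normalized matrix (the paper phrases this as a Loewner-order inequality, you argue eigenvalue-wise), and split $\bm A_1-\bm A_2$ using Lemma \ref{lem:2} and the $2\sqrt m$ Gram bound. The paper also relies on $0<\epsilon^2<1$ tacitly at exactly the step you flagged, so taking $\epsilon\leq1$ as a standing assumption is the right resolution; your $\epsilon>1$ variant would give a Lipschitz bound, but not with the stated constant $L_{\rm inv}$.
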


\begin{proof}
Let
\begin{align}
\bm Q_1&=\alpha(\bm Z_1)\bm I+\frac{n}{m\epsilon^2}\bm Z_1\bm Z_1^{\rm T},  \label{def_Q1} \\
\bm Q_2&=\alpha(\bm Z_2)\bm I+\frac{n}{m\epsilon^2}\bm Z_2\bm Z_2^{\rm T}.  \label{def_Q2}
\end{align}
Using the identity for the difference of two inverse matrices and the norm inequality for matrix products, we obtain
\begin{align}
\|\bm Q_1^{-1}-\bm Q_2^{-1}\|_F
&=
\|\bm Q_1^{-1}(\bm Q_2-\bm Q_1)\bm Q_2^{-1}\|_F \\
&\leq
\|\bm Q_1^{-1}\|_2
\|\bm Q_2-\bm Q_1\|_F
\|\bm Q_2^{-1}\|_2.   \label{Q_sum}
\end{align}
The eigenvalues of $\bm{Q}_i$ are bounded below by a positive constant. Specifically,
\begin{align}
\bm Q_i - \left(\alpha(\bm Z_i)\bm I+ \frac{n}{\tr(\bm Z_i\bm Z_i^{\rm T})}\bm Z_i\bm Z_i^{\rm T}\right)
&= \left( \frac{n}{m\epsilon^2} - \frac{n}{m} \right) \bm{Z}_i \bm{Z}_i^{\rm T} \\
&= \frac{n}{m} \left( \frac{1}{\epsilon^2} - 1 \right) \bm{Z}_i \bm{Z}_i^{\rm T}.
\end{align}
Since $ 0 < \epsilon^2 < \tr(\bm{Z}_{i}\bm{Z}_{i}^{\rm T}) / m = 1 $ and $ \bm{Z}_{i}\bm{Z}_{i}^{\rm T} \succeq 0 $, we have
\begin{align}
\frac{n}{m} \left( \frac{1}{\epsilon^2} - 1 \right) \bm{Z}_i \bm{Z}_i^{\rm T} \succeq 0.
\end{align}
Thus,
\begin{align}
\bm Q_i \succeq \left(\alpha(\bm Z_i)\bm I+ \frac{n}{\tr(\bm Z_i\bm Z_i^{\rm T})}\bm Z_i\bm Z_i^{\rm T}\right). \label{eigen_Qi}
\end{align}
By Lemma \ref{lem:1} and \eqref{eigen_Qi}, we have
\begin{align}
\lambda_{\min}(\bm Q_i)
&\geq
\lambda_{\min}\left(
\alpha(\bm Z_i)\bm I+
\frac{n}{\tr(\bm Z_i\bm Z_i^{\rm T})}
\bm Z_i\bm Z_i^{\rm T}
\right)\\
&\geq
\frac{1}{n+1}.
\end{align}
Then, it follows that
\begin{align}
\|\bm Q_i^{-1}\|_2
\leq
n+1,
\quad i=1,2.
\label{Q_norm}
\end{align}
It remains to bound $\|\bm Q_2-\bm Q_1\|_F$. We have
\begin{align}
\bm Q_2-\bm Q_1=& (\alpha(\bm Z_2)-\alpha(\bm Z_1))\bm I+\frac{n}{m\epsilon^2}(\bm Z_2\bm Z_2^{\rm T}-\bm Z_1\bm Z_1^{\rm T}).
\end{align}
For the first term, Lemma \ref{lem:2} gives
\begin{align}
\|(\alpha(\bm Z_2)-\alpha(\bm Z_1))\bm I\|_F &= \sqrt n|\alpha(\bm Z_2)-\alpha(\bm Z_1)| \\
&\leq\sqrt n L_\alpha\|\bm Z_2-\bm Z_1\|_F.    \label{first_term}
\end{align}
For the second term, we have
\begin{align}
\|\bm Z_2\bm Z_2^{\rm T}-\bm Z_1\bm Z_1^{\rm T}\|_F
&=
\|(\bm Z_2-\bm Z_1)\bm Z_2^{\rm T}
+\bm Z_1(\bm Z_2-\bm Z_1)^{\rm T}\|_F
\\
&\leq
\|(\bm Z_2-\bm Z_1)\bm Z_2^{\rm T}\|_F
+\|\bm Z_1(\bm Z_2-\bm Z_1)^{\rm T}\|_F
\\
&\leq
2\sqrt m\|\bm Z_2-\bm Z_1\|_F, \label{second_term}
\end{align}
where the unit-norm column constraint is used. Therefore, combining \eqref{first_term} and \eqref{second_term}, it follows that
\begin{align}
\|\bm Q_2-\bm Q_1\|_F &\leq \|(\alpha(\bm Z_2)-\alpha(\bm Z_1))\bm I\|_F+\frac{n}{m\epsilon^2}\|(\bm Z_2\bm Z_2^{\rm T}-\bm Z_1\bm Z_1^{\rm T})\|_F  \\
&\leq \left(\sqrt n L_\alpha+\frac{2n\sqrt m}{m\epsilon^2}\right)\|\bm Z_2-\bm Z_1\|_F.  \label{Q_F}
\end{align}
Finally, combining \eqref{Q_sum}, \eqref{Q_norm}, and \eqref{Q_F}, it can be obtained that
\begin{align}
&\|\bm Q_1^{-1}-\bm Q_2^{-1}\|_F  \notag \\
&\leq (n+1)^2
\left(
\sqrt n L_\alpha+\frac{2n\sqrt m}{m\epsilon^2}
\right)
\|\bm Z_1-\bm Z_2\|_F \\
&=
\frac{(n+1)^2}{\sqrt m}
\left(
\pi\sqrt n+\frac{2n}{\epsilon^2}
\right)
\|\bm Z_1-\bm Z_2\|_F \\
&=
L_{\rm inv}
\|\bm Z_1-\bm Z_2\|_F.
\end{align}
where $L_{\rm inv} $ is defined as in \eqref{L_inv}. This completes the proof.
\end{proof}
Similarly, applying Lemma \ref{lem:3} to the class-wise submatrices and using $\|\bm Z_1^j-\bm Z_2^j\|_F \leq\|\bm Z_1-\bm Z_2\|_F$, we obtain
\begin{align}
&
\left\|
\left(
\alpha_j(\bm Z_1^j)\bm I
+
\frac{n}{\tr(\bm\Pi_j)\epsilon^2}
\bm Z_1\bm\Pi_j\bm Z_1^{\rm T}
\right)^{-1}
-
\left(
\alpha_j(\bm Z_2^j)\bm I
+
\frac{n}{\tr(\bm\Pi_j)\epsilon^2}
\bm Z_2\bm\Pi_j\bm Z_2^{\rm T}
\right)^{-1}
\right\|_F
\nonumber\\
&\qquad\leq
L^j_{{\rm inv}}
\|\bm Z_1-\bm Z_2\|_F .
\end{align}
where
\begin{align}
L^j_{{\rm inv}}=\frac{(n+1)^2}{\sqrt{m_j}}\left(\pi\sqrt{n}+\frac{2n}{\epsilon^2}\right).
\end{align}
The above results establish the Lipschitz continuity of the inverse-matrix mappings induced by the recomputed parameters $\alpha(\bm{Z})$ and $\alpha_j(\bm{Z}^j)$ with respect to $\bm{Z}$.

\section{Lipschitz Continuity of the Riemannian Update Mapping}
\label{app:proof-lemma-fourth}
Based on Lemmas \ref{lem:1} and \ref{lem:3}, the Lipschitz continuity of the Riemannian update mapping on the product manifold is established. At each $\bm{Z}$, the parameters $\alpha(\bm{Z})$ and $\alpha_j(\bm{Z}^j)$ are first recomputed and then treated as fixed when differentiating the MCR$^2$ objective with respect to $\bm{Z}$. The resulting Euclidean update direction is given by
\begin{align}
\bm{G}(\bm{Z})=&\frac{n}{m\epsilon^2}\left(\alpha(\bm Z)\bm I+\frac{n}{m\epsilon^2}\bm{Z}\bm{Z}^{\rm T}\right)^{-1}\bm{Z}  \notag \\
&-\sum_{j=1}^{k}\frac{n}{m\epsilon^2}\left(\alpha_j(\bm Z^j)\bm I+\frac{n}{\tr(\bm{\Pi}_j)\epsilon^2}\bm{Z}\bm{\Pi}_{j}\bm{Z}^{\rm T}\right)^{-1}\bm{Z}\bm{\Pi}_{j}.
\end{align}
This yields the following lemma.
\begin{lemma}\label{lem:4}
For any $\bm Z_1,\bm Z_2\in\mathcal M=(\mathbb S^{n-1})^m$, there exists a constant
$L_{\rm grad}>0$ such that
\begin{align}
\|\bm{G}_{\rm T}(\bm Z_1)-\bm{G}_{\rm T}(\bm Z_2)\|_F
\leq L_{\rm grad} \|\bm Z_1-\bm Z_2\|_F,
\end{align}
where $ \bm{G}_{\rm T}(\bm Z_1), \bm{G}_{\rm T}(\bm Z_2) $ denote the Riemannian update directions at $\bm{Z}_1$ and $\bm{Z}_2$, respectively.
\end{lemma}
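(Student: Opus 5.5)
We prove Lemma \ref{lem:4} in two steps. First we show that the Euclidean direction $\bm G(\bm Z)$ is Lipschitz and bounded on $\mathcal M$. Then we transfer both properties to the column-wise tangent projection $\bm g_{{\rm T},i}=(\bm I-\bm z_i\bm z_i^{\rm T})\bm g_i$.

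Write $\bm E(\bm Z)=\frac{n}{m\epsilon^2}\bm Q(\bm Z)^{-1}$ and $\bm C_j(\bm Z)=\frac{n}{m\epsilon^2}\bm Q_j(\bm Z)^{-1}$, where $\bm Q,\bm Q_j$ are the matrices of Lemma \ref{lem:3}. For the expansion term we use
\begin{align}
\bm E(\bm Z_1)\bm Z_1-\bm E(\bm Z_2)\bm Z_2=(\bm E(\bm Z_1)-\bm E(\bm Z_2))\bm Z_1+\bm E(\bm Z_2)(\bm Z_1-\bm Z_2).
\end{align}
The first piece is bounded by $\frac{n}{m\epsilon^2}L_{\rm inv}\|\bm Z_1\|_2\|\bm Z_1-\bm Z_2\|_F$, using Lemma \ref{lem:3} and $\|\bm Z_1\|_2\le\|\bm Z_1\|_F=\sqrt m$. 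The second piece is bounded by $\frac{n(n+1)}{m\epsilon^2}\|\bm Z_1-\bm Z_2\|_F$, using $\|\bm Q^{-1}\|_2\le n+1$ from the argument of Lemma \ref{lem:3}. Together these give a constant of the form $\frac{n(n+1)}{m\epsilon^2}[1+(n+1)(\pi\sqrt n+\frac{2n}{\epsilon^2})]$, which matches $L_{\rm E}$ in Proposition \ref{pro:1} and accounts for the factor $(m+1)$ there.

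The compression terms $\bm C_j\bm Z\bm\Pi_j$ split the same way, with $\bm Z\bm\Pi_j$ in place of $\bm Z$, since $\|\bm\Pi_j\|_2=1$. These give $k$ further contributions of the same form, which accounts for the factor $(k+1)$. One bookkeeping difficulty is that $L^j_{\rm inv}$ carries $1/\sqrt{m_j}$ rather than $1/\sqrt m$. To obtain a uniform constant, I would bound $\|\bm Z\bm\Pi_j\|_2\le\sqrt{m_j}$, so that the $\sqrt{m_j}$ cancels, and then crudely absorb the $m$-dependence into the factor $(m+1)$.

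In the same step we need a uniform bound on $\bm G$. Rather than bounding each of the $k+1$ terms by $\frac{n(n+1)}{m\epsilon^2}\sqrt m$ separately, I would bound the total directly:
\begin{align}
\|\bm G(\bm Z)\|_F\le\frac{n(n+1)}{m\epsilon^2}\Bigl(\|\bm Z\|_F+\bigl(\textstyle\sum_j\|\bm Z\bm\Pi_j\|_F^2\bigr)^{1/2}\Bigr)\le\frac{2n(n+1)}{\sqrt m\,\epsilon^2}.
\end{align}
This works because the $\bm\Pi_j$ partition the columns, so $\sum_j\|\bm Z\bm\Pi_j\|_F^2=\|\bm Z\|_F^2=m$. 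The same bound then holds for $\bm G_{\rm T}$, because projecting each column onto the tangent space does not increase its norm.

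Second, we handle the projection column by column:
\begin{align}
\bm g_{{\rm T},i}(\bm Z_1)-\bm g_{{\rm T},i}(\bm Z_2)=\bm P_{1,i}(\bm g_{1,i}-\bm g_{2,i})+(\bm P_{1,i}-\bm P_{2,i})\bm g_{2,i},
\end{align}
with $\bm P_{s,i}=\bm I-\bm z_{s,i}\bm z_{s,i}^{\rm T}$. The first term is at most $\|\bm g_{1,i}-\bm g_{2,i}\|_2$, since $\|\bm P_{1,i}\|_2\le1$. For the second, $\|\bm P_{1,i}-\bm P_{2,i}\|_2\le2\|\bm z_{1,i}-\bm z_{2,i}\|_2$. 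Combining this with the uniform bound on $\bm G(\bm Z_2)$ and summing over columns via $\|\cdot\|_F$ gives
\begin{align}
\|\bm G_{\rm T}(\bm Z_1)-\bm G_{\rm T}(\bm Z_2)\|_F\le\Bigl(L_G+\frac{2n(n+1)}{\epsilon^2}\Bigl(1+\frac{1}{\sqrt m}\Bigr)\Bigr)\|\bm Z_1-\bm Z_2\|_F .
\end{align}
Here $L_G$ is the Lipschitz constant of $\bm G$ from the first step. The extra term comes from a per-column bound $\max_i\|\bm g_{2,i}\|_2$, which is controlled by $\|\bm E\|_2+\|\bm C_j\|_2$ times a unit column.

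The main obstacle is the per-column bound $\max_i\|\bm g_i\|_2$ in the projection-difference term. A Frobenius bound on $\bm G$ does not directly give a column bound of the right order. I would bound each column as $\|\bm E\bm z_i\|_2+\|\bm C_{j(i)}\bm z_i\|_2\le 2\frac{n(n+1)}{m\epsilon^2}$, then inflate generously to the stated constant.
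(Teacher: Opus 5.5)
Your proof is correct. The first step is the paper's: the splitting of $\bm E\bm Z$ and $\bm C_j\bm Z\bm\Pi_j$, the cancellation $\sqrt{m_j}\,L^j_{\rm inv}=\sqrt m\,L_{\rm inv}$, and the partition argument for $\|\bm G(\bm Z)\|_F\le 2n(n+1)/(\sqrt m\,\epsilon^2)$. It gives $\|\bm G(\bm Z_1)-\bm G(\bm Z_2)\|_F\le (k+1)L_{\rm E}\|\bm Z_1-\bm Z_2\|_F$. Your handling of the tangent projection is genuinely different.

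The paper works at the matrix level with $\bm G_{\rm T}(\bm Z)=\bm G(\bm Z)-\bm Z\diag(\bm Z^{\rm T}\bm G(\bm Z))$. It bounds $\|\bm Z_1\bm D_1-\bm Z_2\bm D_2\|_F$ by $\|\bm Z_1\|_2\|\bm D_1-\bm D_2\|_F+\|\bm Z_1-\bm Z_2\|_F\|\bm D_2\|_2$ and uses $\|\bm Z_i\|_2\le\sqrt m$ twice. This multiplies $\bm G(\bm Z_1)-\bm G(\bm Z_2)$ by $m$, and that is exactly where the term $m(k+1)L_{\rm E}$, hence the factor $(m+1)$ in $L_{\rm grad}$, comes from. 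These $\sqrt m$ losses are avoidable; for instance, $\|\bm Z_1(\bm D_1-\bm D_2)\|_F=\|\bm D_1-\bm D_2\|_F$ for unit columns.

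Your column-wise splitting with the non-expansive projectors $\bm P_{1,i}$ lets the $\bm G$-difference enter only once. Before your generous inflation, you actually obtain the constant $(k+1)L_{\rm E}+4n(n+1)/(m\epsilon^2)$. This is no larger than the paper's $L_{\rm grad}$, so it also certifies the value used in Proposition 1. Through $L_{\rm s}$, it would also make the step-size condition of Theorem 1 less conservative. The paper's route is a more mechanical matrix computation, while yours is tighter.

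Two of your side remarks are off, though harmless. First, the factor $(m+1)$ is not produced by the expansion-term computation, and in your route nothing needs to be absorbed into it. Second, the ``main obstacle'' is not one. The paper simply uses $\max_i\|\bm g_{2,i}\|_2\le\|\bm G(\bm Z_2)\|_F\le 2n(n+1)/(\sqrt m\,\epsilon^2)$. That already gives your displayed constant, because $4/\sqrt m\le 2(1+1/\sqrt m)$. Your per-column bound $2n(n+1)/(m\epsilon^2)$ is a sharpening, not a necessity.
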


\begin{proof}
The Riemannian update direction on the product of spheres is obtained by projecting the Euclidean update direction onto the tangent space. It follows from \eqref{gt_formula} that
\begin{align}
\bm{G}_{\rm T}(\bm Z) = \bm{G}(\bm{Z}) - \bm{Z} \operatorname{diag}(\bm Z^{\rm T} \bm{G}(\bm{Z})).
\end{align}
For any $\bm Z_1,\bm Z_2\in\mathcal M$, we have
\begin{align}
\|\bm G_{\rm T}(\bm Z_1)-\bm G_{\rm T}(\bm Z_2)\|_F\leq&\|\bm G(\bm Z_1)-\bm G(\bm Z_2)\|_F+\|\bm Z_1\operatorname{diag}(\bm Z_1^{\rm T}\bm G(\bm Z_1))  \notag \\
&-\bm Z_2\operatorname{diag}(\bm Z_2^{\rm T}\bm G(\bm Z_2))\|_F. \label{grad_Gt}
\end{align}

We first establish a Lipschitz bound for the Euclidean update direction $ \bm{G}(\bm{Z}) $. Define $ \bm Q_i^j=\alpha_j(\bm Z_i^j)\bm I+ n / (\tr(\bm{\Pi}_j)\epsilon^2) \bm{Z}_i \bm{\Pi}_j \bm{Z}_i^{\rm T} $. Combining \eqref{def_Q1}, \eqref{def_Q2}, and the triangle inequality for the Frobenius norm yields the following upper bound:
\begin{align}
&\big\|\bm G(\bm Z_1)-\bm G(\bm Z_2)\big\|_F  \notag \\
&= \frac{n}{m\epsilon^2} \Bigg\|
\bm{Q}_1^{-1}\bm{Z}_1 - \bm{Q}_2^{-1}\bm{Z}_2
-\sum_{j=1}^k \Big((\bm{Q}_1^j)^{-1}\bm{Z}_1\bm{\Pi}_j - (\bm{Q}_2^j)^{-1}\bm{Z}_2\bm{\Pi}_j\Big)
\Bigg\|_F \\
&\leq \frac{n}{m\epsilon^2} \Big\| \bm{Q}_1^{-1}\bm{Z}_1 - \bm{Q}_2^{-1}\bm{Z}_2 \Big\|_F + \frac{n}{m\epsilon^2}\sum_{j=1}^k \Big\| (\bm{Q}_1^j)^{-1}\bm{Z}_1\bm{\Pi}_j - (\bm{Q}_2^j)^{-1}\bm{Z}_2\bm{\Pi}_j \Big\|_F. \label{grad_G}
\end{align}
By analyzing the first term in \eqref{grad_G}, we can obtain
\begin{align}
&\frac{n}{m\epsilon^2}\|\bm Q_1^{-1}\bm Z_1-\bm Q_2^{-1}\bm Z_2\|_F  \notag \\
&=\frac{n}{m\epsilon^2}\left\|\bm Q_1^{-1}(\bm Z_1-\bm Z_2)+(\bm Q_1^{-1}-\bm Q_2^{-1})\bm Z_2\right\|_F \\
&\leq\frac{n}{m\epsilon^2}\left(\|\bm Q_1^{-1}(\bm Z_1-\bm Z_2)\|_F+\|(\bm Q_1^{-1}-\bm Q_2^{-1})\bm Z_2\|_F\right)  \\
&\leq\frac{n}{m\epsilon^2}\left(\|\bm Q_1^{-1}\|_2\|\bm Z_1-\bm Z_2\|_F+\|\bm Q_1^{-1}-\bm Q_2^{-1}\|_F\|\bm Z_2\|_2\right)\\
&\leq\frac{n}{m\epsilon^2}\left(\|\bm Q_1^{-1}\|_2+\sqrt m\,L_{\rm inv}\right)\|\bm Z_1-\bm Z_2\|_F. \label{LE_first_term_1} \\
&\leq
\frac{n}{m\epsilon^2}
\biggl[
(n+1)
+
(n+1)^2
\left(
\pi\sqrt{n}
+
\frac{2n}{\epsilon^2}
\right)
\biggr]
\|\bm Z_1-\bm Z_2\|_F
\label{LE_first_term_2} \\
&=
\frac{n(n+1)}{m\epsilon^2}
\biggl[
1
+
(n+1)
\left(
\pi\sqrt{n}
+
\frac{2n}{\epsilon^2}
\right)
\biggr]
\|\bm Z_1-\bm Z_2\|_F \\
&=
L_{\rm E}
\|\bm Z_1-\bm Z_2\|_F,
\label{LE}
\end{align}
where
\begin{align}
L_{\rm E}=\frac{n(n+1)}{m\epsilon^2}\biggl[1+(n+1)\left(\pi\sqrt{n}+\frac{2n}{\epsilon^2}\right)\biggr].
\end{align}
is a constant. Step \eqref{LE_first_term_1} follows from Lemma \ref{lem:3} and the fact that $\|\bm Z_2\|_2\leq\|\bm Z_2\|_F=\sqrt{m}$, while Step \eqref{LE_first_term_2} follows from Lemma \ref{lem:1} and the definition of $L_{\rm inv}$
in Lemma \ref{lem:3}. Similarly, from an analysis of the second term, we can obtain
\begin{align}
\frac{n}{m\epsilon^2}\sum_{j=1}^k \Big\| (\bm{Q}_1^j)^{-1}\bm{Z}_1\bm{\Pi}_j - (\bm{Q}_2^j)^{-1}\bm{Z}_2\bm{\Pi}_j \Big\|_F
\leq k L_{\rm E}  \|\bm Z_1-\bm Z_2\|_F.  \label{KLE}
\end{align}
Here, we use $\|\bm Z\bm\Pi_j\|_2\leq\sqrt{m_j}$ and $\sqrt{m_j}L_{\rm inv}^j=\sqrt m\,L_{\rm inv}$, so that each class-wise term is bounded by $L_{\rm E}\|\bm Z_1-\bm Z_2\|_F$.

Then the second term in \eqref{grad_Gt} is analyzed. For notational simplicity, set $ \bm{D}_i = \mathrm{diag}\big(\bm{Z}_i^\mathrm{T} \bm{G}(\bm{Z}_i)\big), \ i = 1,2 $. By the triangle inequality of the Frobenius norm, we have
\begin{align}
\| \bm{Z}_1 \bm{D}_1 - \bm{Z}_2 \bm{D}_2 \|_F&= \| \bm{Z}_1 (\bm{D}_1 - \bm{D}_2) + (\bm{Z}_1 - \bm{Z}_2) \bm{D}_2 \|_F  \\
&\leq \| \bm{Z}_1 (\bm{D}_1 - \bm{D}_2) \|_F + \|(\bm{Z}_1 - \bm{Z}_2) \bm{D}_2 \|_F \\
&\leq \|\bm{Z}_1\|_2 \big\| \bm{D}_1 - \bm{D}_2 \big\|_F + \|\bm{Z}_1 - \bm{Z}_2\|_F \|\bm{D}_2\|_2. \label{D1}
\end{align}
Consider the first term of \eqref{D1},
\begin{align}
&\|\bm{Z}_1\|_2 \big\| \bm{D}_1 - \bm{D}_2 \big\|_F  \notag \\
&\leq \sqrt{m}\big\| \bm{D}_1 - \bm{D}_2 \big\|_F  \\
&\leq \sqrt{m} \left\| \bm{Z}_1^T \bm{G}(\bm{Z}_1) - \bm{Z}_2^T \bm{G}(\bm{Z}_2) \right\|_F   \\
&= \sqrt{m}  \left\| (\bm{Z}_1 - \bm{Z}_2)^T \bm{G}(\bm{Z}_1) + \bm{Z}_2^T \big(\bm{G}(\bm{Z}_1) - \bm{G}(\bm{Z}_2)\big)\right\|_F \\
&\leq \sqrt{m} \| \bm{G}(\bm{Z}_1) \|_2 \| \bm{Z}_1 - \bm{Z}_2 \|_F + \sqrt{m}\| \bm{Z}_2 \|_2 \| \bm{G}(\bm{Z}_1) - \bm{G}(\bm{Z}_2) \|_F \\
&\leq \sqrt{m} \| \bm{G}(\bm{Z}_1) \|_F \| \bm{Z}_1 - \bm{Z}_2 \|_F + \sqrt{m}\| \bm{Z}_2 \|_2 \| \bm{G}(\bm{Z}_1) - \bm{G}(\bm{Z}_2) \|_F. \label{D2}
\end{align}

The upper bound of $\|\bm G(\bm Z_1)\|_F$ can be derived using the inverse-matrix bounds established above.
\begin{align}
\|\bm G(\bm Z_1)\|_F
&=\frac{n}{m\epsilon^2}\left\|\bm Q_1^{-1}\bm Z_1-\sum_{j=1}^{k}(\bm Q_1^j)^{-1}\bm Z_1\bm\Pi_j\right\|_F\\
&\leq\frac{n}{m\epsilon^2}\left(\|\bm Q_1^{-1}\bm Z_1\|_F+\left\|\sum_{j=1}^{k}(\bm Q_1^j)^{-1}\bm Z_1\bm\Pi_j\right\|_F\right)\\
&\leq\frac{n}{m\epsilon^2}\left(\|\bm Q_1^{-1}\|_2\|\bm Z_1\|_F+(n+1)\|\bm Z_1\|_F\right)\label{Qj}\\
&\leq\frac{2n(n+1)}{m\epsilon^2}\|\bm Z_1\|_F\\
&=\frac{2n(n+1)}{\sqrt m\,\epsilon^2}.
\label{G_bound_value}
\end{align}
Step \eqref{Qj} follows from $ \|(\bm Q_1^j)^{-1}\|_2\leq n+1 $, $j=1,2, \dots, k$, together with $\sum_{j=1}^k\bm\Pi_j=\bm I$, and $\bm\Pi_i\bm\Pi_j=\bm 0$ for $i\neq j$. Therefore, combining \eqref{LE} and \eqref{KLE} gives the following explicit upper bound for \eqref{D2}:
\begin{align}
&\sqrt{m}\,\|\bm G(\bm Z_1)\|_F\|\bm Z_1-\bm Z_2\|_F
+\sqrt{m}\,\|\bm Z_2\|_2
\|\bm G(\bm Z_1)-\bm G(\bm Z_2)\|_F
\notag\\
&\qquad\leq
\left(
\frac{2n(n+1)}{\epsilon^2}
+
m(k+1)L_{\rm E}
\right)
\|\bm Z_1-\bm Z_2\|_F.
\label{KE}
\end{align}

To bound the second term in \eqref{D1}, let $\bm z_{2,i}$ and $\bm g_{2,i}$ denote the $i$-th columns of $\bm Z_2$ and $\bm G(\bm Z_2)$, respectively. Since
\begin{align}
\bm D_2
&=
\operatorname{diag}
\left(
\bm Z_2^{\rm T}\bm G(\bm Z_2)
\right)\\
&=
\operatorname{diag}
\left(
\bm z_{2,1}^{\rm T}\bm g_{2,1},
\ldots,
\bm z_{2,m}^{\rm T}\bm g_{2,m}
\right),
\end{align}
the spectral norm of $\bm D_2$ satisfies
\begin{align}
\|\bm D_2\|_2
&=
\max_{1\leq i\leq m}
\left|\bm z_{2,i}^{\rm T}\bm g_{2,i} \right|\\
&\leq
\max_{1\leq i\leq m}
\left(
\|\bm z_{2,i}\|_2
\|\bm g_{2,i}\|_2
\right)
\label{D2_1}\\
&=
\max_{1\leq i\leq m}
\|\bm g_{2,i}\|_2\\
&\leq
\sqrt{
\sum_{i=1}^{m}
\|\bm g_{2,i}\|_2^2
}
\label{D2_2}\\
&=
\|\bm G(\bm Z_2)\|_F\\
&\leq
\frac{2n(n+1)}
{\sqrt m\,\epsilon^2}.
\label{D2_norm}
\end{align}
Step \eqref{D2_1} follows from the Cauchy-Schwarz inequality. Step \eqref{D2_2} holds since each term is nonnegative, and the square of the maximum term does not exceed the sum of the squares of all terms. The last inequality follows by applying the same bound as in \eqref{G_bound_value} to $\bm G(\bm Z_2)$. Consequently,
\begin{align}
\|\bm Z_1-\bm Z_2\|_F\|\bm D_2\|_2
\leq
\frac{2n(n+1)}
{\sqrt m\,\epsilon^2}
\|\bm Z_1-\bm Z_2\|_F.
\label{D2_second}
\end{align}

Finally, combining Equations \eqref{grad_Gt}, \eqref{LE}, \eqref{KLE}, \eqref{KE} and \eqref{D2_second}, the upper bound of $ \|\bm G_{\rm T}(\bm Z_1)-\bm G_{\rm T}(\bm Z_2)\|_F $ is given by
\begin{align}
&\|\bm G_{\rm T}(\bm Z_1)-\bm G_{\rm T}(\bm Z_2)\|_F \notag \\
&\leq
\Bigg[
(m+1)(k+1)L_{\rm E}
+
\frac{2n(n+1)}{\epsilon^2}
\left(
1+\frac{1}{\sqrt m}
\right)
\Bigg]
\|\bm Z_1-\bm Z_2\|_F \\
&=
L_{\rm grad}\|\bm Z_1-\bm Z_2\|_F,
\end{align}
where
\begin{align}
L_{\rm grad}
=
(m+1)(k+1)L_{\rm E}
+
\frac{2n(n+1)}{\epsilon^2}
\left(
1+\frac{1}{\sqrt m}
\right).
\end{align}
This completes the proof.
\end{proof}

\section{Proof of Proposition \ref{pro:1}}
\label{app:proof-proposition}
Now, we are ready to establish Proposition \ref{pro:1}. Fix an arbitrary point $\bm Z=[\bm z_1,\ldots,\bm z_m]\in\mathcal M$ and an arbitrary tangent vector $\bm\xi=[\bm\xi_1,\ldots,\bm\xi_m]$. Since $\bm\xi\in T_{\bm Z}\mathcal M$, we have
\begin{align}
\bm z_i^{\rm T}\bm\xi_i=0, \qquad i=1,\ldots,m.
\label{tangent_condition}
\end{align}
Consider
\begin{align}
\bm Z(s)=\operatorname{Geo}(\bm Z, s\bm\xi), \quad s\in[0,1],
\label{geodesic_Z}
\end{align}
which satisfies
\begin{align}
\bm Z(0)=\bm Z, \quad \bm Z(1)=\operatorname{Geo}(\bm Z, \bm\xi).
\end{align}
For $\bm\xi_i \neq \bm 0$, the $i$-th column of $\bm Z(s)$ is
\begin{align}
\bm z_i(s)=\cos(s\|\bm\xi_i\|_2)\bm z_i + \sin(s\|\bm\xi_i\|_2)\frac{\bm\xi_i}{\|\bm\xi_i\|_2},
\label{column_geodesic}
\end{align}
whereas $\bm z_i(s)=\bm z_i$ when $\bm\xi_i= \bm 0$. Differentiating \eqref{column_geodesic} with respect to $s$ gives
\begin{align}
\frac{\mathrm d\bm z_i(s)}{\mathrm ds} = -\|\bm\xi_i\|_2 \sin\bigl(s\|\bm\xi_i\|_2\bigr)\bm z_i + \bm\xi_i\cos\bigl(s\|\bm\xi_i\|_2\bigr).
\label{column_geodesic_derivative}
\end{align}
Using $\|\bm z_i\|_2=1$ and $\bm z_i^{\rm T}\bm\xi_i=0$, we obtain
\begin{align}
\left\| \frac{\mathrm d\bm z_i(s)}{\mathrm ds} \right\|_2^2
&= \|\bm\xi_i\|_2^2 \sin^2\bigl(s\|\bm\xi_i\|_2\bigr) \|\bm z_i\|_2^2+\|\bm\xi_i\|_2^2\cos^2\bigl(s\|\bm\xi_i\|_2\bigr)
 \notag\\
&\quad-2\|\bm\xi_i\|_2 \sin\bigl(s\|\bm\xi_i\|_2\bigr) \cos\bigl(s\|\bm\xi_i\|_2\bigr)\bm z_i^{\rm T}\bm\xi_i\\
&=\|\bm\xi_i\|_2^2\left[\sin^2\bigl(s\|\bm\xi_i\|_2\bigr)+\cos^2\bigl(s\|\bm\xi_i\|_2\bigr)\right]\\
&=\|\bm\xi_i\|_2^2.
\end{align}
Consequently,
\begin{align}
\left\|
\frac{\mathrm d\bm Z(s)}{\mathrm ds}
\right\|_F^2
&=
\sum_{i=1}^{m}
\left\|
\frac{\mathrm d\bm z_i(s)}{\mathrm ds}
\right\|_2^2\\
&=
\sum_{i=1}^{m}
\|\bm\xi_i\|_2^2\\
&=
\|\bm\xi\|_F^2.  \label{Zsd_ori}
\end{align}
It follows from \eqref{Zsd_ori} that
\begin{align}
\left\| \frac{\mathrm{d}\bm Z(s)}{\mathrm{d}s} \right\|_F = \|\bm\xi\|_F . \label{Zsd}
\end{align}
Moreover, using $ 2\left|\sin\bigl(s\|\bm\xi_i\|_2 / 2\bigr)\right|\leq s\|\bm\xi_i\|_2 $ and \eqref{column_geodesic}, we obtain
\begin{align}
\|\bm Z(s)-\bm Z\|_F^2&=\sum_{i=1}^{m}\|\bm z_i(s)-\bm z_i\|_2^2\\
&=\sum_{i=1}^{m}\left( \left[\cos\bigl(s\|\bm\xi_i\|_2\bigr)-1\right]^2+\sin^2\bigl(s\|\bm\xi_i\|_2\bigr) \right)\\
&=\sum_{i=1}^{m}\bigl(2-2\cos\bigl(s\|\bm\xi_i\|_2\bigr)\bigr)\\
&=\sum_{i=1}^{m}\left(4\sin^2\left(\frac{s\|\bm\xi_i\|_2}{2}\right)\right) \\
&\leq s^2\sum_{i=1}^{m}\|\bm\xi_i\|_2^2 \\
&=s^2\|\bm\xi\|_F^2.  \label{ZsZ_ori}
\end{align}
It follows from \eqref{ZsZ_ori} that
\begin{align}
\|\bm Z(s)-\bm Z\|_F \leq s\|\bm\xi\|_F. \label{ZsZ}
\end{align}
Similarly, it follows from \eqref{column_geodesic_derivative} that
\begin{align}
\left\| \frac{\mathrm{d}\bm z_i(s)}{\mathrm{d}s}-\bm\xi_i \right\|_2^2
&=\|\bm\xi_i\|_2^2\left[\sin^2\big(s\|\bm\xi_i\|_2\big)+\big(\cos(s\|\bm\xi_i\|_2)-1\big)^2\right]\\
&=\|\bm\xi_i\|_2^2\Big(2-2\cos\big(s\|\bm\xi_i\|_2\big)\Big)\\
&=4\|\bm\xi_i\|_2^2
\sin^2\left(\frac{s\|\bm\xi_i\|_2}{2}\right)\\
&\leq s^2\|\bm\xi_i\|_2^4.
\end{align}
Then,
\begin{align}
\left\| \frac{\mathrm{d}\bm Z(s)}{\mathrm{d}s}-\bm\xi \right\|_F^2
&= \sum_{i=1}^{m}\left\| \frac{\mathrm{d}\bm z_i(s)}{\mathrm{d}s}-\bm\xi_i \right\|_2^2 \\
&\leq \sum_{i=1}^{m} s^2 \|\bm\xi_i\|_2^4 \\
&= s^2\sum_{i=1}^{m} \|\bm\xi_i\|_2^4 \\
&\leq s^2\left( \sum_{i=1}^{m} \|\bm\xi_i\|_2^2 \right)^2 \\
&= s^2 \|\bm\xi\|_F^4  \label{Zsxi_ori}
\end{align}
It follows from \eqref{Zsxi_ori} that
\begin{align}
\left\| \frac{\mathrm{d}\bm Z(s)}{\mathrm{d}s}-\bm\xi \right\|_F \leq s \|\bm\xi\|_F^2. \label{Zsxi}
\end{align}

Notice that $\bm Z(s)$, $\alpha(s)$, and $\alpha_j(s)$, $j=1,\ldots,k$, are all functions of $s$. Thus, by the definition of \eqref{composite_objective} and the chain rule, we have
\begin{align}
\frac{\mathrm d}{\mathrm ds}F(\bm Z(s))
&=
\left\langle
\frac{\partial \Delta R}{\partial \bm Z(s)},
\frac{\mathrm d\bm Z(s)}{\mathrm ds}
\right\rangle_F
+
\frac{\partial \Delta R}{\partial \alpha(s)}
\frac{\mathrm d\alpha(s)}{\mathrm ds}
+
\sum_{j=1}^{k}
\frac{\partial \Delta R}{\partial \alpha_j(s)}
\frac{\mathrm d\alpha_j(s)}{\mathrm ds}
\\
&=
\left\langle
\bm G(\bm Z(s)),
\frac{\mathrm d\bm Z(s)}{\mathrm ds}
\right\rangle_F
+
\frac{1}{2}
\tr\left(\bm Q(s)^{-1}\right)\alpha'(s)
\notag\\
&\quad
-
\sum_{j=1}^{k}
\frac{m_j}{2m}
\tr\left((\bm Q^j(s))^{-1}\right)
\alpha_j'(s).
\end{align}
where $\langle\cdot,\cdot\rangle_F$ denotes the Frobenius inner product. Define
\begin{align}
E_{\alpha}(s)
&=
\frac{1}{2}
\tr\left(\bm Q(s)^{-1}\right)\alpha'(s)
-
\sum_{j=1}^{k}
\frac{m_j}{2m}
\tr\left((\bm Q^j(s))^{-1}\right)
\alpha_j'(s). \label{E_alpha_s}
\end{align}
Since $\mathrm d\bm Z(s)/\mathrm ds\in T_{\bm Z(s)}\mathcal M$ and $\bm G_{\rm T}(\bm Z(s))$ is the orthogonal projection of $\bm G(\bm Z(s))$ onto $T_{\bm Z(s)}\mathcal M$, we have
\begin{align}
\frac{\mathrm d}{\mathrm ds}F(\bm Z(s))=\left\langle\bm G_{\rm T}(\bm Z(s)),\frac{\mathrm d\bm Z(s)}{\mathrm ds}\right\rangle_F+E_{\alpha}(s). \label{objective_derivative}
\end{align}

To bound $E_{\alpha}(s)$, we first consider the term associated with $\alpha(s)$. By \eqref{eigen_Qi}, we have $\bm Q(s)\succeq\bm A(s)$, and hence $\bm Q(s)^{-1}\preceq\bm A(s)^{-1}$. Therefore, using the expression for $\alpha'(s)$ derived in \eqref{alpha_s_de}, we obtain
\begin{align}
\frac{1}{2}
\tr\left(\bm Q(s)^{-1}\right)|\alpha'(s)|
&=\frac{n}{m}\frac{\tr\left(\bm Q(s)^{-1}\right)}{\tr\left(\bm A(s)^{-1}\right)}
\left|\tr\left(\bm Z(s)^{\rm T}\bm A(s)^{-1}\bm Z'(s)\right)\right|  \\
&\leq
\frac{n}{m}
\left|
\tr\left(
\bm Z(s)^{\rm T}
\bm A(s)^{-1}
\bm Z'(s)
\right)
\right| \label{chain_alpha_1} \\
&\leq
\frac{n}{m}
\sqrt{
\frac{m}{n}
\tr\left(\bm A(s)^{-1}\right)
}
\|\bm Z'(s)\|_F  \label{chain_alpha_2} \\
&\leq
\frac{n\sqrt{n+1}}{\sqrt m}
\|\bm Z'(s)\|_F. \label{chain_alpha_3}
\end{align}
Step \eqref{chain_alpha_1} follows from $ \tr(\bm Q(s)^{-1}) \leq \tr(\bm A(s)^{-1}) $, step \eqref{chain_alpha_2} from \eqref{denominator_1}, and step \eqref{chain_alpha_3} from Lemma \ref{lem:1}.  For the class-wise terms, we have
\begin{align}
\left|\sum_{j=1}^{k}\frac{m_j}{2m}\tr\left((\bm Q^j(s))^{-1}\right)\alpha_j'(s)\right|  \leq\sum_{j=1}^{k}\frac{m_j}{2m}\tr\left((\bm Q^j(s))^{-1}\right)|\alpha_j'(s)| \label{chain_alphaj_0}
\end{align}
Step \eqref{chain_alphaj_0} follows from the triangle inequality and $ \tr\big((\bm{Q}^j(s))^{-1}\big) > 0 $. Then, applying the same argument as in \eqref{chain_alpha_1}--\eqref{chain_alpha_3} to each class-wise term, we obtain
\begin{align}
&\left|
\sum_{j=1}^{k}
\frac{m_j}{2m}
\tr\left((\bm Q^j(s))^{-1}\right)
\alpha_j'(s)
\right|
\notag\\
&\leq\sum_{j=1}^{k}\frac{n\sqrt{(n+1)m_j}}{m}\,\|(\bm Z^j)'(s)\|_F\\
&=\frac{n\sqrt{n+1}}{m}\sum_{j=1}^{k}\sqrt{m_j}\,\|(\bm Z^j)'(s)\|_F\\
&\leq\frac{n\sqrt{n+1}}{m}\sqrt{\sum_{j=1}^{k}m_j}
\sqrt{\sum_{j=1}^{k}\|(\bm Z^j)'(s)\|_F^2}  \label{chain_alphaj_1} \\
&=\frac{n\sqrt{n+1}}{\sqrt m}\|\bm Z'(s)\|_F. \label{chain_alphaj_2}
\end{align}
Step \eqref{chain_alphaj_1} follows from the Cauchy-Schwarz inequality, while step \eqref{chain_alphaj_2} follows from $\sum_{j=1}^{k}m_j=m$ and $\sum_{j=1}^{k}\|(\bm Z^j)'(s)\|_F^2=\|\bm Z'(s)\|_F^2$, since the class-wise selection matrices $\{\bm\Pi_j\}_{j=1}^{k}$ have mutually disjoint supports and partition all samples.

Combining \eqref{E_alpha_s}, \eqref{chain_alpha_3} and \eqref{chain_alphaj_2}, we obtain
\begin{align}
|E_{\alpha}(s)|
&\leq
\left|
\frac{1}{2}
\tr\left(\bm Q(s)^{-1}\right)\alpha'(s)
\right|+
\left|
\sum_{j=1}^{k}
\frac{m_j}{2m}
\tr\left((\bm Q^j(s))^{-1}\right)
\alpha_j'(s)
\right|  \\
&\leq
\frac{2n\sqrt{n+1}}{\sqrt m}
\|\bm Z'(s)\|_F. \label{Es_bound_0}
\end{align}
Let $ C_{\alpha}=2n\sqrt{n+1}\,m^{-1/2} $. From \eqref{Es_bound_0} and \eqref{Zsd}, we obtain
\begin{align}
E_{\alpha}(s) \geq -C_{\alpha}\|\bm\xi\|_F. \label{Es_bound_1}
\end{align}

Integrating \eqref{objective_derivative} over $s\in[0,1]$ and using $\bm Z(0)=\bm Z$ and $\bm Z(1)=\operatorname{Geo}(\bm Z,\bm\xi)$, we obtain
\begin{align}
F\bigl(\operatorname{Geo}(\bm Z,\bm\xi)\bigr)-F(\bm Z)
&=
\int_0^1
\frac{\mathrm d}{\mathrm ds}F(\bm Z(s))
\,\mathrm ds
\notag\\
&=
\int_0^1
\left[
\left\langle
\bm G_{\rm T}\bigl(\bm Z(s)\bigr),
\frac{\mathrm d\bm Z(s)}{\mathrm ds}
\right\rangle_F
+
E_{\alpha}(s)
\right]
\,\mathrm ds.
\label{objective_integral}
\end{align}

Subtracting
$\langle\bm G_{\rm T}(\bm Z),\bm\xi\rangle_F$
from both sides of \eqref{objective_integral} yields
\begin{align}
&F\bigl(
\operatorname{Geo}(\bm Z,\bm\xi)
\bigr)-F(\bm Z)-\left\langle\bm G_{\rm T}(\bm Z),\bm\xi
\right\rangle_F  \notag\\
&=\int_{0}^{1}
\left[
\left\langle
\bm G_{\rm T}(\bm Z(s)),
\frac{\mathrm d\bm Z(s)}{\mathrm ds}
\right\rangle_F+E_{\alpha}(s)
\right]\mathrm ds - \int_{0}^{1}
\left\langle
\bm G_{\rm T}(\bm Z),\bm\xi
\right\rangle_F
\mathrm ds  \\
&=
\int_{0}^{1}
\Bigg[
\left\langle
\bm G_{\rm T}(\bm Z(s)),
\frac{\mathrm d\bm Z(s)}{\mathrm ds}
\right\rangle_F
-
\left\langle
\bm G_{\rm T}(\bm Z),
\frac{\mathrm d\bm Z(s)}{\mathrm ds}
\right\rangle_F \notag \\
&\qquad+
\left\langle
\bm G_{\rm T}(\bm Z),
\frac{\mathrm d\bm Z(s)}{\mathrm ds}
\right\rangle_F
-
\left\langle
\bm G_{\rm T}(\bm Z),\bm\xi
\right\rangle_F
+
E_{\alpha}(s)
\Bigg]
\mathrm ds  \\
&=
\int_{0}^{1}
\left\langle
\bm G_{\rm T}(\bm Z(s))
-
\bm G_{\rm T}(\bm Z),
\frac{\mathrm d\bm Z(s)}{\mathrm ds}
\right\rangle_F
\mathrm ds  \notag \\
&\quad+
\int_{0}^{1}
\left\langle
\bm G_{\rm T}(\bm Z),
\frac{\mathrm d\bm Z(s)}{\mathrm ds}
-
\bm\xi
\right\rangle_F
\mathrm ds  \notag \\
&\quad+
\int_{0}^{1}
E_{\alpha}(s)\,
\mathrm ds.
\label{smoothness_decomposition}
\end{align}
For the first term in \eqref{smoothness_decomposition}, we have
\begin{align}
&\left|\left\langle\bm G_{\rm T}(\bm Z(s))-\bm G_{\rm T}(\bm Z),\frac{\mathrm{d}\bm Z(s)}{\mathrm{d}s}\right\rangle_F\right|\notag \\
&\leq\|\bm G_{\rm T}(\bm Z(s))-\bm G_{\rm T}(\bm Z)\|_F\left\| \frac{\mathrm{d}\bm Z(s)}{\mathrm{d}s} \right\|_F \label{first_integral_bound_1}\\
&\leq L_{\rm grad}\|\bm Z(s)-\bm Z\|_F\left\| \frac{\mathrm{d}\bm Z(s)}{\mathrm{d}s} \right\|_F \label{first_integral_bound_2} \\
&\leq sL_{\rm grad}\|\bm\xi\|_F^2.
\label{first_integral_bound}
\end{align}
Step \eqref{first_integral_bound_1} follows from the Cauchy-Schwarz inequality, step \eqref{first_integral_bound_2} follows from
Lemma \ref{lem:4}, and step \eqref{first_integral_bound} follows from \eqref{Zsd} and \eqref{ZsZ}. Furthermore, since $\bm G_{\rm T}(\bm Z)$ is the orthogonal projection of $\bm G(\bm Z)$ onto $T_{\bm Z}\mathcal M$, using \eqref{G_bound_value}, we have
\begin{align}
\|\bm G_{\rm T}(\bm Z)\|_F
\leq
\|\bm G(\bm Z)\|_F
\leq
\frac{2n(n+1)}
{\sqrt m\,\epsilon^2}.
\label{Gt_uniform_bound}
\end{align}
For the second term in \eqref{smoothness_decomposition}, combining \eqref{Gt_uniform_bound} with \eqref{Zsxi}, we obtain
\begin{align}
\left|
\left\langle
\bm G_{\rm T}(\bm Z),
\frac{\mathrm{d}\bm Z(s)}{\mathrm{d}s}-\bm\xi
\right\rangle_F
\right|
&\leq
\|\bm G_{\rm T}(\bm Z)\|_F
\left\|
\frac{\mathrm{d}\bm Z(s)}{\mathrm{d}s}-\bm\xi
\right\|_F\\
&\leq
\frac{2n(n+1)s}
{\sqrt m\,\epsilon^2}
\|\bm\xi\|_F^2.
\label{second_integral_bound}
\end{align}

Combining \eqref{Es_bound_1}, \eqref{first_integral_bound}, and \eqref{second_integral_bound}, we obtain
\begin{align}
&F\bigl(\operatorname{Geo}(\bm Z,\bm\xi)\bigr)
-F(\bm Z)
-
\left\langle
\bm G_{\rm T}(\bm Z),\bm\xi
\right\rangle_F
\notag\\
&\geq
-\int_{0}^{1}
s\left[
L_{\rm grad}
+
\frac{2n(n+1)}
{\sqrt{m}\,\epsilon^2}
\right]
\|\bm\xi\|_F^2
\,\mathrm ds
-
\int_{0}^{1}
C_{\alpha}\|\bm\xi\|_F
\,\mathrm ds
\notag\\
&=
-\frac{1}{2}
\left[
L_{\rm grad}
+
\frac{2n(n+1)}
{\sqrt{m}\,\epsilon^2}
\right]
\|\bm\xi\|_F^2
-
C_{\alpha}\|\bm\xi\|_F.
\end{align}
Therefore, setting $L_{\rm s}$ as in \eqref{Lsm_definition} yields \eqref{Riemannian_smoothness}. This completes the proof.

\footnotesize
\bibliographystyle{elsarticle-num}
\bibliography{myreference}

\end{document}